\documentclass{article}

\usepackage{microtype}
\usepackage{graphicx}
\usepackage{subcaption}
\usepackage{booktabs} 

\usepackage[accepted]{icml2026}

\usepackage[utf8]{inputenc} 
\usepackage[T1]{fontenc}    
\usepackage[colorlinks,citecolor=orange]{hyperref}       
\usepackage{url}            
\usepackage{booktabs}       
\usepackage{amsfonts}       
\usepackage{nicefrac}       
\usepackage{microtype}      
\usepackage{xcolor}         

\usepackage{tcolorbox}
\usepackage{graphicx}
\usepackage{wrapfig}
\usepackage{caption}       
\usepackage{subcaption}
\usepackage{amsmath}
\usepackage{amssymb}
\usepackage{mathtools}
\usepackage{amsthm}
\usepackage{multirow}
\usepackage{dsfont}
\usepackage{algorithm}
\usepackage{algorithmic}
\usepackage{setspace}
\usepackage{units}
\usepackage[capitalize,noabbrev]{cleveref}
\DeclareMathOperator*{\argmin}{arg\,min}

\tcbuselibrary{breakable, skins}
\definecolor{myorange}{RGB}{255, 140, 0}

\usepackage{aliascnt}

\theoremstyle{plain}
\newtheorem{theorem}{Theorem}[section]

\newaliascnt{proposition}{theorem}

\aliascntresetthe{proposition}

\newaliascnt{assumption}{theorem}
\newtheorem{assumption}[assumption]{Assumption}
\aliascntresetthe{assumption}

\newaliascnt{lemma}{theorem}

\aliascntresetthe{lemma}

\newaliascnt{corollary}{theorem}

\aliascntresetthe{corollary}

\theoremstyle{remark}
\newaliascnt{remark}{theorem}
\newtheorem{remark}[remark]{Remark}
\aliascntresetthe{remark}

\theoremstyle{definition}

\crefname{theorem}{Theorem}{Theorems}
\Crefname{theorem}{Theorem}{Theorems}
\crefname{proposition}{Proposition}{Propositions}
\Crefname{proposition}{Proposition}{Propositions}
\crefname{assumption}{Assumption}{Assumptions}
\Crefname{assumption}{Assumption}{Assumptions}
\crefname{lemma}{Lemma}{Lemmas}
\Crefname{lemma}{Lemma}{Lemmas}
\crefname{remark}{Remark}{Remarks}
\Crefname{remark}{Remark}{Remarks}
\crefname{corollary}{Corollary}{Corollaries}
\Crefname{corollary}{Corollary}{Corollaries}
\crefname{definition}{Definition}{Definitions}
\Crefname{definition}{Definition}{Definitions}

\hypersetup{urlcolor=orange,citecolor=orange,linkcolor=orange}

\usepackage[textsize=tiny]{todonotes}

\icmltitlerunning{Q-conditioned Hybrid Attention-mamba Transformer for Offline Goal-conditioned RL}

\begin{document}

\twocolumn[
  \icmltitle{QHyer: Q-conditioned Hybrid Attention-mamba Transformer \\for Offline Goal-conditioned RL}
  
  \icmlsetsymbol{equal}{*}
  \begin{icmlauthorlist}
    \icmlauthor{Xing Lei}{xjtu}
    \icmlauthor{Jincheng Wang}{ucl}
    \icmlauthor{Xuetao Zhang}{xjtu}
    \icmlauthor{Donglin Wang}{westlake}
  \end{icmlauthorlist}
  
  \icmlaffiliation{xjtu}{National Key Laboratory of Human-Machine Hybrid Augmented Intelligence and Institute of Artificial Intelligence and Robotics, Xi'an Jiaotong University}
  \icmlaffiliation{ucl}{Computer Science, University College London}
  \icmlaffiliation{westlake}{School of Engineering, Westlake University}
  
  \icmlcorrespondingauthor{Xing Lei}{leixing@stu.xjtu.edu.cn}
  
  \icmlkeywords{Machine Learning, ICML}
  
  \vskip 0.3in
]



\printAffiliationsAndNotice{}  

\begin{abstract}

 Offline goal-conditioned RL (GCRL) learns goal-reaching policies from static datasets, but real-world datasets are often partially observable and history-dependent, exhibiting a mix of Markovian and non-Markovian that violate standard RL assumptions. History-aware sequence models such as Decision Transformer (DT) are a natural fit for long-term dependency modeling, yet pure attention is inefficient and brittle when handling local Markovian structure and long-range context simultaneously. Although recent hybrid architectures (e.g., LSDT) introduce local extractors to improve local dependencies modeling, the fixed-window extraction cannot adapt its effective memory to varying dependency lengths in temporally heterogeneous settings, often truncating long-range context rather than compressing its content adaptively. Moreover, sequential offline GCRL faces a key bottleneck: under sparse rewards, return-to-go (RTG) becomes non-discriminative across sub-trajectories, providing little guidance signal for stitching goal-reaching behaviors from diverse demonstrations. To address these, we propose \textbf{QHyer}, which replaces RTG with a flow-parameterized, state-conditioned goal-reaching Q-estimator to support stitching across demonstrations, and introduces a gated Hybrid Attention-Mamba backbone that performs content-adaptive history compression while preserving local dynamics. Extensive experiments demonstrate that \textbf{QHyer} achieves state-of-the-art performance on both non-Markovian and Markovian datasets, validating its effectiveness for diverse scenarios.
\end{abstract}

\section{Introduction}
Offline Goal-Conditioned Reinforcement Learning (Offline GCRL) aims to learn goal-reaching policies from static datasets, offering a promising paradigm for real-world applications where online interaction is costly or infeasible~\citep{levine2020offline,liu2022goal}. While most existing offline GCRL datasets are collected by Markovian behavior policies, an increasing number of practical datasets exhibit non-Markovian properties where actions depend on historical context rather than current observations alone~\citep{ogbench_park2025}.
This properties poses fundamental challenges for existing value-based methods~\citep{kostrikov2022offline,park2023hiql,zhou2025flattening,ahn2025option,giammarino2025physics,giammarino2026goal} that rely on Bellman backup. 
In contrast, sequence modeling approaches like Decision Transformer (DT)~\citep{chen2021decision} naturally solve non-Markovian problem by conditioning on return-to-go (RTG), states, and actions, leveraging self-attention to capture long-range dependencies from extended historical sequences. 

Although DT naturally handles non-Markovian patterns through history 
conditioning, it exhibits two fundamental limitations when applied to 
offline GCRL. On one hand, RTG is \emph{trajectory-dependent} rather 
than \emph{state-dependent}, assigning values based on trajectory 
success rather than state quality, which provides \emph{no discriminative 
information} for distinguishing promising states within failed 
trajectories. This is a critical requirement for trajectory stitching 
under goal-conditioned sparse rewards. On the other hand, pure attention struggles to efficiently balance global goal-directed reasoning with fine-grained local dynamics modeling. Recent hybrid architectures like LSDT~\citep{wang2025longshort} and DMixer~\citep{zheng2025decision} incorporate convolution alongside attention to capture local patterns. However, non-Markovian offline GCRL data exhibits variable-length temporal dependencies that change dynamically across states and trajectory segments. Convolution with fixed receptive fields either wastes model capacity on irrelevant context when dependencies are short, or truncates critical information when dependencies are long, unable to adapt to this inherent variability.

We propose \textbf{QHyer} (\textbf{Q}-conditioned \textbf{Hy}brid Attention-Mamba Transform\textbf{er}), the first sequence modeling framework to \emph{jointly} resolve both limitations for offline GCRL. Our key observation is that these two limitations are \emph{coupled}. Effective trajectory stitching under sparse rewards requires both a state-dependent value signal \emph{and} an architecture whose effective memory matches the temporal structure of the underlying behavior policy. Addressing either alone is insufficient, because Q-conditioning layered on a fixed-window hybrid retains the convolutional pathology on non-Markovian play data, while a better temporal architecture with RTG retains the trajectory-dependence bottleneck. Concretely, we (i) replace trajectory-dependent RTG with state-dependent Q-values estimated via Normalizing Flows~\citep{ghugare2025normalizing}, chosen specifically for their exact, properly normalized log-density, a property CVAEs, contrastive critics, and diffusion likelihoods cannot provide (\cref{sec:obstacle1}), and (ii) design a gated Hybrid Attention-Mamba~\citep{gu2024mamba} backbone where Mamba's input-dependent selective state-space dynamics provide content-adaptive history compression, adjusting effective memory per-token rather than through a hand-tuned receptive field. Unlike prior value-guided Decision Transformers~\citep{yamagata2023q,wang2024critic,hu2024qvalue,zhuang2024reinformer,zheng2025valueguided} that \emph{retain} RTG and attach Q-values as auxiliary losses or regularizers, \textbf{QHyer} \emph{eliminates} RTG and uses Q-values directly as conditioning tokens. Under goal-conditioned sparse rewards, where RTG collapses to a near-binary signal, this distinction is decisive (\cref{fig:rtg-vs-q,fig:qvalue_ablation}).

Our evaluation on OGBench~\citep{ogbench_park2025} and D4RL~\citep{fu2020d4rl} demonstrates that \textbf{QHyer} achieves state-of-the-art performance across both non-Markovian datasets (OGBench \texttt{play} and D4RL \texttt{Maze}) and Markovian datasets (OGBench \texttt{noisy}), validating the effectiveness of NFs-based Q-value conditioning and the Hybrid Attention-Mamba architecture for offline GCRL.
\section{Background}
\label{sec:background}

\subsection{Offline GCRL}
Offline GCRL is defined over a Markov Decision Process (MDP) $(\mathcal{S}, \mathcal{A}, \mathcal{G}, p, \gamma)$, where $\mathcal{S}$ denotes the state space, $\mathcal{A}$ the action space, $\mathcal{G}$ the goal space, $p(s'|s,a)$ the transition dynamics, and $\gamma \in [0,1)$ the discount factor. Following prior work~\citep{park2023hiql,ogbench_park2025}, we assume $\mathcal{G} \equiv \mathcal{S}$. The agent has access only to a static dataset $\mathcal{D} = \{\tau_i\}_{i=1}^N$ collected by behavioral policies $\beta$, where each trajectory takes the form $\tau^{(i)}=\{(s_t,a_t,s_{t+1})\}_{t=0}^{T^{(i)}-1}$. The objective is to learn a goal-conditioned policy $\pi(a|s,g)$ that maximizes the expected cumulative return $J(\pi) = \mathbb{E}_{\tau \sim p^\pi, g \sim p(g)}[\sum_{t=0}^{T} \gamma^t r(s_t, g)]$ without interaction in the environment. To obtain goal-conditioned supervision, we employ hindsight experience replay (HER)~\citep{andrychowicz2017hindsight}, which samples goals from future achieved states along the same trajectory.

In standard GCRL with sparse rewards, the reward function is defined as $r(s, g) = \mathds{1}[s = g]$, where the agent receives $1$ only upon reaching the goal and $0$ otherwise. Consequently, most state-action pairs yield no learning signal for states far from the goal. To address this, following prior work~\citep{eysenbach2020c,eysenbach2022contrastive}, a probabilistic reward can be defined as $r(s, a, g) \triangleq (1-\gamma) \cdot p(g | s')$, where $s'$ is the next state. Under this formulation, the goal-conditioned Q-function corresponds to the discounted state occupancy measure~\citep{eysenbach2022contrastive,bortkiewicz2025accelerating}:
\begin{align}\label{eq:q_occupancy}
Q^\pi(s, a, g) &= p^\pi_+(s^+ = g | s_0 = s, a_0 = a) \nonumber \\
&\triangleq (1 - \gamma) \sum_{k=0}^{\infty} \gamma^k p^\pi(s_{k+1} = g | s_0 = s, a_0 = a),
\end{align}
where $s^+ \triangleq s_K$ for $K \sim \mathrm{Geom}(1-\gamma)$ denotes a future state sampled at a geometrically distributed time step. Unlike sparse rewards that are directly observed, this formulation requires learning a density model to estimate Q-values.
\subsection{Normalizing Flows for Q-Value Estimation}
\label{sec:nf_prelim}
Normalizing Flows (NFs) \citep{zhai2024normalizing} are invertible generative models that learn a bijective mapping $f_\theta: \mathbb{R}^d \rightarrow \mathbb{R}^d$ from a complex data distribution to a simple prior $p_0$ (typically standard Gaussian), with density computed exactly via the change of variables formula:
\begin{equation}
    p_\theta(x) = p_0(f_\theta(x)) \left|\det\frac{\partial f_\theta(x)}{\partial x}\right|.
    \label{eq:nf_density}
\end{equation}

Following~\citet{ghugare2025normalizing}, NFs can be constructed using coupling layers~\citep{dinh2017density}. For the $t$-th block with input $x^t$ and condition $y$:
\begin{align}
    x^t_1, x^t_2 &= \mathrm{split}(x^t), \nonumber \\
    \tilde{x}^t_2 &= \bigl(x^t_2 + a^t_\theta(x^t_1, y)\bigr) \times \exp\bigl(-s^t_\theta(x^t_1, y)\bigr), \nonumber \\
    \tilde{x}^t &= \mathrm{concat}(x^t_1, \tilde{x}^t_2),
\label{eq:coupling}
\end{align}
where $\mathrm{split}(\cdot)$ partitions the input into two halves along the feature dimension, $a_\theta^t$ and $s_\theta^t$ are neural networks that output translation and scale parameters respectively.

In Offline GCRL, NFs can directly estimate it by modeling $p_\theta(g|s,a)$ \citep{ghugare2025normalizing}. The conditioning information $(s,a)$ is encoded by a state-action encoder, and the behavior Monte Carlo (MC) Q-value is obtained as:
\begin{equation}
    Q^\beta_\theta(s,a,g) = \log p_0(f_\theta(g;z)) + \log\left|\det\frac{\partial f_\theta(g;z)}{\partial g}\right|,
    \label{eq:q_nf}
\end{equation}
where $f_\theta(\cdot;z)$ is the conditional NFs mapping goals to the latent space with $z$ being the encoded state-action representation. Note that $Q^{\beta}_{\theta}(s, a, g) = \log p_\theta(g|s, a)$ represents the log-probability, which serves as an unnormalized score for conditioning. During inference, we use $\exp(Q^{\beta}_{\theta})$ when probability interpretation is needed. 

In practice, the NFs is trained via maximum likelihood on hindsight-relabeled transitions:
\begin{equation}
    \mathcal{L}_{\text{NFs}} = -\mathbb{E}_{(s_t, a_t, g) \sim \mathcal{D}}\left[\log p_\theta(g | s_t, a_t)\right].
    \label{eq:nf_loss}
\end{equation}
\subsection{Sequence Modeling for Decision Making}
\label{sec:seq_prelim}
Decision Transformer (DT) \citep{chen2021decision} models decision-making from offline datasets as a sequence modeling problem. Unlike traditional RL methods that estimate Q-functions or compute policy gradients, DT generates an action $a_t$ at timestep $t$ conditioned on the context of the previous $K$ timesteps along with the current state and return-to-go (RTG). The input sequence is formulated as $\tau = (\hat{R}_{t-K+1}, s_{t-K+1}, a_{t-K+1}, \ldots, \hat{R}_{t-1}, s_{t-1}, a_{t-1}, \hat{R}_t, s_t)$, where RTG $\hat{R}_t = \sum_{t'=t}^{T} r_{t'}$ is defined as the sum of rewards from the current step to the end of the trajectory and $K$ is the context length. For each timestep, three tokens (RTG, state, and action) are embedded and fed into the model. DT employs a causal Transformer that leverages self-attention layers to capture long-range dependencies.

Decision Mamba (DMamba)~\citep{ota2024decision} integrates the Mamba~\citep{gu2024mamba} architecture into the DT framework by replacing self-attention with the Mamba block. The DMamba block first applies a one-dimensional causal convolution to extract local features:
\begin{equation}
    x' = \text{SiLU}(\text{Conv1d}(x)),
    \label{eq:mamba_conv}
\end{equation}
where Conv1d operates with a local kernel over adjacent positions. The transformed sequence is then processed by the discrete-time selective state space model (SSM):
\begin{align}
    h_t &= \bar{A}h_{t-1} + \bar{B}x'_t, \quad y_t = Ch_t,
    \label{eq:ssm_discrete}
\end{align}
where $h_t$ is the hidden state and $y_t$ is the output. The key innovation of Mamba is the input-dependent selective mechanism:
\begin{align}
    B &= \text{Linear}_B(x'), \quad C = \text{Linear}_C(x'), \nonumber \\
    \Delta &= \text{softplus}(\text{Linear}_\Delta(x')),
    \label{eq:mamba_selective}
\end{align}
where $\Delta$ controls the discretization step size.
\section{QHyer: Unlocking Sequence Modeling for Offline GCRL}
\label{sec:qhyer}
While sequence modeling naturally addresses the non-Markovian challenge, DT-based methods exhibits critical limitations when applied to GCRL. We propose \textbf{QHyer}, which introduces NFs-based Q-value conditioning (\cref{sec:obstacle1}) and a Hybrid Attention-Mamba architecture (\cref{sec:obstacle2}) to overcome these limitations. The overall architecture is illustrated in \cref{fig:QHyer_overall}.
\begin{figure}[h]
\vspace{-3pt}
\centering
\centerline{\includegraphics[width=0.5\textwidth]{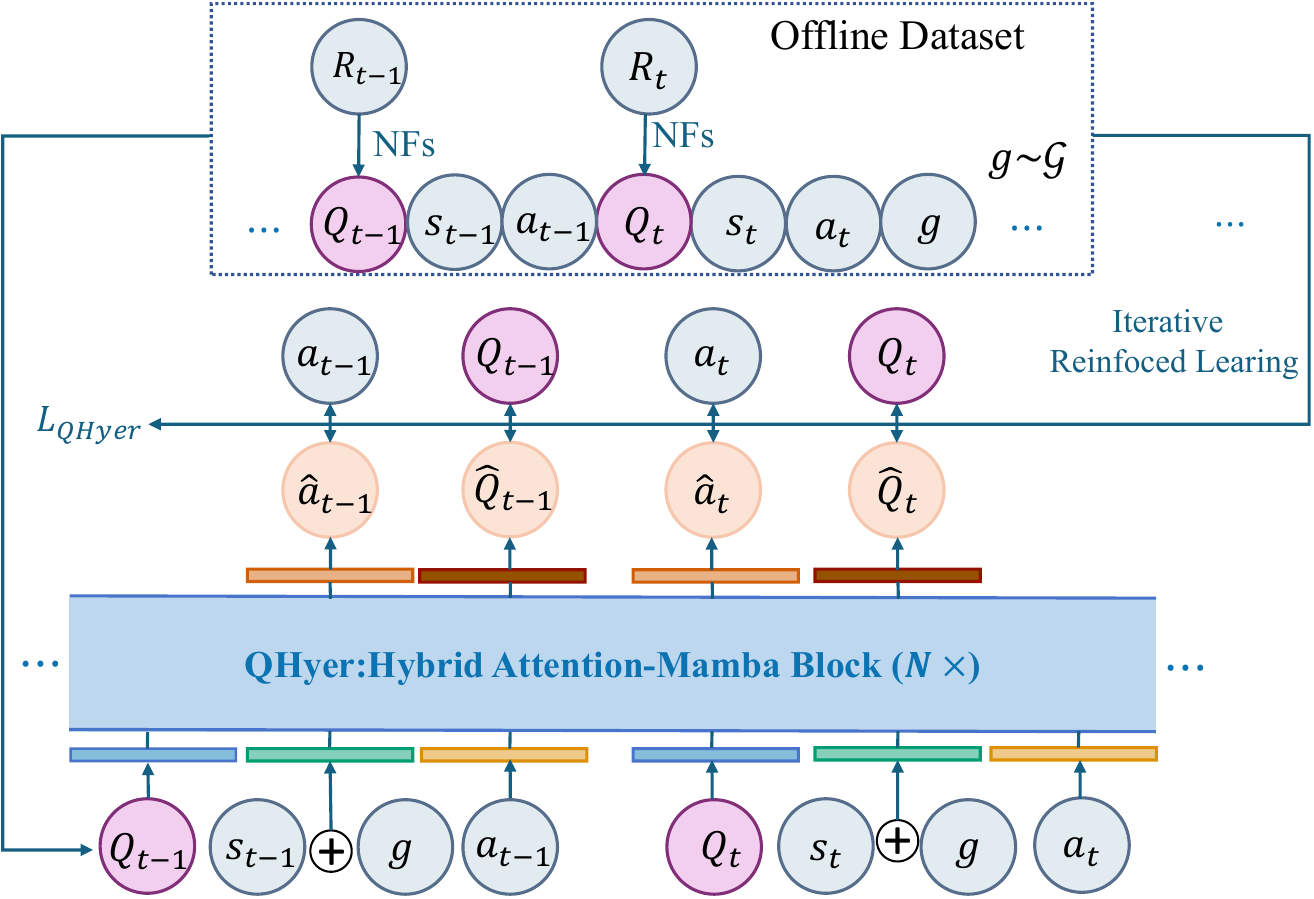}}
\caption{\textbf{Overview of QHyer architecture.} The framework consists of three main components: (1) a NFs Q-value estimator that replaces RTG conditioning, (2) a Hybrid Attention-Mamba Block, (3) concatenated state-goal tokenization for effective goal information propagation and (4) reinforced learning with expectile regression.}
\label{fig:QHyer_overall}
\vspace{-0.3cm}
\end{figure}
\subsection{Limitation 1: RTG Fails Under Sparse Rewards}
\label{sec:obstacle1}
In standard DT-based methods, the return-to-go (RTG) serves as the conditioning signal that guides action generation. However, RTG is fundamentally inadequate for Offline GCRL with sparse binary rewards.

\textbf{The Root Cause: Trajectory-Dependence Prevents Stitching.} The fundamental limitation of RTG lies in its \emph{trajectory-dependence}: RTG answers ``did this trajectory succeed?'' rather than ``how valuable is this state for reaching the goal?'' Consider a state $s$ that appears on both a successful trajectory (RTG=1) and a failed one (RTG=0). RTG assigns contradictory values to the \emph{same state} based solely on trajectory outcome, making cross-trajectory comparison impossible. This directly prevents trajectory stitching because composing segments from different trajectories requires a \emph{trajectory-agnostic} value metric that RTG fundamentally cannot provide. As shown in \cref{fig:rtg-vs-q} (a) (b), successful and failed trajectories receive uniformly different RTG values regardless of state quality, with only 25\% of state-action pairs receiving discriminative signals.

\begin{figure*}[t]
    \vspace{-0.2cm}
    \centering
    \begin{minipage}{0.245\linewidth}
		\centerline{\includegraphics[width=0.6\textwidth]{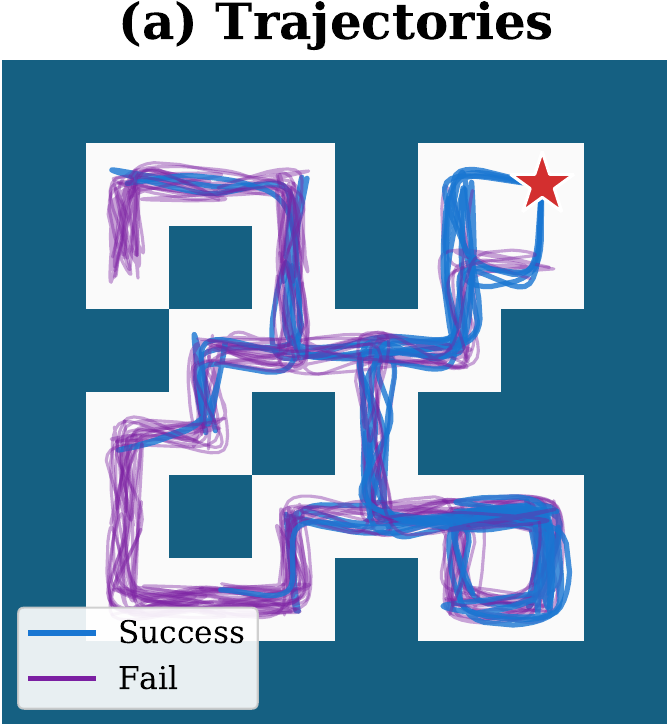}}
	\end{minipage}
    \begin{minipage}{0.245\linewidth}
		\centerline{\includegraphics[width=0.8\textwidth]{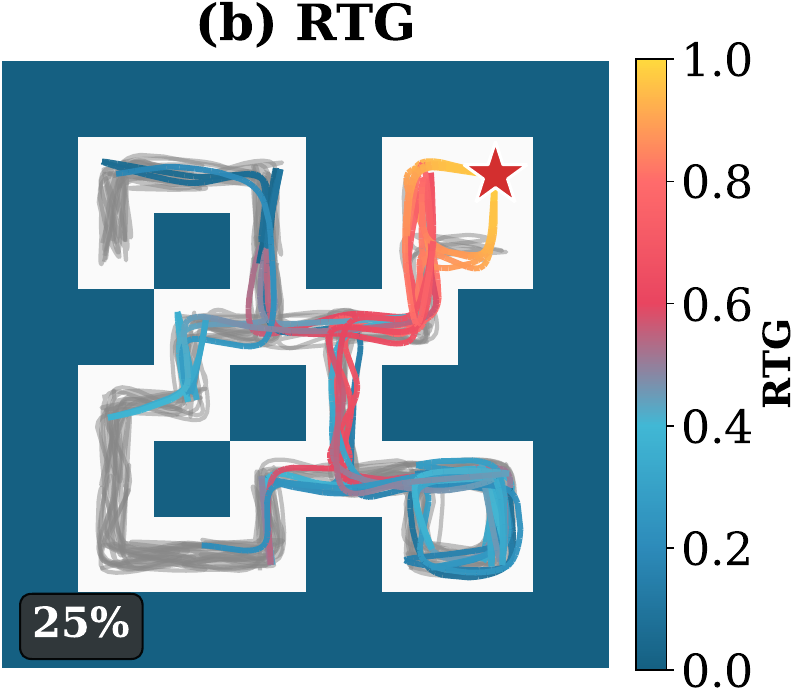}}
	\end{minipage}
    \begin{minipage}{0.245\linewidth}
		\centerline{\includegraphics[width=0.8\textwidth]{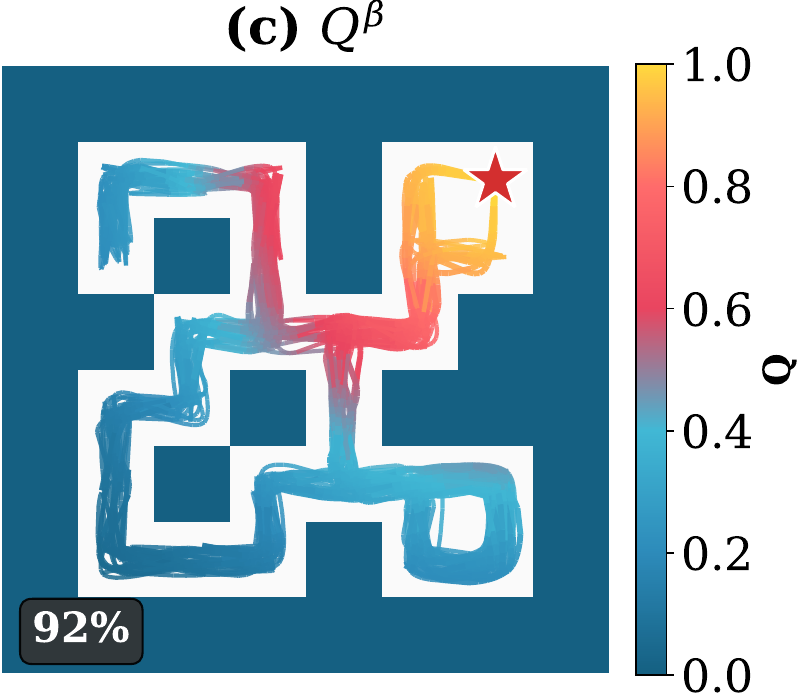}}
	\end{minipage}
    \begin{minipage}{0.248\linewidth}
		\centerline{\includegraphics[width=0.6\textwidth]{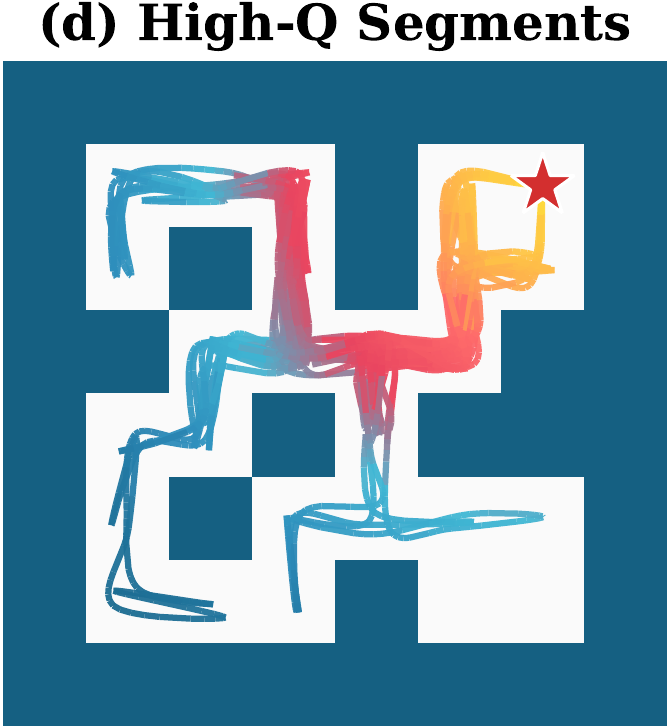}}
	\end{minipage}
    \caption{
    \textbf{RTG vs.\ NFs-based Q-value conditioning in D4RL non-Markovian AntMaze-medium dataset.} (a)~Trajectories: successful (blue) and failed (purple). (b)~RTG conditioning: successful trajectories show color gradient while failed trajectories are uniformly gray (no signal), yielding poor coverage. (c)~NFs-based $Q^\beta$ conditioning: all trajectories colored by Q-value, achieving significantly better coverage. (d)~High-Q segments: trajectory portions where $Q^\beta$ exceeds a threshold naturally form paths toward the goal, enabling trajectory stitching even from failed demonstrations. Coverage percentages indicate the fraction of state-action pairs with discriminative signals (variance > 0.01 across trajectory segments).
    }
    \vspace{-0.3cm}
    \label{fig:rtg-vs-q}
\end{figure*}
\textbf{Our Key Insight: From Trajectory-Dependence to State-Dependence.} The Q-function $Q^\beta(s,a,g) = p^\beta_+(g|s,a)$ represents the probability of reaching goal $g$ from state-action pair $(s,a)$, measured \emph{independently} of which trajectory that pair came from. This \emph{state-dependence} enables a fundamentally new capability: identifying high-value segments from failed trajectories (they have high $Q^\beta$ despite low RTG) and composing them toward goals. \cref{fig:rtg-vs-q} (c) confirms this prediction, showing that Q-value conditioning achieves 92\% coverage compared to RTG's 25\%. \cref{fig:rtg-vs-q} (d) further illustrates that high-Q segments naturally form paths toward goals even when extracted from failed demonstrations.

\textbf{Why MC Estimation Instead of TD Learning.} Having established the need for Q-value conditioning, we must choose how to estimate Q-values. Many standard offline RL methods \citep{fujimoto2021minimalist,kostrikov2022offline} are built upon temporal difference (TD) learning. While TD learning can learn optimal value functions and possesses stitching capabilities, its reliance on bootstrapping leads to compounding errors that hinder the acquisition of optimal policies, especially in long-horizon tasks \citep{myers2025offline,park2026transitive}. In contrast, MC learning directly estimates the cumulative reward for reaching a goal. By integrating it with a maximum Q-expectile regression loss proposed in our later analysis, we theoretically demonstrate that our method can also converge to an optimal stitched policy. In empirical evaluations, recent MC-based contrastive RL approaches \citep{eysenbach2022contrastive,myers2025offline} have been shown to consistently and significantly outperform TD-based methods on long-horizon GCRL tasks.

\textbf{Why NFs for MC Q-Estimation.} Given that MC estimation is preferable, we must choose how to model the Q-value density $p^\beta_+(g\mid s,a)$. Our framework places one structural requirement on this density model. It must produce an \emph{exact, properly normalized} log-density. The expectile target in \cref{eq:expectile_q_loss} is defined on $\log p_\theta(g\mid s,a)$ directly, and the transformer consumes Q-tokens that span \emph{multiple goals} within one context window (\cref{sec:tokenization}), so goal-independent normalization is necessary for the learned Q-to-action pattern to transfer across goals. This requirement rules out the otherwise reasonable alternatives.

Conditional VAEs~\citep{sohn2015learning} produce only the ELBO, a structural lower bound that cannot be closed by increasing capacity, and which distorts the Q-landscape in a goal-dependent way. Contrastive RL~\citep{eysenbach2022contrastive} trains a binary cross-entropy classifier whose Bayes-optimal output is the log density ratio $\log\frac{p(g\mid s,a)}{p(g)}$. While the goal-dependent partition $p(g)$ cancels when selecting actions at a \emph{fixed} goal, it introduces goal-dependent offsets in the Q-token sequence our transformer reads across \emph{multiple} goals, which degrades cross-goal conditioning. Diffusion models~\citep{ho2020denoising} and continuous flow-matching objectives~\citep{lipman2023flow} can reach high sample quality, but their per-sample likelihood requires solving a probability-flow ODE with a Hutchinson trace estimator~\citep{grathwohl2018ffjord}, injecting variance into precisely the signal that expectile regression must fit.

Coupling-based NFs~\citep{dinh2017density} uniquely meet the requirement. The triangular Jacobian makes $\log p_\theta(g\mid s,a)$ exactly and cheaply computable in closed form, and coupling architectures are universal diffeomorphism approximators~\citep{teshima2020coupling}, so no structural gap remains. \cref{fig:gridworld} (\cref{ap:evaluate nf}) empirically confirms that NFs attain the lowest estimation error against the analytic future-state density among CVAE, CRL and MC C-learning. Because accurate, normalized Q-values are the bottleneck for trajectory stitching under sparse rewards (\cref{fig:qvalue_ablation}), this is the property we optimize for.

\textbf{Expectile Regression for In-Distribution Optimal Q-Value Prediction.} Given accurate $Q^\beta$ estimates from NFs, we still need to extract optimal behaviors from suboptimal data. The expectile regression loss \citep{kostrikov2022offline,wu2023elastic,zhuang2024reinformer} asymmetrically weights prediction errors:
\begin{equation}
    L_\tau^2(u) = |\tau - \mathds{1}(u < 0)| \cdot u^2,
    \label{eq:expectile_loss_base}
\end{equation}
where $\tau \in (0.5, 1)$ controls the asymmetry. When $\tau > 0.5$, the loss penalizes underestimation more heavily, causing the learned value to concentrate on the upper portion of the empirical distribution. Applying this to Q-value prediction, we define:
\begin{equation}
    \mathcal{L}_Q = \mathbb{E}_{(s_t,a_t,g)\sim\mathcal{D}}\left[L^2_\tau\left(Q^{\beta}_{\theta}(s_t, a_t, g) - \hat{Q}_\phi(s_t, g)\right)\right],
    \label{eq:expectile_q_loss}
\end{equation}
where $\hat{Q}_\phi(s_t, g)$ is the Q-value predicted by the Hybrid Attention-Mamba transformer with parameters $\phi$, and $Q^\beta_\theta(s_t, a_t, g)$ is the target from the NFs-based critic (\cref{eq:q_nf}). Our theoretical analysis (\cref{sec:theory}) demonstrates that this enables our sequential model, \textbf{Qhyer}, to predict Q-values that approach the in-distribution maximum. These predictions correspond to the high-Q segments shown in \cref{fig:rtg-vs-q} (d), which are essential for trajectory stitching.

\subsection{Limitation 2: Temporal Modeling Requires Content-Adaptive History Compression}
\label{sec:obstacle2}
Beyond the conditioning signal, effective sequence modeling for Offline GCRL demands architectures that can capture heterogeneous temporal dependencies inherent in Offline GCRL datasets.

\textbf{Why Offline GCRL Data Exhibits Variable-Length Historical Dependencies.}
Offline GCRL datasets exhibit different temporal structures depending on behavior policy properties. As documented in OGBench~\citep{ogbench_park2025}, the manipulation suite provides two representative dataset types: \texttt{play} datasets collected by non-Markovian expert policies with temporally correlated noise where the behavior policy follows $\beta(a_t | s_t, h_{<t})$, and \texttt{noisy} datasets collected by Markovian expert policies with uncorrelated Gaussian noise where $\beta(a_t | s_t)$ depends only on the current state. The \texttt{play} data demands extended memory for action coherence, while \texttt{noisy} data requires only short-term local information. A principled solution must adapt to both properties without manual tuning.

\textbf{Why Convolution Cannot Address Variable-Length Dependencies.}
To address the inherent tension of datasets exhibiting the two aforementioned properties, both LSDT~\citep{wang2025longshort} and DMixer~\citep{zheng2025decision} incorporate attention and convolution as parallel branches.
Convolution-based local modeling computes features through causal convolution with fixed-size kernels:
\begin{equation}
    y_t = \text{Conv1d}(\mathbf{x})_t = \sum_{j=0}^{k-1} w_j \cdot x_{t-j},
    \label{eq:conv_fixed}
\end{equation}
where $k$ is the fixed kernel size and $w_j$ are input-independent weights. When convolution serves as the \emph{final output} of a branch, this creates three fundamental limitations. First, convolution imposes a fixed receptive field set by the chosen kernel (and any dilation/stacking), making the effective context length a hand-tuned architectural prior that is sensitive to hyperparameters and often fails to transfer across datasets with different temporal dependencies. Second, in the first layer, convolution has a fixed receptive field and thus a fixed effective memory that cannot adapt to varying dependency lengths within or across datasets. Especially, on non-Markovian trajectories where relative cues lie beyond this window, the local branch becomes weakly informative and is often down-weighted by fusion.

\textbf{Why Mamba Enables Content-Adaptive History Compression.}
We address the fixed-window bottleneck of a convolutional short-term branch by adopting a Mamba-style selective SSM (DMamba) module~\citep{ota2024decision}.
A DMamba block combines (i) a lightweight causal convolution that mixes nearby tokens and produces local features $x'_t$ (and gating signals), with (ii) a selective state-space update (\cref{eq:ssm_discrete,eq:mamba_selective}) that propagates a recurrent state across the entire prefix. Importantly, the effective memory is not determined by the convolutional kernel, but by the input-dependent SSM dynamics (via the selective discretization), which enables smooth, learned forgetting/retention over history. As a result, compared to using convolution as the branch output, DMamba provides a content-adaptive mechanism to compress long-range context into a compact state, reducing sensitivity to hand-tuned receptive fields and improving robustness on non-Markovian segments where disambiguating cues may lie beyond any fixed local window.

\begin{figure*}[t]
\centering
\includegraphics[width=0.8\textwidth]{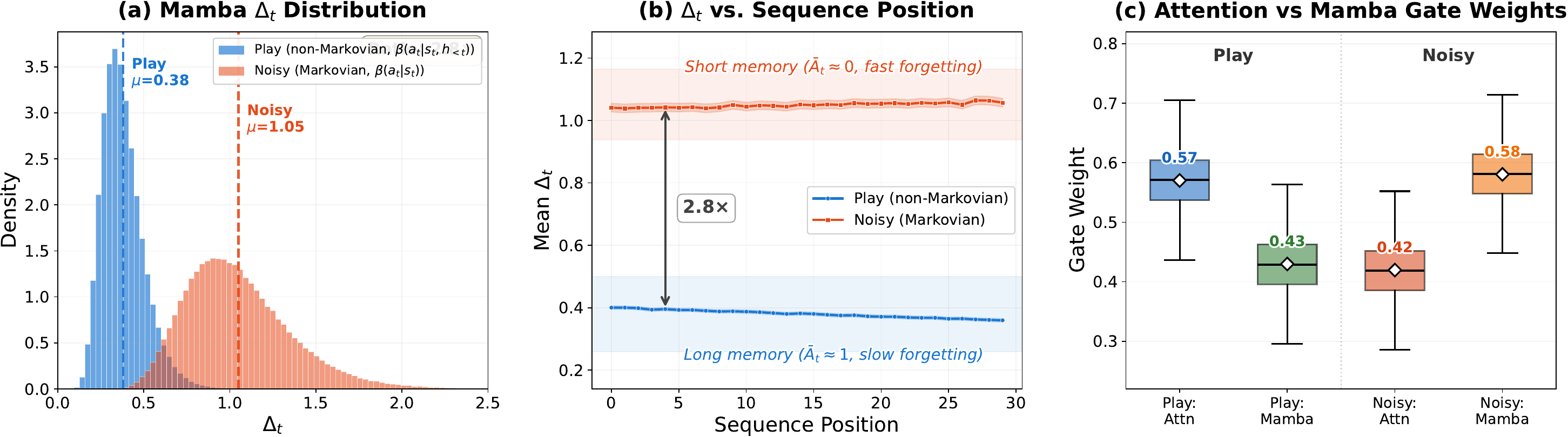}
\caption{\textbf{Content-adaptive $\Delta_t$ on \texttt{cube-single}.} \textbf{Left:} $\Delta_t$ distribution. \textbf{Center:} mean $\Delta_t$ across sequence positions. \textbf{Right:} learned attention/Mamba gate weights. Statistics over 50 batches of batch size 256.}
\label{fig:dt_viz}
\end{figure*}
To make the adaptive history modeling explicit, we expand the SSM recurrence (\cref{eq:ssm_discrete}) to express the output at timestep $t$:
\begin{equation}
    y_t = \sum_{i=0}^{t} C_t \left(\prod_{j=i+1}^{t} \bar{A}_j\right) \bar{B}_i x'_i,
    \label{eq:ssm_expanded}
\end{equation}
where $x'_i$ is the convolution-extracted feature at step $i$. The influence of historical input $x'_i$ on current output $y_t$ is governed by the cumulative decay $\prod_{j=i+1}^{t} \bar{A}_j$. Critically, through the selective mechanism (\cref{eq:mamba_selective}), the discretization step $\Delta$ is input-dependent:
\begin{equation}
    \bar{A}_t = \exp(\Delta_t \cdot A), \quad \text{where} \quad \Delta_t = \text{softplus}(\text{Linear}_\Delta(x'_t)),
    \label{eq:adaptive_decay}
\end{equation}
and $A < 0$ is a negative real value following \citet{gu2024mamba}. This creates \textbf{content-adaptive} effective memory: when $x'_t$ yields small $\Delta_t$, the decay $\bar{A}_t = \exp(\Delta_t \cdot A) \approx 1$ preserves long-range history suitable for \texttt{play} data; when $x'_t$ yields large $\Delta_t$, $\bar{A}_t \approx 0$ retains only local context appropriate for \texttt{noisy} data. In contrast, convolution imposes a fixed influence $w_j$ for $j < k$ and zero beyond, resulting in hard, input-independent truncation. The key distinction is that Mamba provides smooth, learned decay while DynamicConv enforces hard, fixed truncation.

\begin{figure}[h]
    \vspace{-0.3cm}
    \centering
    \centerline{\includegraphics[width=0.4\textwidth]{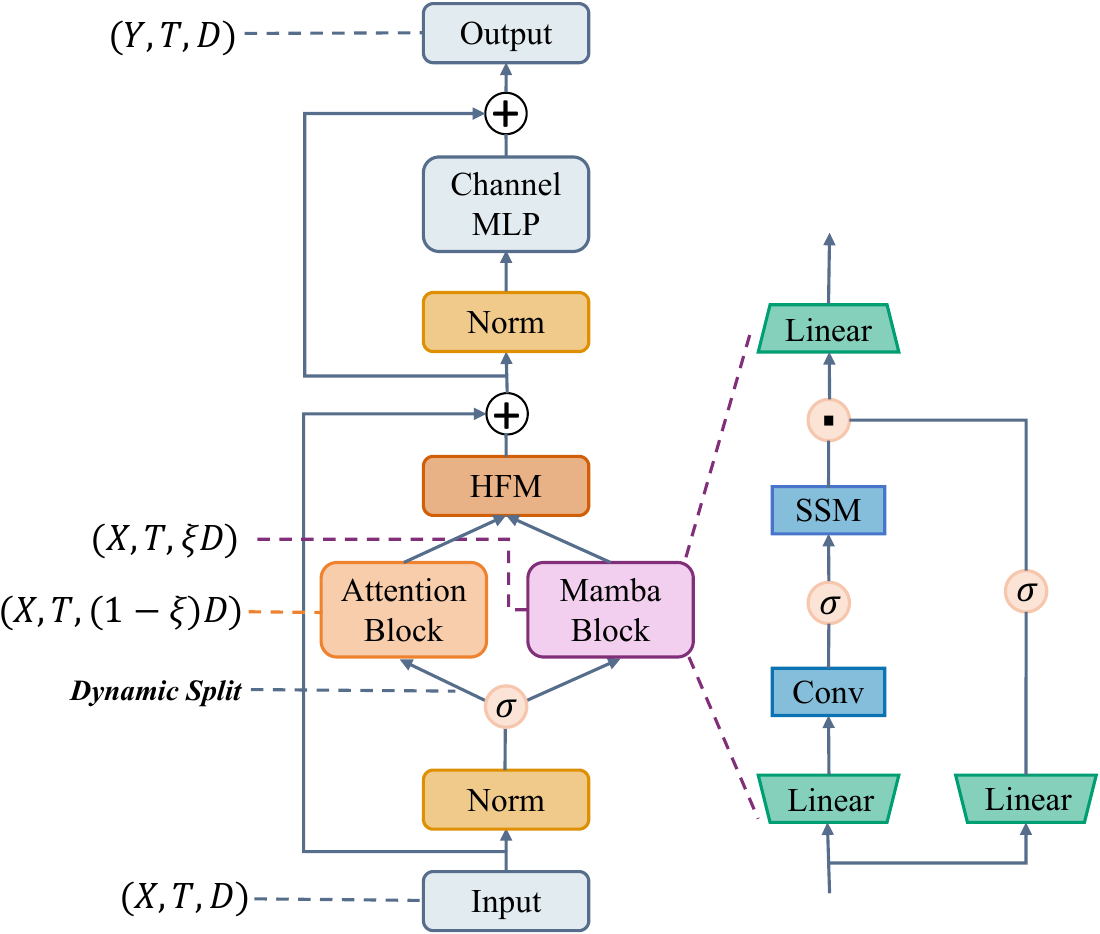}}
    \caption{
     \textbf{Hybrid Attention-Mamba Block.}
    }
    \vspace{-0.3cm}
    \label{fig:hybrid block}
\end{figure}
\textbf{Hybrid Architecture with Attention-Mamba.}
As illustrated in \cref{fig:hybrid block}, we design a Hybrid Attention-Mamba architecture with two parallel branches: attention for global goal-directed planning and Mamba for temporal dynamics modeling. The outputs are fused through a learnable gating mechanism that computes a scalar weight $\alpha = \sigma(\mathbf{w}^T \mathbf{x} + b)$ to combine branch outputs: $\mathbf{y} = \alpha \cdot \mathbf{y}_{\text{attn}} + (1-\alpha) \cdot \mathbf{y}_{\text{mamba}}$. This enables complementary specialization across both \texttt{play} and \texttt{noisy} datasets.

\cref{fig:dt_viz} visualizes this adaptation on \texttt{cube-single}. On \texttt{play}, smaller $\Delta_t$ preserves an effective memory of about $12$ steps and the gate favors attention. On \texttt{noisy}, larger $\Delta_t$ contracts memory to about $3$ steps and the gate favors Mamba. The essential reason is that $\Delta_t = \text{softplus}(\text{Linear}_\Delta(x'_t))$ is input-dependent, so effective memory tracks the local temporal correlation of the data, which convolution's input-independent receptive field cannot do.
\subsection{Concatenated State-Goal Tokenization Strategy}
\label{sec:tokenization}
We represent each state-goal pair as a concatenated token $[s_t;g]$ rather than separate tokens. Combined with NFs-based Q-value conditioning, the input sequence becomes: $\tau = (Q_1, [s_1;g], a_1, Q_2, [s_2;g], a_2, \ldots, Q_T, [s_T;g], a_T),$
where $Q_t = \log p_\theta(g|s_t, a_t)$ is the NFs-estimated Q-value. This design ensures goal information is directly available at each decision point without increasing sequence length from $3T$ to $4T$, avoiding quadratic computational overhead in attention. Detailed visual explanation of this tokenization strategy is provided in \cref{ap:sg_concat}.
\subsection{Training and Inference}
\label{sec:training}
\textbf{Training.} We train \textbf{QHyer} end-to-end with three losses:
\begin{equation}
    \mathcal{L}_{\text{QHyer}} = \lambda_{\text{critic}}\mathcal{L}_{\text{NFs}} + \lambda_{\text{BC}}\mathcal{L}_{\text{BC}} + \lambda_Q\mathcal{L}_Q,
    \label{eq:total_loss}
\end{equation}
where $\mathcal{L}_{\text{BC}}$ is the behavior cloning loss that predicts actions conditioned on Q-values instead of RTG:
\begin{equation}
    \mathcal{L}_{\text{BC}} = -\mathbb{E}_{(s_t, a_t, g) \sim \mathcal{D}}\left[\log \pi_\theta(a_t | Q_t, [s_t; g])\right].
    \label{eq:bc_loss}
\end{equation}

\textbf{Inference.} QHyer performs two-stage autoregressive generation: (1) predict maximum Q-value $\hat{Q}(s_t, g)$ from current context; (2) predict optimal action conditioned on the predicted maximum Q-value. The detailed algorithm is provided in \cref{app:alg}.

\subsection{Theoretical Analysis}
\label{sec:theory}
We establish convergence guarantees for \textbf{QHyer}: expectile regression yields near-optimal Q-values, and the learned policy achieves bounded optimal stitched policy with explicit dependence on sample size, NFs accuracy, and coverage.

\textbf{Setup.} Let $Q^\beta(\tau, g, h)$ denote the goal-reaching probability conditioned on history. The in-distribution optimal Q-value $Q^\star(s, a, g, h) := \max_{\tau \in \mathcal{T}^\beta: (s_h, a_h) = (s, a)} Q^\beta(\tau, g, h)$ represents the maximum achievable within the behavior policy's support. We assume: (i) Q-value coverage with constant $\tilde{c} \in (0, 1]$, measuring the minimum density ratio of optimal actions in the dataset; (ii) bounded NFs error $\epsilon_{\text{NFs}}$; (iii) bounded function class with approximation error $\delta_{\text{approx}}$. Full definitions are in \cref{app:assumptions}.

\begin{theorem}[Convergence of Expectile Regression to In-Distribution Optimal Q-Value]
\label{thm:expectile_convergence}
Under Q-value coverage with constant $\tilde{c} \in (0,1]$ and sample size satisfying \cref{eq:sample_complexity_expectile} in the Appendix, for $\tau \in (0.5, 1)$, the expectile estimator satisfies $|Q^\star - \hat{Q}^{\tau}| \leq \epsilon_\tau$ with high probability, where the bias term $\epsilon_\tau := \frac{(1-\tau)(Q^\star - Q_{\min})}{\tau \cdot \tilde{c}/2 + (1-\tau)(1 - \tilde{c}/2)}$ decreases as $\tau$ increases, at the cost of requiring more samples for variance control.
\end{theorem}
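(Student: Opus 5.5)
The plan is to separate the error into a population bias term and a statistical term. Fix $(s,g,h)$ and let $X$ be the random target $Q^\beta_\theta(s,a,g)$ with $a$ drawn from the dataset's conditional distribution. Assume $X$ takes values in $[Q_{\min},Q^\star]$; the NFs error $\epsilon_{\text{NFs}}$ is deferred to the later policy theorem. The population $\tau$-expectile $m_\tau$ of $X$ is the unique root of the first-order condition
\begin{equation*}
\tau\,\mathbb{E}\bigl[(X-m)_+\bigr] = (1-\tau)\,\mathbb{E}\bigl[(m-X)_+\bigr].
\end{equation*}
Uniqueness holds because the left side minus the right side is strictly decreasing in $m$. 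The proof then bounds $Q^\star - m_\tau$ and, separately, $|m_\tau - \hat{Q}^\tau|$. Their sum gives the statement, with $\epsilon_\tau$ taken as the bias and the sample-size condition absorbing the variance part.

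\textbf{Step 1 (bias).} I would interpret the coverage assumption with constant $\tilde c$ as follows: with probability at least $\tilde c$, the sampled action lies in the near-optimal set, where $X \ge Q^\star - \tfrac12(Q^\star - m)$. The halving is what produces the $\tilde c/2$ in the denominator.

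Write $\Delta = Q^\star - m_\tau$. On the covered event, $(X-m_\tau)_+ \ge \Delta/2$. This gives a lower bound on the left side of the first-order condition of order $\tau\,(\tilde c/2)\,\Delta$. The precise constant comes from mixing the covered mass $\tilde c/2$ against the remaining mass.

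For the right side, the uncovered mass is at most $1-\tilde c/2$, and on it $(m_\tau-X)_+ \le m_\tau - Q_{\min}$. We have $m_\tau - Q_{\min} = (Q^\star - Q_{\min}) - \Delta$.

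Substituting both bounds yields a linear inequality in $\Delta$:
\begin{equation*}
\tau\tfrac{\tilde c}{2}\Delta \le (1-\tau)\bigl(1-\tfrac{\tilde c}{2}\bigr)\bigl(Q^\star-Q_{\min}-\Delta\bigr).
\end{equation*}
Rearranging gives $\Delta \le \frac{(1-\tau)(1-\tilde c/2)(Q^\star-Q_{\min})}{\tau\tilde c/2+(1-\tau)(1-\tilde c/2)}$. Since $1-\tilde c/2\le 1$, this is at most $\epsilon_\tau$. Monotonicity in $\tau$ follows by differentiating $\epsilon_\tau$: the numerator decreases in $\tau$ while the denominator increases whenever $\tilde c/2 > 1-\tilde c/2$. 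If that condition fails, monotonicity still holds after writing $\epsilon_\tau = (Q^\star-Q_{\min})/(1+\frac{\tau}{1-\tau}\cdot\frac{\tilde c/2}{1-\tilde c/2}\cdot\ldots)$, so I would present it in that form.

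\textbf{Step 2 (estimation).} The empirical minimizer of the asymmetric squared loss in \cref{eq:expectile_q_loss} is analyzed as an $M$-estimator. The loss $L^2_\tau$ is $2\min(\tau,1-\tau)$-strongly convex and $2\max(\tau,1-\tau)$-smooth in its argument. Together with bounded targets (range $Q^\star-Q_{\min}$), a standard uniform concentration argument over the function class applies: a Rademacher complexity or covering bound for the class of $\hat{Q}_\phi$, plus the approximation error $\delta_{\text{approx}}$. This gives $|\hat{Q}^\tau - m_\tau| \lesssim \frac{\tau}{1-\tau}\sqrt{\log(1/\delta)/N}+\delta_{\text{approx}}$ with probability $1-\delta$.

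The ratio $\tau/(1-\tau)$ degrades as $\tau\to1$; this is the "more samples" tradeoff. The sample-size condition \cref{eq:sample_complexity_expectile} is exactly what makes this term at most the slack in $\epsilon_\tau$, or what lets it be absorbed into $\epsilon_\tau$ up to a constant.

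\textbf{Main obstacle.} The delicate step is Step 1: turning the coverage assumption into a lower bound on $\mathbb{E}[(X-m)_+]$ that is linear in $\Delta$ with exactly the constant $\tilde c/2$. I would first try defining the covered set via the median of the upper tail, which makes the $1/2$ natural. If the appendix's definition of $\tilde c$ is instead a density ratio, I would use a Markov-type argument: a $\tilde c$ density ratio on the optimal action ensures that mass at least $\tilde c/2$ lies within $\Delta/2$ of $Q^\star$.
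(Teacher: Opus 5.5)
The gap is in your Step 2 and in the closing claim that \cref{eq:sample_complexity_expectile} lets the statistical error be absorbed into the slack of $\epsilon_\tau$.

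\emph{Why the absorption fails.} That condition involves only $d^\beta_{\min}$, $\tilde c$, $\delta$ and $|\mathcal S||\mathcal A||\mathcal G|H$. It does not depend on $\tau$ or on any complexity measure of the class of $\hat Q_\phi$. The slack, $\epsilon_\tau$ minus your Step 1 bound, equals $\frac{(1-\tau)(\tilde c/2)(Q^\star-Q_{\min})}{\tau\tilde c/2+(1-\tau)(1-\tilde c/2)}$. It vanishes as $\tau\to1$, exactly when your term $\frac{\tau}{1-\tau}\sqrt{\log(1/\delta)/N}$ blows up. Moreover, no sample size removes an additive $\delta_{\text{approx}}$; in the paper that is a KL error of the policy class, not of the Q-predictor. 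So your route gives at best $\epsilon_\tau$ plus a variance/approximation term under a different sample condition, not the stated bound.

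\emph{The missing idea.} No population-versus-empirical split is needed. The paper fixes $(s,a,g,h)$ and takes $\hat Q^\tau$ to be the exact empirical $\tau$-expectile of the $K$ per-trajectory goal-reaching values $Q^k_h(g)$ of the dataset trajectories through $(s,a)$ at stage $h$. So $Q^\star(s,a,g,h)$ is the maximum over those trajectories, not a maximum over actions as for your $X$. The first-order-condition argument is then run directly on this finite sample, and the only randomness is in counts:
the first term of the sample condition guarantees $N^{s,a}_h\ge N d^\beta_{\min}/2$ visits;
the second term guarantees, via Hoeffding, that the number of samples attaining $Q^\star$ satisfies $n^\star\ge\tilde cK/2$;
a union bound over tuples then gives the high-probability statement.

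That concentration loss is where $\tilde c/2$ comes from, not a halving of the distance to $Q^\star$. Under the actual coverage assumption, a $\tilde c$-fraction of trajectories through $(s,a)$ attain $Q^\star$ exactly. Each covered sample therefore contributes exactly $\Delta$ to $\tau\sum_k(Q_k-m)_+$, and your ``main obstacle'' never arises. The paper handles NFs error through a separate oracle-versus-estimated decomposition, adding $L_Q\sqrt{\epsilon_{\text{NFs}}}$ via the expectile-Lipschitz assumption; your deferral corresponds to its oracle term.

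Apart from this, your Step 1 is the right computation and is more careful than the paper's. The paper substitutes $\bar Q_+=Q^\star$ and $\bar Q_-=Q_{\min}$ into the closed-form expectile as a ``worst case'', although $\bar Q_+<Q^\star$ would only enlarge the gap. The empirical analogues of your two inequalities, $\tau\sum_k(Q_k-m)_+\ge\tau n^\star\Delta$ and $(1-\tau)\sum_k(m-Q_k)_+\le(1-\tau)(K-n^\star)(Q^\star-Q_{\min}-\Delta)$, give the bound cleanly. The extra factor $(1-\tilde c/2)\le 1$ in the numerator that you discard is also dropped, silently, in the paper. So the repair is to apply Step 1 to the empirical measure of the $K$ samples, with exact attainment on a fraction at least $\tilde c/2$, and delete Step 2.

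Finally, in your monotonicity argument the condition $\tilde c/2>1-\tilde c/2$ never holds for $\tilde c\le 1$. Simply write $\epsilon_\tau=(Q^\star-Q_{\min})/\bigl((1-\tilde c/2)+\tfrac{\tau}{1-\tau}\cdot\tfrac{\tilde c}{2}\bigr)$, which is visibly decreasing in $\tau$.
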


\begin{theorem}[Convergence to In-Distribution Optimal Stitched Policy]
\label{thm:convergence}
Under assumptions (i) to (iii), the learned policy $\hat{\pi}^\star_\mathcal{D}$ satisfies:
\begin{equation}
    J(\pi^\star_\beta) - J(\hat{\pi}^\star_\mathcal{D}) \leq \underbrace{\mathcal{O}(N^{-1/4})}_{\text{policy}} + \underbrace{\sqrt{\delta_{\text{approx}}}}_{\text{MLE}} + \underbrace{\mathcal{O}(\sqrt{\epsilon_{\text{NFs}}} + \epsilon_\tau)}_{\text{Q-value}}.
\end{equation}
\end{theorem}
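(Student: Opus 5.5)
We bound the suboptimality gap $J(\pi^\star_\beta) - J(\hat{\pi}^\star_\mathcal{D})$ by a telescoping decomposition through two intermediate policies. The first is $\pi^\star_{Q^\star}$, the return-conditioned behavior policy obtained by conditioning the \emph{true} conditional action distribution $\beta(a\mid s,g,h,Q)$ on the in-distribution optimal value $Q^\star$. The second is $\pi_{\hat Q}$, the same ideal conditional distribution but conditioned on the value $\hat{Q}^\tau_\phi$ predicted by the transformer. Writing
\begin{equation*}
J(\pi^\star_\beta) - J(\hat{\pi}^\star_\mathcal{D}) = \bigl[J(\pi^\star_\beta) - J(\pi_{\hat Q})\bigr] + \bigl[J(\pi_{\hat Q}) - J(\hat{\pi}^\star_\mathcal{D})\bigr],
\end{equation*}
the first bracket will be charged to the Q-value error and the second to the policy-learning (MLE) error. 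We use the identification in \cref{eq:q_occupancy}: $J$ at a goal is the discounted goal-reaching probability, i.e.\ the occupancy. Hence $J(\pi^\star_\beta)=\mathbb{E}[Q^\star]$ by the definition of $\pi^\star_\beta$ as the stitched policy attaining the in-distribution maximum, using the coverage assumption (i).

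\textbf{Q-value term.} The conditioning value fed at inference differs from $Q^\star$ for two reasons. First, the NFs target $Q^\beta_\theta=\log p_\theta$ deviates from the true $\log p^\beta_+$; assumption (ii) bounds this, say in KL/log-likelihood, by $\epsilon_{\text{NFs}}$. Pinsker's inequality converts it to a total variation error of order $\sqrt{\epsilon_{\text{NFs}}}$ on the occupancy, hence on goal-reaching probabilities. Second, \cref{thm:expectile_convergence} gives $|Q^\star-\hat Q^\tau|\le\epsilon_\tau$ for the expectile fitted to that target. We then need a Lipschitz-type statement: the return achieved by the conditional policy $\beta(\cdot\mid Q=q)$ changes by at most $O(|q-q'|)$ plus the density mismatch. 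This follows from the standard return-conditioned supervised learning (RCSL) argument (Brandfonbrener et al.) that conditioning on value $q$ realizes expected value $q$ under coverage. Combining the two sources yields $O(\sqrt{\epsilon_{\text{NFs}}}+\epsilon_\tau)$.

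\textbf{Policy term.} We apply the performance difference lemma to $\pi_{\hat Q}$ versus $\hat\pi^\star_\mathcal{D}$. The value gap is bounded by the expected total variation between the two action distributions under the occupancy of $\pi_{\hat Q}$, divided by the coverage constant $\tilde c$ to change measure from the data distribution. The total variation is again bounded via Pinsker by the square root of the excess KL of the BC maximum-likelihood estimator in \cref{eq:bc_loss}. A standard MLE generalization bound over a bounded function class gives excess KL $\le \delta_{\text{approx}} + O(N^{-1/2})$ with high probability. Taking square roots yields $\sqrt{\delta_{\text{approx}}}+O(N^{-1/4})$, which explains the $N^{-1/4}$ rate.

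\textbf{Main obstacle.} The delicate step is the Lipschitz/realizability claim in the Q-value term: showing that conditioning on a value near $Q^\star$ actually produces a policy achieving near $Q^\star$ when the data are non-Markovian and history-dependent. I would first try to establish it at the level of history-conditioned segments, treating $(s,g,h)$ as an augmented Markov state so that the RCSL consistency argument applies. I would then control the distribution shift induced by stitching through the coverage constant $\tilde c$, absorbing $1/\tilde c$ factors into the $O(\cdot)$ constants.
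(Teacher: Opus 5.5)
\textbf{Gap: the intermediate policy $\pi_{\hat Q}$.} The problem sits exactly at the step you flagged. Your $\pi_{\hat Q}$ is the \emph{behavior} conditional $P_\beta(\cdot\mid s,g,h,\hat Q)$ evaluated at the predicted value.

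In the paper's deterministic setting, the conditioning values in the data are the trajectory-level quantities $Q^\beta(\tau,g,h)$. These take finitely many values at each $(s,g,h)$. By contrast, $\hat Q^\tau$ is a weighted average of NFs estimates and generically lies off that support. So $P_\beta(\cdot\mid s,g,h,\hat Q)$ is not even well defined, and nothing in (i)--(iii) makes the behavior conditional, or the return it attains, continuous in the conditioning value.

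The RCSL argument of Brandfonbrener et al.\ does not supply this. It says that conditioning on an \emph{in-support} return attains that return under near-determinism and a consistency condition. It gives no continuity in the conditioned value, and the same analysis shows that return conditioning alone does not stitch.

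The same choice also breaks your second bracket. The MLE guarantee controls $L(\hat\pi)=\mathbb{E}_{(s,g,Q)\sim P_\beta}[D_{\mathrm{KL}}(P_\beta(\cdot\mid s,g,h,Q)\,\|\,\hat\pi(\cdot\mid s,g,h,Q))]$, i.e.\ the KL at conditioning values drawn from the data. You need it at the model-generated $\hat Q$, which has no coverage. Finally, $\tilde c$ is a within-$(s,a)$ fraction of trajectories. It cannot absorb the shift between the state occupancy of a stitched policy and $d^\beta_h$; that requires a separate concentrability constant.

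\textbf{How the paper avoids this.} It performs the two substitutions in the opposite order, and at the level of per-state action distributions rather than returns. It bounds the gap by $H\|d^{\pi^\star_\beta}-d^{\hat\pi^\star_\mathcal{D}}\|_1$, applies the simulation lemma, and splits (suppressing $s,g,h$)
\[
\begin{aligned}
\mathrm{TV}\bigl(P_\beta(\cdot\mid Q^\star)\,\|\,\hat\pi(\cdot\mid \hat Q^\tau)\bigr)
&\le \mathrm{TV}\bigl(P_\beta(\cdot\mid Q^\star)\,\|\,\hat\pi(\cdot\mid Q^\star)\bigr)\\
&\quad+\mathrm{TV}\bigl(\hat\pi(\cdot\mid Q^\star)\,\|\,\hat\pi(\cdot\mid \hat Q^\tau)\bigr).
\end{aligned}
\]
The first term compares the two policies at the in-support value $Q^\star$. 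Jensen, Pinsker and the change of measure $d^{\pi^\star_\beta}_h\to d^\beta_h$ via $c^\star_\beta$ (Assumption~\ref{assump:dist_mismatch}) reduce it to $\sqrt{c^\star_\beta L(\hat\pi)/2}$. The MLE bound then turns this into $O(N^{-1/4})+\sqrt{\delta_{\text{approx}}}$.

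The second term moves the conditioning value inside the \emph{learned} policy, whose class is assumed $L_\pi$-Lipschitz in $Q$ (Assumption~\ref{assump:lipschitz}). It is therefore at most $L_\pi|Q^\star-\hat Q^\tau|\le L_\pi(\epsilon_\tau+L_Q\sqrt{\epsilon_{\text{NFs}}})$ by Theorem~\ref{thm:expectile_convergence}.

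Both $c^\star_\beta$ and $L_\pi$ are hypotheses beyond the (i)--(iii) listed in the main text. Your argument needs equivalents, and the Lipschitz one is only reasonable for the smooth learned class, not for $P_\beta$.

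\textbf{Two smaller points.}
\begin{enumerate}
\item You never need $J(\pi^\star_\beta)=\mathbb{E}[Q^\star]$. The identification is also off: $J$ is an undiscounted horizon-$H$ indicator sum, while $Q^\beta$ is a discounted occupancy.
\item Pinsker on the goal distribution gives a total-variation bound, not a pointwise bound on the scalar token $\log p_\theta(g\mid s,a)$ at the evaluated goal. The paper instead reads (ii) as an MSE bound on Q-values (Assumption~\ref{assump:nf_error}) and obtains $\sqrt{\epsilon_{\text{NFs}}}$ via Jensen and the same change of measure.
\end{enumerate}
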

Complete proofs are in \cref{app:proofs}.
\begin{table*}[t]
\vspace{-3pt}
\caption{
\footnotesize
\textbf{Results on OGBench manipulation tasks.} Average success rate ($\%$) across 5 test-time goals. Results averaged over 8 seeds (4 for pixel-based). \textbf{\textcolor{orange}{Orange}} = best, \underline{underline} = second best.
}
\label{table:manip_datasets}
\centering
\renewcommand{\arraystretch}{0.9}
\resizebox{0.9\textwidth}{!}
{
\begin{tabular}{ccccccccccccc}
\toprule
& & & \multicolumn{4}{c}{\textbf{Hierarchical Policy}} & \multicolumn{6}{c}{\textbf{Flat Policy}} \\
  \cmidrule(lr){4-7} \cmidrule(lr){8-13}
\textbf{Env} & \textbf{Type} & \textbf{Dataset} & HIQL & SAW & OTA & Eik-HIQRL & GCBC & GCIVL & GCIQL & QRL & CRL & \textbf{\textcolor{orange}{QHyer}}\\
\midrule
\multirow[c]{8}{*}{\texttt{cube}} & \multirow[c]{4}{*}{\texttt{play}} & \texttt{single} & $15$ {\tiny $\pm 3$}  & $23$ {\tiny $\pm 2$} & $13$ {\tiny $\pm 1$} & $0$ {\tiny $\pm 0$} & $6$ {\tiny $\pm 2$} & $53$ {\tiny $\pm 4$} & \underline{$68$} {\tiny $\pm 6$} & $5$ {\tiny $\pm 1$} & $19$ {\tiny $\pm 2$} & $\mathbf{\textcolor{orange}{84}}$ {\tiny $\pm 4$}\\
 &  & \texttt{double} & $6$ {\tiny $\pm 2$} & $26$ {\tiny $\pm 3$} & $2$ {\tiny $\pm 1$} & $0$ {\tiny $\pm 0$} & $1$ {\tiny $\pm 1$} & $36$ {\tiny $\pm 3$} & \underline{$40$} {\tiny $\pm 5$} & $1$ {\tiny $\pm 0$} & $10$ {\tiny $\pm 2$}  & $\mathbf{\textcolor{orange}{56}}$ {\tiny $\pm 2$}\\
 &  & \texttt{triple} & $3$ {\tiny $\pm 1$} & \underline{$19$} {\tiny $\pm 4$} & $1$ {\tiny $\pm 0$} & $0$ {\tiny $\pm 0$} & $1$ {\tiny $\pm 1$} & $1$ {\tiny $\pm 0$} & $3$ {\tiny $\pm 1$} & $0$ {\tiny $\pm 0$} & $4$ {\tiny $\pm 1$} & $\mathbf{\textcolor{orange}{10}}$ {\tiny $\pm 5$}\\
 &  & \texttt{quadruple} & $0$ {\tiny $\pm 0$} & $0$ {\tiny $\pm 0$} & $0$ {\tiny $\pm 0$} & $0$ {\tiny $\pm 0$} & $0$ {\tiny $\pm 0$} & $0$ {\tiny $\pm 0$} & $0$ {\tiny $\pm 0$} & $0$ {\tiny $\pm 0$} & $\mathbf{\textcolor{orange}{2}}$ {\tiny $\pm 1$} & $\mathbf{\textcolor{orange}{2}}$ {\tiny $\pm 1$} \\
 \cmidrule{3-13}
 & &\textit{Total} &\textit{24} &\textit{68} &\textit{16} &\textit{0} &\textit{8} &\textit{90} &\underline{\textit{111}} &\textit{6} &\textit{35} & $\mathbf{\textcolor{orange}{\textit{152}}}$\\
\cmidrule{2-13}
 & \multirow[c]{4}{*}{\texttt{noisy}} & \texttt{single} & $41$ {\tiny $\pm 6$} & $38$ {\tiny $\pm 2$} & $40$ {\tiny $\pm 2$} & $2$ {\tiny $\pm 1$} & $8$ {\tiny $\pm 3$} & $71$ {\tiny $\pm 9$} & $\mathbf{\textcolor{orange}{99}}$ {\tiny $\pm 1$} & $25$ {\tiny $\pm 6$} & $38$ {\tiny $\pm 2$}  & \underline{$95$} {\tiny $\pm 5$}\\
 &  & \texttt{double} & $2$ {\tiny $\pm 1$} & $12$ {\tiny $\pm 1$} & $5$ {\tiny $\pm 2$} & $0$ {\tiny $\pm 0$} & $1$ {\tiny $\pm 1$} & $14$ {\tiny $\pm 3$} & \underline{$23$} {\tiny $\pm 3$} & $3$ {\tiny $\pm 1$} & $2$ {\tiny $\pm 1$} & $\mathbf{\textcolor{orange}{30}}$ {\tiny $\pm 4$}\\
 &  & \texttt{triple} & $2$ {\tiny $\pm 1$} & \underline{$13$} {\tiny $\pm 1$} & $1$ {\tiny $\pm 0$} & $0$ {\tiny $\pm 0$} & $1$ {\tiny $\pm 1$} & $9$ {\tiny $\pm 1$} & $2$ {\tiny $\pm 1$} & $1$ {\tiny $\pm 0$} & $3$ {\tiny $\pm 1$}  & $\mathbf{\textcolor{orange}{14}}$ {\tiny $\pm 1$}\\
 &  & \texttt{quadruple} & $0$ {\tiny $\pm 0$} & $1$ {\tiny $\pm 0$} & $0$ {\tiny $\pm 0$} & $0$ {\tiny $\pm 0$} & $0$ {\tiny $\pm 0$} & $0$ {\tiny $\pm 0$} & $0$ {\tiny $\pm 0$} & $0$ {\tiny $\pm 0$} & $0$ {\tiny $\pm 0$} & $\mathbf{\textcolor{orange}{6}}$ {\tiny $\pm 4$} \\
 \cmidrule{3-13}
& &\textit{Total} &\textit{45} &\textit{64} &\textit{46} &\textit{2} &\textit{10} &\textit{94} &\underline{\textit{124}} &\textit{29} &\textit{43} & $\mathbf{\textcolor{orange}{\textit{145}}}$ \\
\cmidrule{1-13}
\multirow[c]{2}{*}{\texttt{scene}} & \texttt{play} & \texttt{scene} & $38$ {\tiny $\pm 3$} & $\mathbf{\textcolor{orange}{58}}$ {\tiny $\pm 3$} & $19$ {\tiny $\pm 4$} & $11$ {\tiny $\pm 3$} & $5$ {\tiny $\pm 1$} & $42$ {\tiny $\pm 4$} & \underline{$51$} {\tiny $\pm 4$} & $5$ {\tiny $\pm 1$} & $19$ {\tiny $\pm 2$}  & $53$ {\tiny $\pm 2$}\\
\cmidrule{2-13}
 & \texttt{noisy} & \texttt{scene} & $\mathbf{\textcolor{orange}{25}}$ {\tiny $\pm 4$} & $\mathbf{\textcolor{orange}{25}}$ {\tiny $\pm 2$} & $12$ {\tiny $\pm 1$} & $12$ {\tiny $\pm 0$} & $1$ {\tiny $\pm 1$} & $\mathbf{\textcolor{orange}{26}}$ {\tiny $\pm 5$} & $\mathbf{\textcolor{orange}{26}}$ {\tiny $\pm 2$} & $9$ {\tiny $\pm 2$} & $1$ {\tiny $\pm 1$}  & $\mathbf{\textcolor{orange}{25}}$ {\tiny $\pm 5$}\\
\cmidrule{1-13}
\multirow[c]{8}{*}{\texttt{puzzle}} & \multirow[c]{4}{*}{\texttt{play}} & \texttt{3x3} & $12$ {\tiny $\pm 2$} & $6$ {\tiny $\pm 1$} &  $21$ {\tiny $\pm 5$} & $9$ {\tiny $\pm 0$} & $2$ {\tiny $\pm 0$} & $6$ {\tiny $\pm 1$} & $\mathbf{\textcolor{orange}{95}}$ {\tiny $\pm 1$} & $1$ {\tiny $\pm 0$} & $3$ {\tiny $\pm 1$} & \underline{$92$} {\tiny $\pm 2$}\\
 &  & \texttt{4x4} & $7$ {\tiny $\pm 2$} & $6$ {\tiny $\pm 1$} &  $5$ {\tiny $\pm 2$} & $2$ {\tiny $\pm 1$} & $0$ {\tiny $\pm 0$} & $13$ {\tiny $\pm 2$} & \underline{$26$} {\tiny $\pm 3$} & $0$ {\tiny $\pm 0$} & $0$ {\tiny $\pm 0$} & $\mathbf{\textcolor{orange}{28}}$ {\tiny $\pm 5$}\\
 &  & \texttt{4x5} & $4$ {\tiny $\pm 1$} & $2$ {\tiny $\pm 1$} &  $2$ {\tiny $\pm 1$} & $0$ {\tiny $\pm 0$} & $0$ {\tiny $\pm 0$} & $7$ {\tiny $\pm 1$} & \underline{$14$} {\tiny $\pm 1$} & $0$ {\tiny $\pm 0$} & $1$ {\tiny $\pm 0$} & $\mathbf{\textcolor{orange}{31}}$ {\tiny $\pm 1$}\\
 &  & \texttt{4x6} & $3$ {\tiny $\pm 1$} & $5$ {\tiny $\pm 0$} &  $1$ {\tiny $\pm 1$} & $0$ {\tiny $\pm 0$} & $0$ {\tiny $\pm 0$} & $10$ {\tiny $\pm 2$} & \underline{$12$} {\tiny $\pm 1$} & $0$ {\tiny $\pm 0$} & $4$ {\tiny $\pm 1$} & $\mathbf{\textcolor{orange}{18}}$ {\tiny $\pm 2$}\\
 \cmidrule{3-13}
 & &\textit{Total} &\textit{26}  &\textit{19} &\textit{29} &\textit{11} & \textit{2} &\textit{36} &\underline{\textit{147}} &\textit{1} &\textit{8} & $\mathbf{\textcolor{orange}{\textit{169}}}$\\
\cmidrule{2-13}
 & \multirow[c]{4}{*}{\texttt{noisy}} & \texttt{3x3} & $51$ {\tiny $\pm 11$} & \underline{$78$} {\tiny $\pm 39$} & $53$ {\tiny $\pm 6$} & $6$ {\tiny $\pm 2$} & $1$ {\tiny $\pm 0$} & $42$ {\tiny $\pm 19$} & $\mathbf{\textcolor{orange}{94}}$ {\tiny $\pm 3$} & $0$ {\tiny $\pm 0$} & $30$ {\tiny $\pm 6$}  & $89$ {\tiny $\pm 8$}\\
 &  & \texttt{4x4} & $16$ {\tiny $\pm 4$} & $0$ {\tiny $\pm 0$} & $0$ {\tiny $\pm 0$} & $3$ {\tiny $\pm 1$} & $0$ {\tiny $\pm 0$} & $20$ {\tiny $\pm 3$} & \underline{$29$} {\tiny $\pm 7$} & $0$ {\tiny $\pm 0$} & $0$ {\tiny $\pm 0$}  & $\mathbf{\textcolor{orange}{33}}$ {\tiny $\pm 6$}\\
 &  & \texttt{4x5} & $5$ {\tiny $\pm 1$} & $15$ {\tiny $\pm 1$} & $1$ {\tiny $\pm 1$} & $2$ {\tiny $\pm 2$} & $0$ {\tiny $\pm 0$} & \underline{$19$} {\tiny $\pm 0$} & \underline{$19$} {\tiny $\pm 0$} & $0$ {\tiny $\pm 0$} & $3$ {\tiny $\pm 2$} & $\mathbf{\textcolor{orange}{26}}$ {\tiny $\pm 2$}\\
 &  & \texttt{4x6} & $2$ {\tiny $\pm 1$} & $11$ {\tiny $\pm 2$} & $0$ {\tiny $\pm 0$} & $0$ {\tiny $\pm 0$} & $0$ {\tiny $\pm 0$} & \underline{$17$} {\tiny $\pm 2$} & $\mathbf{\textcolor{orange}{18}}$ {\tiny $\pm 2$} & $0$ {\tiny $\pm 0$} & $6$ {\tiny $\pm 3$} & $\mathbf{\textcolor{orange}{24}}$ {\tiny $\pm 3$}\\
 \cmidrule{3-13}
 & &\textit{Total} &\textit{74}  &\textit{104} &\textit{54} &\textit{11} &\textit{1} &\textit{98} &\underline{\textit{160}} &\textit{0} &\textit{39} & $\mathbf{\textcolor{orange}{\textit{172}}}$\\
\cmidrule{1-13}
\multirow[c]{4}{*}{\texttt{visual-cube}} & \multirow[c]{4}{*}{\texttt{play}}
 & \texttt{single} & $\mathbf{\textcolor{orange}{89}}$ {\tiny $\pm 0$} & \underline{$88$} {\tiny $\pm 3$} & $40$ {\tiny $\pm 4$} & / & $5$ {\tiny $\pm 1$} & $60$ {\tiny $\pm 5$} & $30$ {\tiny $\pm 5$} & $41$ {\tiny $\pm 15$} & $31$ {\tiny $\pm 15$} & $42$ {\tiny $\pm 2$}\\
 &  & \texttt{double} & \underline{$39$} {\tiny $\pm 2$} & $\mathbf{\textcolor{orange}{40}}$ {\tiny $\pm 3$} & $36$ {\tiny $\pm 17$} & / & $1$ {\tiny $\pm 1$} & $10$ {\tiny $\pm 2$} & $1$ {\tiny $\pm 1$} & $5$ {\tiny $\pm 0$} & $2$ {\tiny $\pm 1$} & $37$ {\tiny $\pm 3$}\\
 &  & \texttt{triple} & $\mathbf{\textcolor{orange}{21}}$ {\tiny $\pm 0$} & \underline{$20$} {\tiny $\pm 1$} & $24$ {\tiny $\pm 2$} & / & $15$ {\tiny $\pm 2$} & $14$ {\tiny $\pm 2$} & $15$ {\tiny $\pm 1$} & $16$ {\tiny $\pm 1$} & $17$ {\tiny $\pm 2$} & \underline{$20$} {\tiny $\pm 4$}\\
 \cmidrule{3-13}
 & &\textit{Total} &$\mathbf{\textcolor{orange}{\textit{149}}}$ & \underline{\textit{148}} & \textit{100} & / &\textit{21} &\textit{84} &\textit{46} &\textit{62} &\textit{50} & \textit{99}\\
\cmidrule{1-13}
\multirow[c]{2}{*}{\texttt{visual-scene}} & \texttt{play} & \texttt{scene} & $49$ {\tiny $\pm 4$} & $47$ {\tiny $\pm 6$} & \underline{$62$} {\tiny $\pm 3$} & / & $12$ {\tiny $\pm 2$} & $25$ {\tiny $\pm 3$} & $12$ {\tiny $\pm 2$} & $10$ {\tiny $\pm 1$} & $11$ {\tiny $\pm 2$} & $\mathbf{\textcolor{orange}{96}}$ {\tiny $\pm 1$}\\
\cmidrule{2-13}
 & \texttt{noisy} & \texttt{scene} & $50$ {\tiny $\pm 1$} & $54$ {\tiny $\pm 3$} & $\mathbf{\textcolor{orange}{63}}$ {\tiny $\pm 2$} & / & $13$ {\tiny $\pm 2$} & $23$ {\tiny $\pm 2$} & $12$ {\tiny $\pm 4$} & $2$ {\tiny $\pm 0$} & $15$ {\tiny $\pm 2$} & \underline{$36$} {\tiny $\pm 4$}\\
\bottomrule
\end{tabular}
}
\vspace{-3pt}
\end{table*}
\begin{table*}[t]
\vspace{-3pt}
\centering
\caption{\textbf{Results on D4RL.} Normalized scores (5 seeds) from original papers except QHyer. \textcolor{orange}{Orange} = best, \underline{underline} = second best.}
\renewcommand{\arraystretch}{0.95}
\resizebox{0.9\textwidth}{!}{
\begin{tabular}{c|cc|cccccccccc}
\toprule
\multicolumn{1}{c|}{\multirow{2}[4]{*}{\texttt{Antmaze-v2}}} & \multicolumn{2}{c|}{RL} & \multicolumn{10}{c}{Supervised Learning} \\
\cmidrule{2-13}          & CQL & IQL & DT & RvS & EDT & CGDT & DC & DMamba & Reinformer & QT & LSDT & \textbf{QHyer}\\
\midrule
\texttt{umaze} & 74.0 & 87.5 & 64.5 & 65.4 & 67.8 & 71.0 & 85.0 & 81.8 & 84.4 & \underline{96.7} & 80.0 & \textcolor{orange}{\textbf{98.4}}{\tiny$\pm 1.9$} \\
\texttt{umaze-diverse} & 84.0 & 62.2 & 60.5 & 60.9 & 58.3 & 71.0 & 78.5 & 71.6 & 65.8 & \underline{96.7} & 83.2 & \textcolor{orange}{\textbf{97.1}}{\tiny$\pm 2.3$} \\
\texttt{medium-play} & 61.2 & 71.2 & 0.8 & 58.1 & 0.0 & / & 1.5 & 79.6 & 13.2 & / & \underline{85.5} & \textcolor{orange}{\textbf{92.2}}{\tiny$\pm 3.5$} \\
\texttt{medium-diverse} & 53.7 & 70.0 & 0.5 & 67.3 & 0.0 & / & 0.0 & \underline{83.2} & 10.6 & 59.3 & 75.8 & \textcolor{orange}{\textbf{94.0}}{\tiny$\pm 2.7$} \\
\texttt{large-play} & 15.8 & \underline{39.6} & 0.0 & 32.4 & 0.0 & / & 0.0 & 23.2 & 0.4 & / & 0.0 & \textcolor{orange}{\textbf{44.2}}{\tiny$\pm 1.9$} \\
\texttt{large-diverse} & 14.9 & 47.5 & 0.0 & 32.9 & 0.0 & / & 0.0 & 34.6 & 0.4 & \underline{53.3} & 0.0 & \textcolor{orange}{\textbf{57.5}}{\tiny$\pm 13.5$} \\
\cmidrule{1-13}
\textit{Total} & \textit{303.6} & \underline{\textit{378.0}} & \textit{126.3} & \textit{317.0} & \textit{126.1} & / & \textit{165.0} & \textit{374.0} & \textit{174.8} & / & \textit{324.5} & \textcolor{orange}{\textbf{\textit{483.4}}} \\ 
\cmidrule{1-13}
\texttt{Maze2d} & CQL & IQL & DT & QDT & GDT & VDT& DC & DMamba& DMixer & QT& LSDT& \textbf{QHyer}\\
\cmidrule{1-13}
\texttt{umaze} & 94.7 & 74.0 & 31.0 & 57.3 & 50.4 & 60.3 & 20.1 & 83.4 & 86.9 & \underline{105.4} & 72.3 & \textcolor{orange}{\textbf{118.5}}{\tiny$\pm 1.9$}\\
\texttt{medium} & 41.8 & 84.0 & 8.2 & 13.3 & 7.8 & 88.0 & 38.2 & 98.7 & 95.2 & \underline{172.0} & 68.4 & \textcolor{orange}{\textbf{173.0}}{\tiny$\pm 11.9$}\\
\cmidrule{1-13}
\textit{Total} & \textit{136.5} & \textit{158.0} & \textit{39.2} & \textit{70.6} & \textit{58.2} & \textit{148.3} & \textit{58.3} & \textit{182.1} & \textit{182.1} & \underline{\textit{277.4}} & \textit{140.7} & \textcolor{orange}{\textbf{\textit{291.5}}}\\
\bottomrule
\end{tabular}}
\vspace{-3pt}
\label{tb:ant goal results} 
\end{table*}
\section{Experiments}
\label{sec:experiments}
We extensively evaluate \textbf{QHyer}'s effectiveness and conduct ablation studies on both non-Markovian and Markovian offline GCRL datasets.

\textbf{Datasets.} We consider two widely used benchmarks. For OGBench \citep{ogbench_park2025}, we evaluate on manipulation tasks including \texttt{cube}, \texttt{scene}, and \texttt{puzzle} environments with both \texttt{play} (non-Markovian) and \texttt{noisy} (Markovian) datasets. For D4RL \citep{fu2020d4rl}, we evaluate on \texttt{Maze} (non-Markovian) tasks. A detailed introduction to these environments is presented in \cref{ap:exp details}.

\textbf{Baselines.} We compare \textbf{QHyer} against three categories of methods: (1) sequence modeling methods including DT~\citep{chen2021decision}, EDT~\citep{wu2023elastic}, GDT~\citep{hu2023graph}, QDT~\citep{yamagata2023q}, CGDT~\citep{wang2024critic}, Reinformer~\citep{zhuang2024reinformer}, DC~\citep{kim2024decision}, DMamba~\citep{ota2024decision}, QT~\citep{hu2024qvalue}, LSDT~\citep{wang2025longshort}, DMixer~\citep{zheng2025decision}, and VDT~\citep{zheng2025valueguided}; (2) TD-based methods including CQL~\citep{kumar2020conservative} and IQL~\citep{kostrikov2022offline}; (3) offline GCRL methods including GCBC~\citep{ghosh2021learning}, GCIVL, GCIQL~\citep{kostrikov2022offline}, QRL~\citep{wang2023optimal}, CRL~\citep{eysenbach2022contrastive}, HIQL~\citep{park2023hiql}, SAW~\citep{zhou2025flattening}, OTA~\citep{ahn2025option}, and Eik-HiQRL~\citep{giammarino2026goal}. For completeness, \cref{app:additional_baselines} additionally reports comparisons against four recent offline RL methods (i.e., QCFQL~\citep{li2025reinforcement}, SHARSA~\citep{park2025horizon}, Transitive RL~\citep{park2026transitive}, DEAS~\citep{kim2026deas}, ) adapted to GCRL with HER, as well as GAS~\citep{baek2025graphassisted} on navigation manipulation.
\subsection{OGBench Results}
Table~\ref{table:manip_datasets} validates our core claims about sequence modeling for non-Markovian Offline GCRL. On \texttt{play} datasets collected by non-Markovian expert policies, \textbf{QHyer} significantly outperforms all baselines across manipulation tasks. Hierarchical methods (HIQL, SAW, OTA) underperform on state-based \texttt{play} datasets because their subgoal decomposition assumes Markovian transitions between subgoals, an assumption violated when behavior policies exhibit temporal correlations. Eik-HiQRL further suffers from exponential quasimetric approximation error in high-dimensional spaces~\citep{giammarino2026goal}, limiting its effectiveness across both state-based and visual manipulation tasks. TD-based hierarchical methods (HIQL, SAW, OTA) achieve competitive performance on \texttt{visual} tasks because hierarchical value functions provide representation learning signals beneficial for pixel inputs. On \texttt{noisy} datasets, \textbf{QHyer} maintains competitive performance through adaptive gating between attention and Mamba branches.
\subsection{D4RL Results}
Table~\ref{tb:ant goal results} confirms \textbf{QHyer}'s advantages on long-horizon navigation tasks where trajectory stitching is essential. \textbf{QHyer} consistently outperforms both TD-based methods and sequence modeling baselines, with the most pronounced gains on large mazes requiring extensive stitching. Vanilla DT and its variants (EDT, DC) achieve near-zero performance on medium and large mazes, directly confirming our analysis in \cref{sec:obstacle1}. RTG under sparse goal-conditioned rewards reduces to binary signals that provide no discriminative information for stitching trajectories. On the other hand, it also demonstrates the effectiveness of our method in non-Markovian locomotion tasks.
\subsection{Ablation Studies} 
\label{sec:ablation}
\begin{figure}[!h]
\centering
\includegraphics[width=0.9\linewidth]{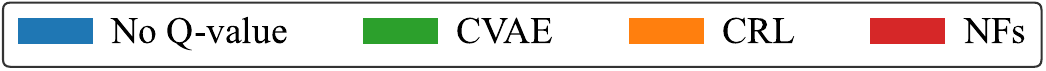}
\vspace{-2pt}
\includegraphics[width=0.95\linewidth]{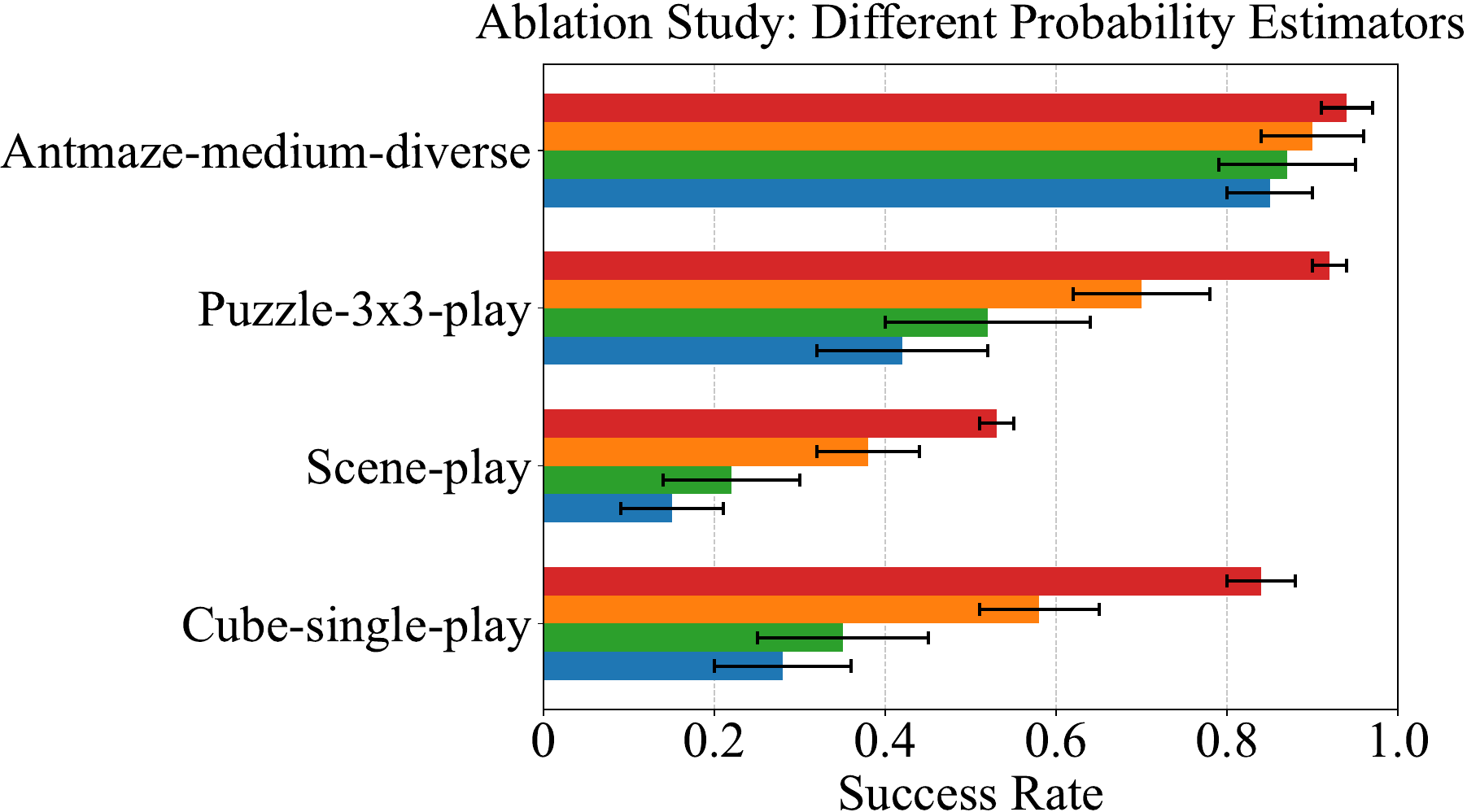}
\caption{Ablation Study on Various Q-value Estimators and the Impact of Not Estimating Q-value on \textbf{Qhyer}.}
\vspace{-0.5cm}
\label{fig:qvalue_ablation}
\end{figure}
\textbf{Q: How does the Q-value estimator affect performance?}

\textbf{A:} \cref{fig:qvalue_ablation} reveals a consistent ordering: No Q $<$ CVAE $<$ CRL $<$ NFs, directly reflecting the relationship between density estimation accuracy and policy quality established in \cref{sec:obstacle1}. Without Q-values, the model degenerates to behavior cloning that cannot distinguish states by their proximity to goals under sparse rewards. CVAE introduces systematic bias through the ELBO gap, distorting the goal-reaching probability landscape. CRL improves through contrastive objectives but inherits negative sampling bias that underestimates probabilities for distant goals. NFs achieve the best performance by computing exact likelihoods through invertible transformations (\cref{eq:nf_density}), enabling accurate identification of high-value state-action pairs via expectile regression. This mechanism is essential for extracting optimal behaviors from suboptimal data. 

\begin{figure}[!h]
\centering
\includegraphics[width=0.9\linewidth]{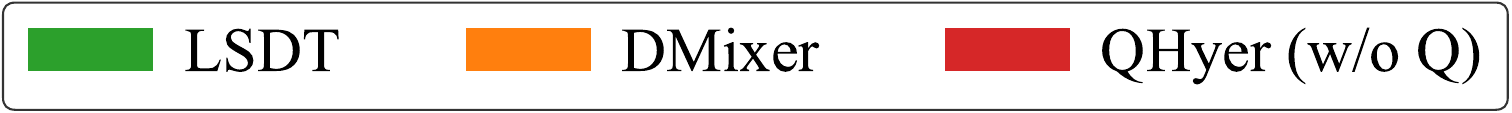}
\vspace{-3pt}
\includegraphics[width=0.95\linewidth]{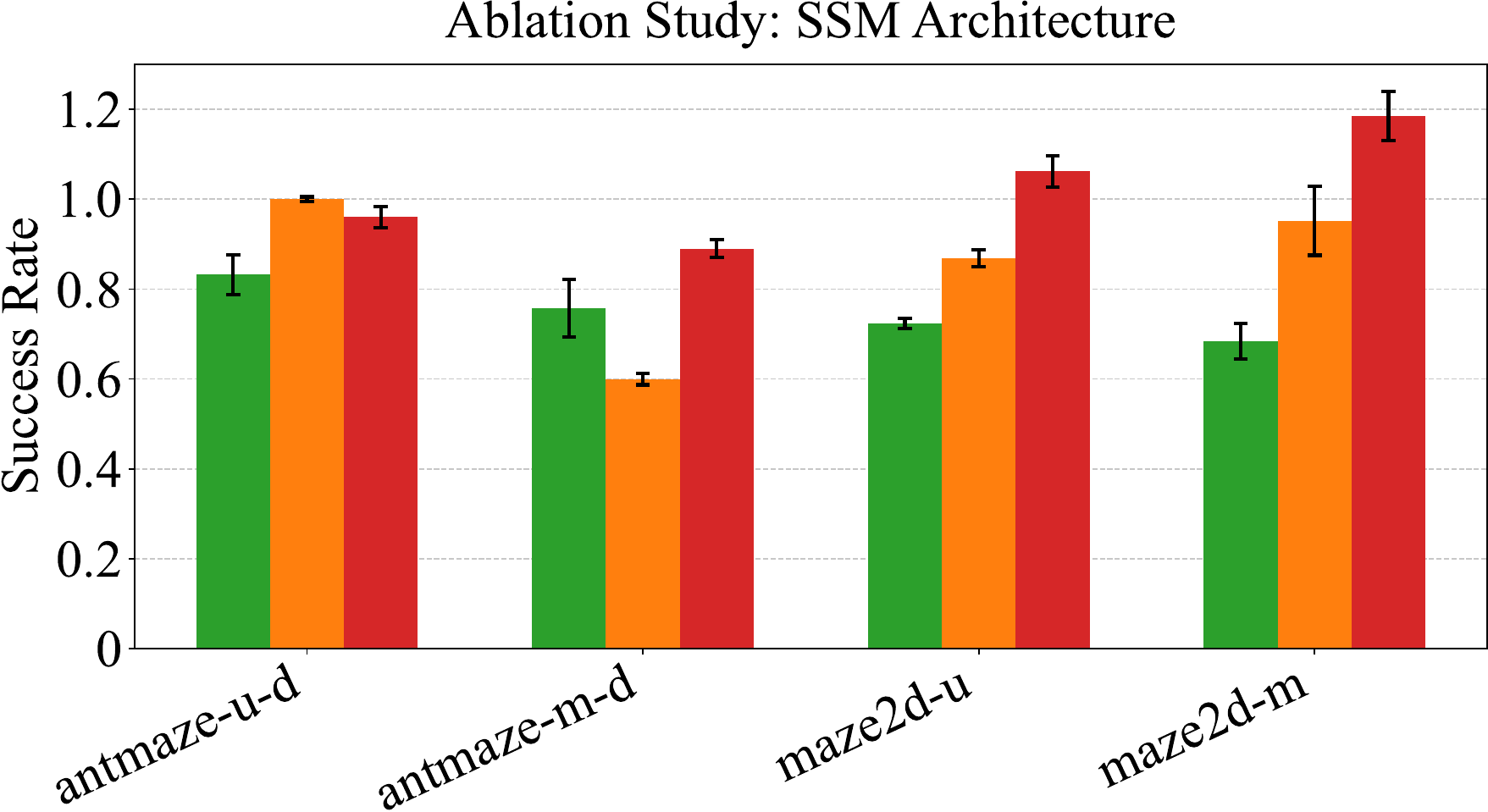}
\caption{Ablation study on SSM variants for temporal modeling.'-u','-m' and '-d' denote umaze, medium, and
diverse, respectively.}
\vspace{-0.5cm}
\label{fig:ssm_architecture}
\end{figure}
\textbf{Q: Does the architecture alone improve performance?}

\textbf{A:} \cref{fig:ssm_architecture} isolates the architectural contribution by removing NFs-based Q-conditioning from all methods, using standard RTG instead. The results reveal a consistent ordering: LSDT $<$ DMixer $<$ QHyer across both AntMaze and Maze2d environments. This validates that the performance gains stem from both innovations independently. LSDT's Dynamic Convolution branch is limited by its fixed kernel size, which cannot adaptively capture dependencies of varying ranges. DMixer's token-level selection mechanism improves upon LSDT but may disrupt continuous action patterns through discrete token dropping. In contrast, QHyer's Mamba branch maintains compressed hidden states that enable content-adaptive dependency modeling. The selective SSM parameters (B, C, $\Delta$) dynamically determine how much historical context to retain based on input content, rather than relying on predefined kernel sizes or discrete selection thresholds. Combined with the results in \cref{fig:qvalue_ablation}, this demonstrates that QHyer's two innovations provide complementary and additive performance improvements.

\begin{figure}[!h]
\centering
\includegraphics[width=0.8\linewidth]{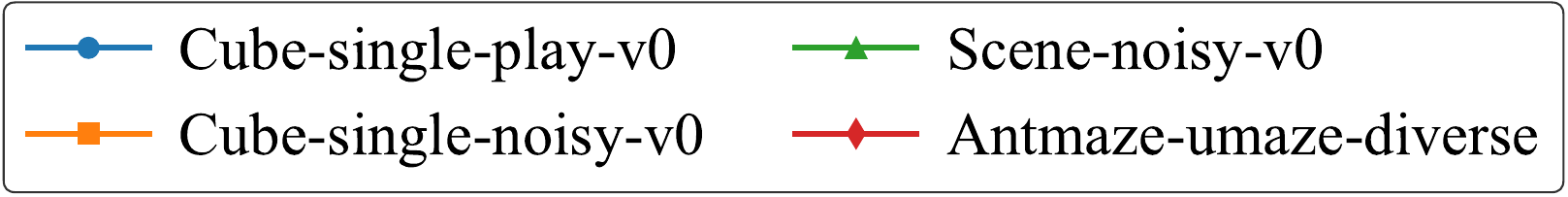}
\vspace{-3pt}
\includegraphics[width=0.75\linewidth]{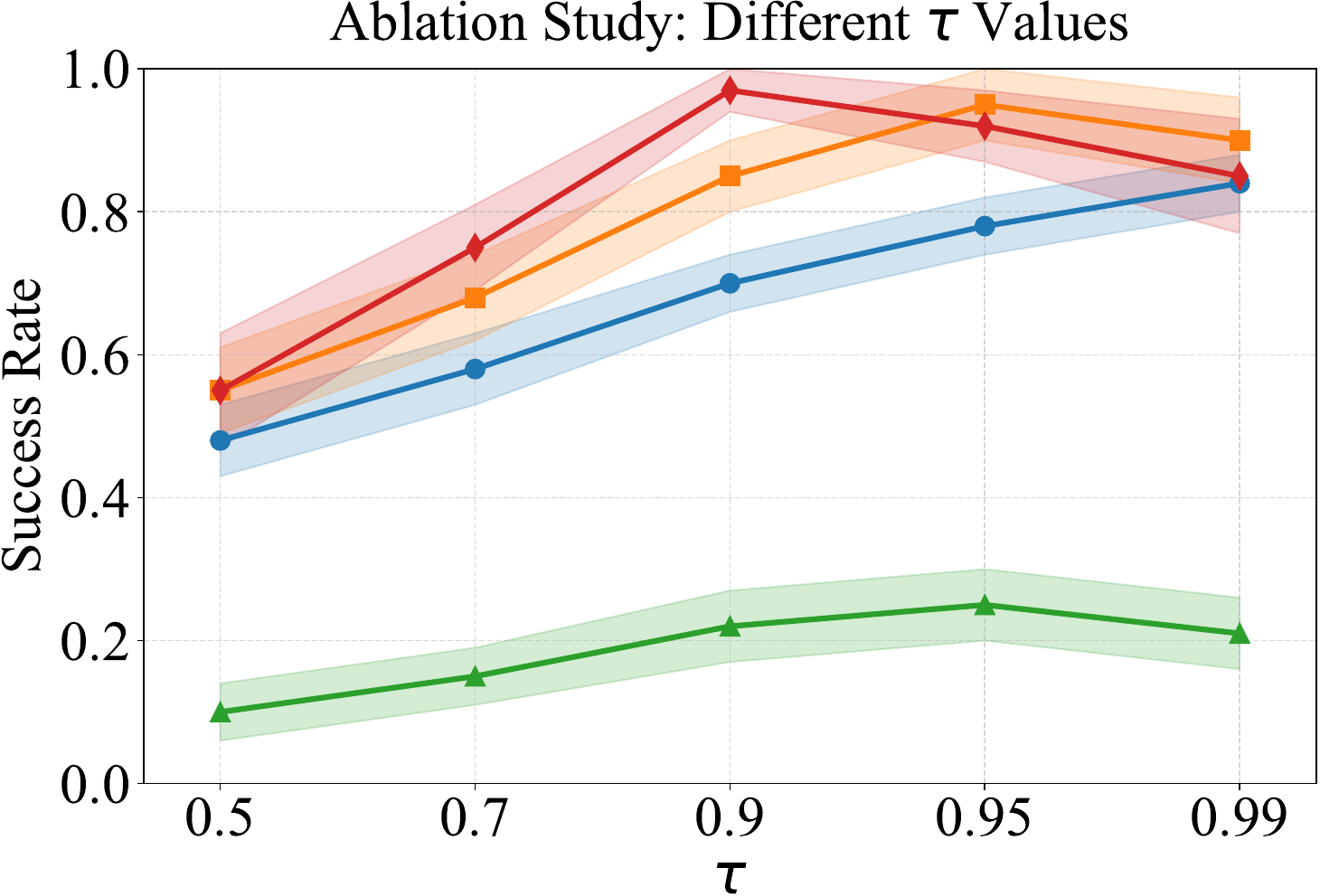}
\caption{Ablation study on the expectile parameter $\tau$ for Q-value prediction.}
\vspace{-0.5cm}
\label{fig:tau_ablation}
\end{figure}
\textbf{Q: How should the expectile parameter $\tau$ be selected?}

\textbf{A:} \cref{fig:tau_ablation} shows monotonic improvement from $\tau=0.5$ to $\tau=0.9$, with optimal performance at $\tau \in [0.9, 0.95]$. This validates \cref{thm:expectile_convergence}: higher $\tau$ reduces the bias term $\epsilon_\tau$ by focusing on upper expectiles, enabling identification of high-Q segments for trajectory stitching that RTG's trajectory-dependence fundamentally cannot provide. However, extreme $\tau$ causes degradation by over-concentrating on too few samples, increasing estimation variance. This aligns with our theoretical analysis where $\epsilon_\tau$ depends on both $\tau$ and coverage $\tilde{c}$. As $\tau$ approaches 1, sensitivity to coverage limitations amplifies. We use $\tau{=}0.9$ for low-coverage (\texttt{play}) and $\tau{=}0.95$ for high-coverage (\texttt{noisy}) data.

\textbf{Q: Is the Hybrid architecture's gain actually architectural, or is it confounded by Q-conditioning, and does Mamba truly adapt its memory to data type?}

\textbf{A:} We answer both jointly. \cref{tab:arch_ablation} fixes NFs Q-conditioning and varies \emph{only} the backbone. On the non-Markovian \texttt{cube-single-play}, Attention-only reaches $74$, Mamba-only $80$, Hybrid $84$. On the Markovian \texttt{cube-single-noisy} the ordering is $60$, $91$, $95$. The Hybrid beats the best single branch by $4$ to $5$ points on \emph{both} regimes, which means the scalar gate captures genuinely complementary specialization rather than interpolating two near-identical branches.

\begin{table}[h]
\vspace{-3pt}
\caption{
\footnotesize
\textbf{Backbone ablation with identical NFs Q-conditioning.} Mean success rate ($\%$) over 4 seeds. \textbf{\textcolor{orange}{Orange}} = best, \underline{underline} = second best.
}
\label{tab:arch_ablation}
\centering
\renewcommand{\arraystretch}{0.9}
\resizebox{0.48\textwidth}{!}
{
\begin{tabular}{lccc}
\toprule
\textbf{Environment} & \textbf{Attention-only} & \textbf{Mamba-only} & \textbf{Hybrid (\textbf{QHyer})} \\
\midrule
\texttt{cube-single-play} (non-Markov.) & $74$ {\tiny $\pm 1$} & \underline{$80$} {\tiny $\pm 2$} & $\mathbf{\textcolor{orange}{84}}$ {\tiny $\pm 4$} \\
\texttt{cube-single-noisy} (Markov.)    & $60$ {\tiny $\pm 3$} & \underline{$91$} {\tiny $\pm 3$} & $\mathbf{\textcolor{orange}{95}}$ {\tiny $\pm 5$} \\
\bottomrule
\end{tabular}
}
\vspace{-3pt}
\end{table}

\cref{tab:dt_stats} then explains \emph{why}, by extracting Mamba's $\Delta_t$ and the learned gate weight from the trained model (cf.\ \cref{fig:dt_viz}). On \texttt{play}, mean $\Delta_t{=}0.38$ and $\bar{A}_t{=}0.92$, the SSM retains about $12$ steps of effective history, and the gate shifts $0.57$ of capacity to attention for global goal-directed reasoning. On \texttt{noisy}, $\Delta_t{=}1.05$ and $\bar{A}_t{=}0.61$, memory collapses to about $3$ steps, and the gate shifts $0.58$ to Mamba. The essential reason is that Mamba's selective mechanism makes $\Delta_t$ a function of the input, so effective memory varies per-token with the local temporal correlation, which convolution-based hybrids (LSDT, DMixer) cannot produce because their receptive field is an architectural constant.

\begin{table}[h]
\vspace{-3pt}
\caption{
\footnotesize
\textbf{$\Delta_t$ and gate statistics on \texttt{cube-single}}, extracted from trained \textbf{QHyer} (50 batches, batch 256). Effective memory length is the largest $k$ with $\prod_{j=1}^{k}\bar{A}_{t-j+1}>0.5$.
}
\label{tab:dt_stats}
\centering
\renewcommand{\arraystretch}{0.9}
\resizebox{0.48\textwidth}{!}
{
\begin{tabular}{lcc}
\toprule
\textbf{Metric} & \textbf{\texttt{play} (non-Markov.)} & \textbf{\texttt{noisy} (Markov.)} \\
\midrule
Mean $\Delta_t$                        & $0.38$     & $1.05$ \\
Std $\Delta_t$                         & $0.12$     & $0.31$ \\
Mean $\bar{A}_t=\exp(\Delta_t\cdot A)$ & $0.92$     & $0.61$ \\
Effective memory (steps)               & $\sim 12$  & $\sim 3$ \\
Gate weight (Attention)                & $0.57$     & $0.42$ \\
Gate weight (Mamba)                    & $0.43$     & $0.58$ \\
\bottomrule
\end{tabular}
}
\vspace{-3pt}
\end{table}
Combined with \cref{fig:qvalue_ablation} (NFs vs.\ CVAE/CRL/No-Q), \cref{fig:ssm_architecture} (backbone with RTG), \cref{tab:arch_ablation} (backbone with NFs), and \cref{tab:dt_stats} (mechanism), the ablations establish that each of \textbf{QHyer}'s two innovations is necessary and the pair is genuinely complementary.
\section{Conclusion}
We presented \textbf{QHyer}, the first sequence modeling framework for non-Markovian offline GCRL that addresses two fundamental limitations: replacing \emph{trajectory-dependent} RTG with \emph{state-dependent} Q-values estimated via Normalizing Flows for effective trajectory stitching, and introducing a Hybrid Attention-Mamba architecture for content-adaptive temporal modeling. Experiments on OGBench and D4RL demonstrate state-of-the-art performance, particularly on non-Markovian datasets. 

\textbf{Limitations and future work.} \textbf{QHyer} remains constrained on \texttt{visual-noisy}, where Markovian behavior neutralizes the non-Markovian modeling advantage and pixel-level NFs density estimation becomes the dominant source of error. Promising future directions include robust visual density estimation and extension of the deterministic-transition theory (\cref{app:assumptions}) to stochastic environments.

\section*{Impact Statement}
This paper presents work whose goal is to advance the field of 
Machine Learning. There are many potential societal consequences 
of our work, none which we feel must be specifically highlighted here.

\bibliography{example_paper}
\bibliographystyle{icml2026}

\clearpage
\appendix
\crefalias{section}{appendix}
\onecolumn

\section{Related Work}
\textbf{Offline Goal-Conditioned RL (GCRL).}
Offline GCRL aims to learn goal-reaching policies from static datasets without environment interaction. Existing approaches can be categorized into several paradigms: goal-conditioned hindsight relabeling and data augmentation~\citep{andrychowicz2017hindsight,lei2025mgda},
hierarchical or subgoal-based learning~\citep{park2023hiql,ahn2025option,giammarino2026goal,zhou2025flattening,lei2025gchr},
graph-based planning~\citep{yoon2024beag,eysenbach2024inference},
metric learning~\citep{wang2023optimal, park2024foundation, myers2024learning,myers2025offline},
dual optimization~\citep{ma2022far, sikchi2024score},
generative modeling~\citep{hong2023diffused, reuss2023goal, jain2024learning,myers2025offline}, and test-time adaption~\citep{opryshko2025test}. However, these methods predominantly assume that the offline data follows Markovian properties---that the optimal action depends solely on the current state and goal. Existing methods struggle on such non-Markovian datasets because they cannot capture the temporal dependencies that govern the behavior policy's decisions. In contrast, \textbf{QHyer} explicitly models these dependencies through sequence modeling framework.

\textbf{Normalizing Flows in RL.}
Normalizing flows (NFs) are invertible generative models that enable exact likelihood computation and efficient sampling~\citep{dinh2014nice,dinh2017density,kingma2018glow}. Recent work has demonstrated their effectiveness in RL for policy modeling~\citep{singh2020parrot,ward2019improving,chao2024maximum} and Q-function estimation. \citet{chao2024maximum} propose Energy-Based Normalizing Flows (EBFlow) that unify policy evaluation and improvement into a single objective for maximum entropy RL, enabling exact soft value function calculation without Monte Carlo approximation. \citet{brahmanage2023flowpg} leverage NFs to learn invertible mappings between feasible action spaces and Gaussian latent spaces for action-constrained policy gradient methods. 

For offline settings, \citet{akimov2022let} use NFs-based action encoders to construct conservative action spaces, addressing distributional shift without explicit regularization. Notably, \citet{ghugare2025normalizing} show that NFs can serve as Q-functions in GCRL by modeling the discounted state occupancy distribution, achieving strong performance on offline GCRL benchmarks with a simple feedforward architecture. However, their approach cannot capture temporal dependencies in non-Markovian datasets. Our work integrates NFs-based Q-estimation into a sequence modeling framework, enabling both accurate value estimation and temporal dependency modeling.

\textbf{Sequence Modeling in Offline RL.}
Decision Transformer (DT)~\citep{chen2021decision} reformulates offline RL as conditional sequence modeling, where actions are generated conditioned on desired returns and past states. This paradigm has spurred extensive research, which can be broadly categorized into two directions: \emph{value-enhanced methods} and \emph{architectural innovations}.

\emph{Value-enhanced methods} integrate reinforcement learning principles to address DT's fundamental limitation in stitching sub-optimal trajectories~\citep{brandfonbrener2022does}. For instance, Q-learning Decision Transformer (QDT)~\citep{yamagata2023q} employs dynamic programming for optimal path synthesis. Critic-Guided Decision Transformer (CGDT)~\citep{wang2024critic} incorporates a value-based critic to align expected returns with target returns. Q-value Regularized Transformer (QT)~\citep{hu2024qvalue} introduces explicit Q-value regularization to tackle long-horizon and sparse-reward tasks. Reinformer~\citep{zhuang2024reinformer} utilizes expectile regression for maximizing returns, while Value-Guided Decision Transformer (VDT)~\citep{zheng2025valueguided} leverages value functions for advantage-weighted behavior regularization. These methods primarily employ TD-learning for Q-value estimation and use value functions as auxiliary losses or regularizers. In contrast, QHyer estimates Q-values via Normalizing Flows with Monte Carlo learning and directly uses them as conditioning tokens to replace RTG.

\emph{Architectural innovations} aim to more effectively capture the heterogeneous temporal patterns present in offline datasets. Elastic Decision Transformer (EDT)~\citep{wu2023elastic} enables adaptive history length selection to facilitate stitching. Graph Decision Transformer (GDT)~\citep{hu2023graph} structures input sequences as causal graphs with relation-enhanced attention mechanisms. Decision Convformer (DC)~\citep{kim2024decision} replaces attention with causal convolution filters to model local, Markovian associations efficiently. Decision Mamba (DMamba)~\citep{ota2024decision} substitutes attention with selective state space models for linear-time sequence modeling. Long-Short Decision Transformer (LSDT)~\citep{wang2025longshort} combines attention with dynamic convolution using a fixed capacity ratio, and Decision Mixer (DMixer)~\citep{zheng2025decision} integrates long-term and local features via dynamic token selection. QHyer introduces a Hybrid Attention-Mamba architecture with learnable gating that dynamically allocates capacity, allowing attention to handle global goal-directed planning while Mamba captures local temporal patterns with content-adaptive memory.

Our work deviates from both value-based and architectural innovations methods.
To our knowledge, this is the first work to
unlock the potential of sequence modeling for Offline GCRL.


\section{Notation and Assumptions}
\label{app:assumptions}

\subsection{Notation}
\label{app:notation}

We consider goal-conditioned episodic MDPs with finite horizon $H$. \textbf{Following Reinforced Return-conditioned Supervised Learning (R$^2$CSL)~\citep{liu2025provably}, we assume deterministic transitions}, i.e., given state $s$ and action $a$, the next state $s' = P(s, a)$ is uniquely determined. This ensures that the in-distribution optimal Q-value $Q^\star(s,a,g,h)$ is well-defined as a unique value. Extension to stochastic environments is an important direction for future work.

\begin{center}
\renewcommand{\arraystretch}{1.3}
\begin{tabular}{cl}
\toprule
\textbf{Symbol} & \textbf{Definition} \\
\midrule
\multicolumn{2}{l}{\textit{Spaces and Indices}} \\
$\mathcal{S}, \mathcal{A}, \mathcal{G}$ & State space, action space, goal space \\
$H$ & Episode horizon (total number of stages per episode) \\
$h \in [H] := \{1, \ldots, H\}$ & Stage index (timestep within an episode) \\
$\phi: \mathcal{S} \to \mathcal{G}$ & Goal mapping; in our experiments, $\mathcal{G} \subseteq \mathcal{S}$ \\
\midrule
\multicolumn{2}{l}{\textit{Policies and Distributions}} \\
$\beta$ & Behavior policy that generated the offline dataset \\
$\pi^\star_\beta$ & In-distribution optimal stitched policy (Eq.~\ref{eq:optimal_stitched_policy}) \\
$\hat{\pi}^\star_\mathcal{D}$ & Learned policy from QHyer \\
$d^\beta_h(s)$ & State visitation probability at stage $h$ under policy $\beta$ \\
$d^\beta_{\min}$ & Minimum positive state visitation: $\min_{h,s}\{d^\beta_h(s) \mid d^\beta_h(s) > 0\}$ \\
$c^\star_\beta$ & Distribution mismatch coefficient (\cref{assump:dist_mismatch}) \\
\midrule
\multicolumn{2}{l}{\textit{Q-Values (Key Distinction)}} \\
$Q^\beta(s, a, g)$ & True goal-reaching probability under $\beta$: $p^\beta_+(g \mid s, a)$ \\
$Q^\star(s, a, g, h)$ & In-distribution optimal Q-value: $\max_{\tau: (s_h, a_h)=(s,a)} Q^\beta(\tau, g, h)$ \\
$\hat{Q}^\beta_\theta(s, a, g)$ & NFs estimate of $Q^\beta$, trained via \cref{eq:nf_loss} \\
$\hat{Q}^\tau(s, a, g, h)$ & Expectile regression output on NFs-estimated Q-values \\
$\hat{Q}_\phi(s, g, h)$ & Transformer-predicted Q-value for conditioning (\cref{eq:expectile_q_loss}) \\
\midrule
\multicolumn{2}{l}{\textit{Error Terms}} \\
$\epsilon_{\text{NFs}}$ & NFs estimation MSE (\cref{assump:nf_error}) \\
$\epsilon_\tau$ & Expectile regression bias (Theorem~\ref{thm:expectile_convergence}) \\
$\delta_{\text{approx}}$ & MLE approximation error (\cref{assump:policy_class}) \\
$\tilde{c}$ & Q-value coverage constant (\cref{assump:q_coverage}) \\
\bottomrule
\end{tabular}
\end{center}

\textbf{Q-Value Definition.} For a trajectory $\tau \in \mathcal{T}^\beta$ passing through $(s_h, a_h) = (s, a)$ at stage $h$, the goal-reaching Q-value is:
\begin{equation}
    Q^\beta(\tau, g, h) := p^\beta_+(g \mid s_h, a_h) = (1-\gamma)\sum_{t=0}^{\infty} \gamma^{t} \cdot \mathds{1}[\phi(s_{h+t+1}) = g],
\end{equation}
where in deterministic environments, this reduces to the discounted indicator of whether the trajectory reaches goal $g$.

\textbf{In-Distribution Optimal Q-Value:}
\begin{equation}
    Q^\star(s, a, g, h) := \max_{\tau \in \mathcal{T}^\beta: (s_h, a_h) = (s, a)} Q^\beta(\tau, g, h).
    \label{eq:optimal_q_def}
\end{equation}

\textbf{Optimal Stitched Policy:}
\begin{equation}
    \pi^\star_\beta(a \mid s, g, h) := P_\beta(a \mid s, g, h, Q^\star(s, a, g, h)).
    \label{eq:optimal_stitched_policy}
\end{equation}

\textbf{Performance Metric:}
\begin{equation}
    J(\pi) := \mathbb{E}_{s_1 \sim \rho, g \sim p(g)}[V^\pi_1(s_1, g)],
\end{equation}
where $V^\pi_h(s, g) := \mathbb{E}_\pi[\sum_{t=h}^{H} r(s_t, a_t, g) \mid s_h = s]$ and $r(s, a, g) = \mathds{1}[\phi(s') = g]$.

\subsection{Assumptions}
\label{app:assumptions_detail}

\begin{assumption}[Deterministic Environment]
\label{assump:deterministic}
The transition dynamics $P: \mathcal{S} \times \mathcal{A} \to \mathcal{S}$ is deterministic, i.e., given $(s, a)$, the next state $s' = P(s, a)$ is unique. This is standard in goal-conditioned RL theory~\citep{ogbench_park2025} and holds approximately in robotic manipulation tasks.
\end{assumption}
\begin{remark}[Scope of Assumption~\ref{assump:deterministic}]
\label{rem:determinism_scope}
Assumption~\ref{assump:deterministic} constrains the \emph{transition dynamics} $P(s'\mid s,a)$, not the behavior policy $\beta$. This is compatible with all our experimental settings. OGBench runs deterministic MuJoCo dynamics even for \texttt{noisy} datasets, where the "noise" is Gaussian perturbation of $\beta$ rather than of $P$, and D4RL mazes likewise use deterministic $P$. Non-Markovian \texttt{play} data corresponds to a history-dependent $\beta(a_t\mid s_t, h_{<t})$ over a deterministic MDP. QHyer's sequence modeling targets exactly this behavior-policy non-Markovianness, while Theorems~\ref{thm:expectile_convergence} and~\ref{thm:convergence} analyze stitching on the underlying MDP. The assumption matches R$^2$CSL~\citep{liu2025provably} and is standard in the offline GCRL theory literature. Extension to stochastic $P$ is a genuine open problem that we flag in the conclusion.
\end{remark}

\begin{assumption}[Policy Class Regularity]
\label{assump:policy_class}
The policy class $\Pi$ satisfies:
\begin{enumerate}
    \item $|\Pi| < \infty$ (can be relaxed to finite covering number).
    \item For all $(a, s, g, h, Q) \in \mathcal{A} \times \mathcal{S} \times \mathcal{G} \times [H] \times [0,1]$ and $\pi \in \Pi$: $|\log \pi(a \mid s, g, h, Q)| \leq c$.
    \item $\min_{\pi \in \Pi} L(\pi) \leq \delta_{\text{approx}}$, where $L(\pi) := \mathbb{E}_{(s,g,Q) \sim P_\beta}[D_{\text{KL}}(P_\beta(\cdot \mid s, g, h, Q) \| \pi(\cdot \mid s, g, h, Q))]$.
\end{enumerate}
\end{assumption}

\begin{assumption}[Q-Value Coverage]
\label{assump:q_coverage}
For each $(s, a, g, h)$ in the support of $\beta$, define:
\begin{equation}
    \mathcal{T}_\mathcal{D}(s, a, g, h) := \{k \in [N] : (s^k_h, a^k_h) = (s, a)\}.
\end{equation}
For trajectory $k$, let $Q^k_h(g) := Q^\beta(\tau^k, g, h)$ be the empirical goal-reaching probability computed via hindsight relabeling. There exists $\tilde{c} \in (0, 1]$ such that:
\begin{equation}
    \frac{|\{k \in \mathcal{T}_\mathcal{D}(s, a, g, h) : Q^k_h(g) = Q^\star(s, a, g, h)\}|}{|\mathcal{T}_\mathcal{D}(s, a, g, h)|} \geq \tilde{c}.
\end{equation}
\textit{Interpretation}: At least $\tilde{c}$-fraction of trajectories through $(s, a)$ achieve the optimal Q-value. Under \cref{assump:deterministic}, $Q^\star$ is well-defined as the maximum over a finite set of deterministic outcomes.
\end{assumption}

\begin{assumption}[Distribution Mismatch]
\label{assump:dist_mismatch}
There exists $c^\star_\beta > 0$ such that $d^{\star,\beta}_h(s) / d^\beta_h(s) \leq c^\star_\beta$ for all $(h, s) \in [H] \times \mathcal{S}^\beta_h$.
\end{assumption}

\begin{assumption}[Bounded Q-Values]
\label{assump:bounded_q}
$Q^\beta(s, a, g) \in [0, 1]$ for all $(s, a, g)$, since it represents a probability.
\end{assumption}

\begin{assumption}[NFs Estimation Error]
\label{assump:nf_error}
The NFs estimator satisfies:
\begin{equation}
    \mathbb{E}_{(s,a,g) \sim d^\beta_h}\left[(Q^\beta(s, a, g) - \hat{Q}^\beta_\theta(s, a, g))^2\right] \leq \epsilon_{\text{NFs}}, \quad \forall h \in [H].
\end{equation}
\end{assumption}

\begin{assumption}[Policy Lipschitz Continuity]
\label{assump:lipschitz}
For any $(s, g, h)$, $\pi \in \Pi$, and $Q_1, Q_2 \in [0, 1]$:
\begin{equation}
    \text{TV}(\pi(\cdot \mid s, g, h, Q_1) \| \pi(\cdot \mid s, g, h, Q_2)) \leq L_\pi |Q_1 - Q_2|.
\end{equation}
\end{assumption}

\begin{assumption}[Expectile Lipschitz Stability]
\label{assump:expectile_lipschitz}
Let $\mathcal{E}_\tau(\{Q_k\})$ denote the $\tau$-expectile of samples $\{Q_k\}$. For any two sample sets $\{Q_k\}$ and $\{\tilde{Q}_k\}$ with $|Q_k - \tilde{Q}_k| \leq \epsilon$ for all $k$:
\begin{equation}
    |\mathcal{E}_\tau(\{Q_k\}) - \mathcal{E}_\tau(\{\tilde{Q}_k\})| \leq L_Q \cdot \epsilon,
\end{equation}
where $L_Q \leq 1$ is a Lipschitz constant. This holds because the expectile is a weighted average of samples.
\end{assumption}

\section{Proofs of Theoretical Results}
\label{app:proofs}

\subsection{Proof of Theorem~\ref{thm:expectile_convergence}}
\label{app:proof_expectile}

\begin{proof}
We prove convergence of expectile regression to the in-distribution optimal Q-value, accounting for NFs estimation error.

\textbf{Problem Setup.} Fix $(s, a, g, h)$. Let $\{Q_1, \ldots, Q_K\}$ be the true Q-values across $K := |\mathcal{T}_\mathcal{D}(s, a, h)|$ trajectories. In practice, we observe NFs estimates $\{\hat{Q}_1, \ldots, \hat{Q}_K\}$ where $\hat{Q}_k = \hat{Q}^\beta_\theta(s^k_{h+1}, g)$.
The expectile loss is $L_\tau^2(u) := |\tau - \mathds{1}(u < 0)| \cdot u^2$. We define:
\begin{itemize}
    \item $\hat{Q}^{\tau,\text{oracle}} := \argmin_{\tilde{Q}} \sum_k L_\tau^2(Q_k - \tilde{Q})$ — expectile on \textbf{true} Q-values
    \item $\hat{Q}^\tau := \argmin_{\tilde{Q}} \sum_k L_\tau^2(\hat{Q}_k - \tilde{Q})$ — expectile on \textbf{NFs-estimated} Q-values
\end{itemize}
Our goal is to bound $|Q^\star - \hat{Q}^\tau|$.

\textbf{First, Decomposition via Triangle Inequality.}
\begin{equation}
    |Q^\star - \hat{Q}^\tau| \leq \underbrace{|Q^\star - \hat{Q}^{\tau,\text{oracle}}|}_{\text{(A) Expectile bias on true Q}} + \underbrace{|\hat{Q}^{\tau,\text{oracle}} - \hat{Q}^\tau|}_{\text{(B) NFs error propagation}}.
    \label{eq:q_error_decomp}
\end{equation}

\textbf{Second, Bounding Term (A) — Expectile Bias.}
From the first-order condition, the expectile satisfies:
\begin{equation}
    \hat{Q}^{\tau,\text{oracle}} = \frac{(1-\tau) n_- \bar{Q}_- + \tau n_+ \bar{Q}_+}{(1-\tau) n_- + \tau n_+},
\end{equation}
where $n_+ = |\{k: Q_k \geq \hat{Q}^{\tau,\text{oracle}}\}|$, $n_- = K - n_+$, and $\bar{Q}_\pm$ are conditional means.

\textit{Case 1}: If $\hat{Q}^{\tau,\text{oracle}} = Q^\star$, then $|Q^\star - \hat{Q}^{\tau,\text{oracle}}| = 0 \leq \epsilon_\tau$ trivially.

\textit{Case 2}: If $\hat{Q}^{\tau,\text{oracle}} < Q^\star$ (generic case). Since $\hat{Q}^{\tau,\text{oracle}}$ is a convex combination:
\begin{equation}
    Q_{\min} \leq \hat{Q}^{\tau,\text{oracle}} \leq Q^\star.
\end{equation}

By \cref{assump:q_coverage}, at least $\tilde{c}$-fraction achieve $Q^\star$. Using Hoeffding's inequality, with high probability, $n^\star := |\{k: Q_k = Q^\star\}| \geq \tilde{c}K/2$. Since $\hat{Q}^{\tau,\text{oracle}} < Q^\star$, all $Q^\star$-samples are in the "above" group: $n_+ \geq \tilde{c}K/2$.

Worst-case analysis with $\bar{Q}_+ = Q^\star$ and $\bar{Q}_- = Q_{\min}$:
\begin{align}
    Q^\star - \hat{Q}^{\tau,\text{oracle}} &= \frac{(1-\tau) n_- (Q^\star - Q_{\min})}{(1-\tau) n_- + \tau n_+} \\
    &\leq \frac{(1-\tau)(1 - \tilde{c}/2)K (Q^\star - Q_{\min})}{(1-\tau)(1 - \tilde{c}/2)K + \tau (\tilde{c}/2)K} \\
    &= \frac{(1-\tau)(Q^\star - Q_{\min})}{\tau \cdot \tilde{c}/2 + (1-\tau)(1 - \tilde{c}/2)} =: \epsilon_\tau.
\end{align}

\textbf{Third, Bounding Term (B) — NFs Error Propagation.}
By \cref{assump:expectile_lipschitz}, the expectile is Lipschitz in its inputs:
\begin{equation}
    |\hat{Q}^{\tau,\text{oracle}} - \hat{Q}^\tau| = |\mathcal{E}_\tau(\{Q_k\}) - \mathcal{E}_\tau(\{\hat{Q}_k\})| \leq L_Q \cdot \max_k |Q_k - \hat{Q}_k|.
\end{equation}

By \cref{assump:nf_error} and Markov's inequality, with high probability:
\begin{equation}
    \max_k |Q_k - \hat{Q}_k| \leq O(\sqrt{\epsilon_{\text{NFs}}}).
\end{equation}

Therefore, we have:
\begin{equation}
    |\hat{Q}^{\tau,\text{oracle}} - \hat{Q}^\tau| \leq L_Q \sqrt{\epsilon_{\text{NFs}}}.
\end{equation}

\textbf{Forth, Combining Terms.}
From \cref{eq:q_error_decomp}, we have:
\begin{equation}
    |Q^\star - \hat{Q}^\tau| \leq \epsilon_\tau + L_Q\sqrt{\epsilon_{\text{NFs}}}.
\end{equation}

\textbf{Final, Sample Complexity.}
We need uniform convergence over all $(s, a, g, h) \in \mathcal{S} \times \mathcal{A} \times \mathcal{G} \times [H]$. By union bound with $|\mathcal{G}|$ goals, we have:

\textit{Condition 1} (sufficient visits): For each $(s, a, h)$, we need $N^{s,a}_h \geq N d^\beta_{\min}/2$. By Hoeffding:
\begin{equation}
    \Pr(N^{s,a}_h < N d^\beta_{\min}/2) \leq \exp(-(d^\beta_{\min})^2 N/2).
\end{equation}

\textit{Condition 2} (coverage concentration): Given $K$ visits, need $n^\star \geq \tilde{c}K/2$.

Setting failure probability $\leq \delta/(2|\mathcal{S}||\mathcal{A}||\mathcal{G}|H)$ for each tuple:
\begin{equation}
    N \geq \max\left\{\frac{2}{(d^\beta_{\min})^2} \log\frac{2|\mathcal{S}||\mathcal{A}||\mathcal{G}|H}{\delta}, \frac{4}{\tilde{c}^2 d^\beta_{\min}} \log\frac{2|\mathcal{S}||\mathcal{A}||\mathcal{G}|H}{\delta}\right\}.
    \label{eq:sample_complexity_expectile}
\end{equation}
\end{proof}

\subsection{Proof of Theorem~\ref{thm:convergence}}
\label{app:proof_convergence}

\begin{proof}
We prove convergence to the optimal stitched policy with careful treatment of distribution mismatch.

\textbf{First, Performance Difference.}
Since rewards are bounded in $[0, 1]$:
\begin{equation}
    J(\pi^\star_\beta) - J(\hat{\pi}^\star_\mathcal{D}) \leq H \cdot \|d^{\pi^\star_\beta} - d^{\hat{\pi}^\star_\mathcal{D}}\|_1.
\end{equation}

\textbf{Second, Simulation Lemma.}
By the simulation lemma~\citep{kakade2001natural}, we have:
\begin{equation}
    \|d^{\pi^\star_\beta} - d^{\hat{\pi}^\star_\mathcal{D}}\|_1 \leq 2 \sum_{h=1}^{H} \mathbb{E}_{s \sim d^{\pi^\star_\beta}_h}\left[\text{TV}(\pi^\star_\beta(\cdot|s) \| \hat{\pi}^\star_\mathcal{D}(\cdot|s))\right].
\end{equation}

\textbf{Third, Policy Difference Decomposition.}
\begin{align}\label{eq:Decomposition}
    &\text{TV}(\pi^\star_\beta(\cdot|s, g, h) \| \hat{\pi}^\star_\mathcal{D}(\cdot|s, g, h)) \nonumber\\
    &\leq \underbrace{\text{TV}(P_\beta(\cdot|s, g, h, Q^\star) \| \hat{\pi}(\cdot|s, g, h, Q^\star))}_{\text{(I) Policy learning error}} + \underbrace{\text{TV}(\hat{\pi}(\cdot|s, g, h, Q^\star) \| \hat{\pi}(\cdot|s, g, h, \hat{Q}^\tau))}_{\text{(II) Q-value error}}.
\end{align}

\textbf{Bounding Term (I) of \cref{eq:Decomposition} with Correct Derivation Order.}
We carefully apply the inequalities in the correct order:

\textit{a}: Apply Jensen's inequality to the expectation of TV, we have:
\begin{equation}
    \mathbb{E}_{s \sim d^{\pi^\star_\beta}_h}[\text{TV}(P_\beta \| \hat{\pi})] \leq \sqrt{\mathbb{E}_{s \sim d^{\pi^\star_\beta}_h}[\text{TV}(P_\beta \| \hat{\pi})^2]}.
\end{equation}

\textit{b}: Apply Pinsker's inequality ($\text{TV}^2 \leq \frac{1}{2}D_{\text{KL}}$), we have:
\begin{equation}
    \sqrt{\mathbb{E}_{s \sim d^{\pi^\star_\beta}_h}[\text{TV}^2]} \leq \sqrt{\frac{1}{2}\mathbb{E}_{s \sim d^{\pi^\star_\beta}_h}[D_{\text{KL}}(P_\beta \| \hat{\pi})]}.
\end{equation}

\textit{c}: Apply distribution mismatch (\cref{assump:dist_mismatch}), we have:
\begin{align}
    \mathbb{E}_{s \sim d^{\pi^\star_\beta}_h}[D_{\text{KL}}] &= \sum_s d^{\pi^\star_\beta}_h(s) \cdot D_{\text{KL}}(P_\beta(\cdot|s) \| \hat{\pi}(\cdot|s)) \\
    &= \sum_s \frac{d^{\pi^\star_\beta}_h(s)}{d^\beta_h(s)} \cdot d^\beta_h(s) \cdot D_{\text{KL}} \\
    &\leq c^\star_\beta \cdot \mathbb{E}_{s \sim d^\beta_h}[D_{\text{KL}}].
\end{align}

Combining a-c, we have:
\begin{equation}
    \mathbb{E}_{s \sim d^{\pi^\star_\beta}_h}[\text{TV}(P_\beta \| \hat{\pi})] \leq \sqrt{\frac{c^\star_\beta}{2} \mathbb{E}_{s \sim d^\beta_h}[D_{\text{KL}}(P_\beta \| \hat{\pi})]} = \sqrt{\frac{c^\star_\beta}{2} L(\hat{\pi})}.
\end{equation}

By MLE analysis~\citep{liu2025provably}, with probability $\geq 1-\delta$:
\begin{equation}
    L(\hat{\pi}) \leq \mathcal{O}\left(\sqrt{c \cdot \frac{\log|\Pi|/\delta}{N}}\right) + \delta_{\text{approx}}.
\end{equation}

Summing over $H$ stages:
\begin{equation}
    \sum_{h=1}^{H} \mathbb{E}[\text{Term (I)}] \leq H \sqrt{\frac{c^\star_\beta}{2}} \left(\mathcal{O}\left(\left(\frac{\log|\Pi|/\delta}{N}\right)^{1/4}\right) + \sqrt{\delta_{\text{approx}}}\right).
\end{equation}

\textbf{Forth, Bounding Term (II) of \cref{eq:Decomposition}.}
By \cref{assump:lipschitz}, we have:
\begin{equation}
    \text{TV}(\hat{\pi}(\cdot|Q^\star) \| \hat{\pi}(\cdot|\hat{Q}^\tau)) \leq L_\pi |Q^\star - \hat{Q}^\tau|.
\end{equation}

From Theorem~\ref{thm:expectile_convergence}, we have:
\begin{equation}
    |Q^\star - \hat{Q}^\tau| \leq \epsilon_\tau + L_Q\sqrt{\epsilon_{\text{NFs}}}.
\end{equation}

Taking expectation under $d^{\pi^\star_\beta}_h$ and applying distribution mismatch for the NFs error term, we have:
\begin{align}
    \mathbb{E}_{s \sim d^{\pi^\star_\beta}_h}[|Q^\star - \hat{Q}^\tau|] &\leq \epsilon_\tau + L_Q \cdot \mathbb{E}_{s \sim d^{\pi^\star_\beta}_h}[\sqrt{\epsilon_{\text{NFs},s}}] \\
    &\leq \epsilon_\tau + L_Q \sqrt{c^\star_\beta \cdot \epsilon_{\text{NFs}}}.
\end{align}

Summing over stages, we have:
\begin{equation}
    \sum_{h=1}^{H} \mathbb{E}[\text{Term (II)}] \leq H \cdot L_\pi \left(\epsilon_\tau + L_Q\sqrt{c^\star_\beta \cdot \epsilon_{\text{NFs}}}\right).
\end{equation}

\textbf{Final Bound.}
Combining, we have:
\begin{align}
    J(\pi^\star_\beta) - J(\hat{\pi}^\star_\mathcal{D}) 
    &\leq 2H \left[\sum_h \text{Term (I)} + \sum_h \text{Term (II)}\right] \\
    &\leq \mathcal{O}\left(\frac{c^\star_\beta H^2}{\tilde{c}}\sqrt{c}\left(\frac{\log|\Pi|/\delta}{N}\right)^{1/4}\right) + \sqrt{c^\star_\beta} H^2 \sqrt{\delta_{\text{approx}}} \\
    &\quad + c^\star_\beta H^2 L_\pi \left(\sqrt{c^\star_\beta \cdot \epsilon_{\text{NFs}}} + \epsilon_\tau\right).
\end{align}

Union bound over events from Theorems~\ref{thm:expectile_convergence} and MLE analysis gives probability $\geq 1 - 2\delta$.
\end{proof}

\subsection{Comparison with R$^2$CSL}
\label{app:comparison}

\begin{center}
\renewcommand{\arraystretch}{1.3}
\begin{tabular}{lll}
\toprule
\textbf{Aspect} & \textbf{R$^2$CSL} & \textbf{QHyer} \\
\midrule
Conditioning signal & RTG: $f(s,h) = \sum_{t=h}^H r_t$ & Q-value: $Q(s,a,g) = p_+^\beta(g|s,a)$ \\
Signal property & \emph{Trajectory-dependent} & \emph{State-dependent} \\
Consistency constraint & Required & \textbf{Not required} \\
Stitching mechanism & Explicit RTG relabeling & Implicit via expectile \\
Estimation method & Quantile regression & Expectile + NFs \\
Additional error term & None & $L_Q\sqrt{c^\star_\beta \epsilon_{\text{NFs}}}$ \\
Sample complexity & $|\mathcal{S}||\mathcal{A}|H$ & $|\mathcal{S}||\mathcal{A}||\mathcal{G}|H$ \\
Convergence rate & $\mathcal{O}(N^{-1/4})$ & $\mathcal{O}(N^{-1/4})$ \\
\bottomrule
\end{tabular}
\end{center}

\section{\textbf{QHyer} Algorithm Details}
\label{app:alg}

This section describes the architecture, training, and inference procedures of \textbf{QHyer}. The overall structure is depicted in \cref{qhyer_overview}, and the complete algorithm is summarized in \cref{alg:qhyer}.

\begin{figure*}[h]
\centering
\centerline{\includegraphics[width=0.9\textwidth]{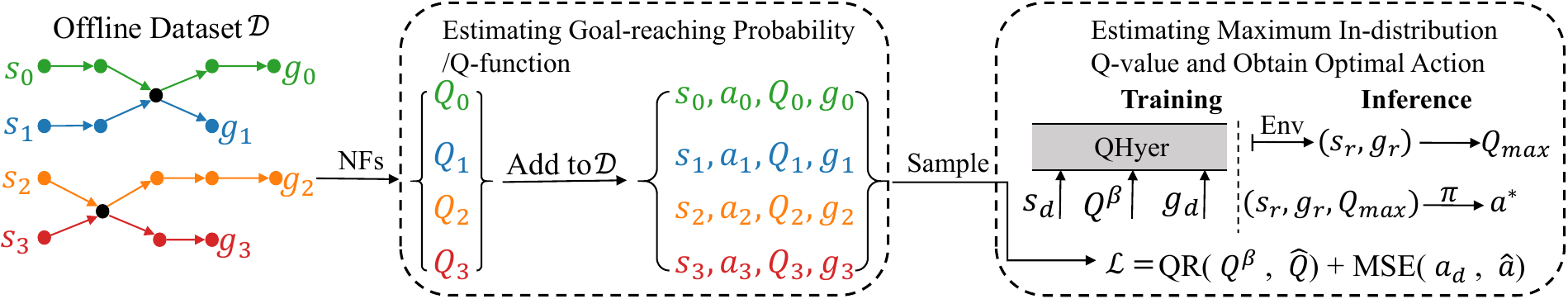}}
\caption{Overview of \textbf{QHyer}. \textbf{Left:} Offline dataset $\mathcal{D}$ with state-goal-action tuples. \textbf{Middle:} NFs-based critic estimates $Q^\beta_\theta(s_t,a_t,g) = \log p_\theta(g|s_t,a_t)$ via an SA-Encoder and RealNVP. \textbf{Right:} The Hybrid Attention-Mamba actor predicts $\hat{Q}(s_t,g)$ via expectile regression and outputs action $\hat{a}_t$ conditioned on the predicted maximum Q-value.}
\label{qhyer_overview}
\end{figure*}

\paragraph{Model Architecture.}
The input sequence follows the format $\langle \tilde{Q}_t, sg_t, a_t \rangle$ where $sg_t = [s_t; g]$ denotes state-goal concatenation~\citep{schaul2015universal}, and $\tilde{Q}_t$ is the normalized Q-value computed from the NFs-based critic (\cref{eq:q_nf}):
\begin{equation}\label{eq:app_q_normalize}
    \tilde{Q}_t = Q^\beta_\theta(s_t, a_t, g) / (\overline{|Q^\beta|} + \delta),
\end{equation}
where $Q^\beta_\theta(s_t, a_t, g) = \log p_\theta(g | s_t, a_t)$ is the behavior Q-value estimated by NFs, $\overline{|Q^\beta|}$ denotes the mean absolute Q-value over the batch, and $\delta$ is a small constant for numerical stability. At timestep $t$, the model takes a context window of length $K$:
\begin{align*}
    \textbf{Input: } & \langle \tilde{Q}_{t-K+1}, sg_{t-K+1}, a_{t-K+1}, \ldots, \tilde{Q}_t, sg_t, a_t \rangle \\
    \textbf{Output: } & \langle \hat{Q}_{t-K+1}, \hat{a}_{t-K+1}, \Box, \ldots, \hat{Q}_t, \hat{a}_t, \Box \rangle
\end{align*}
The NF critic consists of an SA-Encoder that maps $(s_t, a_t)$ to a latent representation, followed by a RealNVP~\citep{dinh2017density} that computes $Q^\beta_\theta(s_t,a_t,g)$. The Hybrid Attention-Mamba backbone processes tokens through $N$ transformer blocks with learnable attention-Mamba gating as described in Section~\ref{sec:obstacle2}.

\paragraph{Q-Conditioned Policy Learning.}
Unlike prior Q-enhanced supervised learning methods that incorporate Q-values into loss functions, we use Q-values as \textbf{conditioning tokens} input to the policy network. This design enables the policy to explicitly leverage Q-value signals for action selection during both training and inference. The total loss is defined in \cref{eq:total_loss}, combining the NFs-based critic loss (\cref{eq:nf_loss}), behavior cloning loss (\cref{eq:bc_loss}), and expectile regression loss (\cref{eq:expectile_q_loss}).

\begin{algorithm}[h]
\caption{\textbf{QHyer} Training and Inference}
\label{alg:qhyer}
\begin{algorithmic}[1]
\STATE \textbf{Input:} Offline dataset $\mathcal{D}$, context length $K$, expectile $\tau$, noise std $\sigma$, stability constant $\delta$
\STATE \textbf{Initialize:} SA-Encoder $\psi$, NFs-based critic $\theta$, Hybrid Attention-Mamba actor $\phi$
\STATE
\STATE \textcolor{orange}{// Joint Training (end-to-end)}
\FOR{each training iteration}
    \STATE Sample batch of trajectories $\{(s_t, a_t, g)\}$ from $\mathcal{D}$
    \STATE \textcolor{gray}{// Step 1: Compute behavior Q-values via NF critic (\cref{eq:q_nf})}
    \STATE $\text{repr}_t \leftarrow \text{SA-Encoder}_\psi(s_t, a_t)$
    \STATE $Q^\beta_\theta(s_t,a_t,g) \leftarrow \log p_\theta(g + \epsilon \mid \text{repr}_t)$ \hfill \textcolor{gray}{// $\epsilon \sim \mathcal{N}(0, \sigma^2 I)$, denoising}
    \STATE $\tilde{Q}_t \leftarrow Q^\beta_\theta(s_t,a_t,g) / (\overline{|Q^\beta|} + \delta)$ \hfill \textcolor{gray}{// normalize (\cref{eq:app_q_normalize})}
    \STATE \textcolor{gray}{// Step 2: Forward through Q-conditioned policy}
    \STATE Construct input: $\mathbf{x} = \langle \tilde{Q}_{t-K+1}, sg_{t-K+1}, a_{t-K+1}, \ldots, \texttt{stopgrad}(\tilde{Q}_t), sg_t \rangle$
    \STATE $\hat{Q}(s_t,g), \hat{a}_t \leftarrow \text{QHyer}_\phi(\mathbf{x})$
    \STATE \textcolor{gray}{// Step 3: Compute losses (\cref{eq:nf_loss,eq:bc_loss,eq:expectile_q_loss})}
    \STATE $\mathcal{L}_{\text{NFs}} \leftarrow -\mathbb{E}[\log p_\theta(g|s_t,a_t)]$
    \STATE $\mathcal{L}_{\text{BC}} \leftarrow -\mathbb{E}[\log \pi_\phi(a_t | \tilde{Q}_t, [s_t; g])]$
    \STATE $\mathcal{L}_Q \leftarrow \mathbb{E}[L^2_\tau(\hat{Q}(s_t,g) - Q^\beta_\theta(s_t,a_t,g))]$
    \STATE Update $(\psi, \theta, \phi)$ by $\nabla(\lambda_{\text{critic}}\mathcal{L}_{\text{NFs}} + \lambda_{\text{BC}}\mathcal{L}_{\text{BC}} + \lambda_Q\mathcal{L}_Q)$
\ENDFOR
\STATE
\STATE \textcolor{orange}{// Inference: Trajectory Stitching via Q-Conditioning}
\STATE Initialize buffers: $\mathbf{sg}, \mathbf{a}, \mathbf{Q} \leftarrow \mathbf{0}$
\STATE $s_0, g \leftarrow \text{Env.reset}()$
\FOR{$t = 0, 1, 2, \ldots$ \textbf{until} done}
    \STATE $sg_t \leftarrow [s_t; g]$ \hfill \textcolor{gray}{// concatenate state and goal}
    \STATE Retrieve context window: $(\mathbf{sg}, \mathbf{a}, \mathbf{Q})_{t-K+1:t}$
    \STATE \textcolor{gray}{// Stage 1: Predict maximum in-distribution Q-value}
    \STATE $\hat{Q}(s_t, g) \leftarrow \text{QHyer}_\phi^Q(\mathbf{sg}_{t-K+1:t}, \mathbf{a}_{t-K+1:t-1}, \mathbf{Q}_{t-K+1:t-1})$
    \STATE Update buffer: $\mathbf{Q}_t \leftarrow \hat{Q}(s_t, g)$
    \STATE \textcolor{gray}{// Stage 2: Predict action conditioned on maximum Q-value}
    \STATE $\hat{a}_t \leftarrow \text{QHyer}_\phi^a(\mathbf{sg}_{t-K+1:t}, \mathbf{a}_{t-K+1:t-1}, \mathbf{Q}_{t-K+1:t})$
    \STATE $s_{t+1} \leftarrow \text{Env.step}(\text{clip}(\hat{a}_t, -1, 1))$
    \STATE Update buffer: $\mathbf{a}_t \leftarrow \hat{a}_t$
\ENDFOR
\end{algorithmic}
\end{algorithm}

In practice, we apply a denoising trick to the NFs-based critic by adding Gaussian noise $\epsilon \sim \mathcal{N}(0, \sigma^2 I)$ to goals during training, which improves density estimation quality. The expectile loss $L^2_\tau(\cdot)$ is defined in \cref{eq:expectile_loss_base}, where $\tau \in (0.5, 1)$ controls the asymmetry. When $\tau > 0.5$, overestimation is penalized more heavily, driving the learned $\hat{Q}(s_t,g)$ toward the maximum of $Q^\beta_\theta(s_t,a_t,g)$ over all actions in the dataset.

\paragraph{Inference: Trajectory Stitching via Q-Conditioning.}
In classical Q-learning, the optimal value function $Q^*$ derives the optimal action given the current state. In our framework, we leverage the maximum Q-value $\hat{Q}(s_t,g)$ to help the policy select near-optimal actions. Note that $\hat{Q}(s_t,g)$ depends only on state and goal because action is marginalized by the expectile regression. The inference pipeline follows:
\begin{equation}\label{eq:app_inference}
\overset{\text{\textcolor{blue}{Env}}}{\longmapsto} \left(s_0,g\right) \xrightarrow{\hat{Q}} \hat{Q}(s_0,g) \xrightarrow{\pi_\phi} a_0 \xrightarrow{\text{\textcolor{blue}{Env}}} \left(s_1,g\right) \xrightarrow{\hat{Q}} \hat{Q}(s_1,g) \xrightarrow{\pi_\phi} a_1 \rightarrow \cdots
\end{equation}

At each timestep $t$, \textbf{QHyer} performs two-stage autoregressive generation as shown in \cref{alg:qhyer}:
\begin{enumerate}
    \item \textbf{Predict maximum Q-value:} Given the historical context window, the model first predicts $\hat{Q}(s_t,g)$ which represents the maximum achievable goal-reaching probability from the current state.
    \item \textbf{Predict action:} Conditioned on the predicted $\hat{Q}(s_t,g)$, the model then outputs the action $\hat{a}_t$ that achieves this maximum Q-value.
\end{enumerate}

When the initial state and goal correspond to different trajectories in the dataset, which is precisely the scenario requiring trajectory stitching, our model outputs effective actions by leveraging the Q-conditioned policy.
\section{Baseline Details}\label{sc:baseline details}
We compare our approach with a wide variety of baselines, including sequence modeling, TD-based RL methods and Offline GCRL methods.
Particularly, we include the following methods:
\begin{itemize}
\item For sequence modeling methods, we include Decision Transformer (DT) \citep{chen2021decision}, Elastic Decision Transformer (EDT) \citep{wu2023elastic}, Graph Decision Transformer (GDT) \citep{hu2023graph}, Q-learning Decision Transformer (QDT) \citep{yamagata2023q}, Critic-Guided Decision Transformer (CGDT) \citep{wang2024critic}, Reinforced Transformer (Reinformer) \citep{zhuang2024reinformer}, Decision ConvFormer (DC) \citep{kim2024decision}, Decision Mamba (DMamba) \citep{ota2024decision}, Q-value Regularized Transformer (QT) \citep{hu2024qvalue}, Long-Short Decision Transformer (LSDT) \citep{wang2025longshort}, Decision Mixer (DMixer) \citep{zheng2025decision}, Value-guided Decision Transformer (VDT) \citep{zheng2025valueguided}. 
DT is a classic sequence modeling method that utilizes a Transformer architecture to model and reproduce sequences from demonstrations,
integrating a goal-conditioned policy to convert Offline RL into a supervised learning task. Despite
its competitive performance in Offline RL tasks, the DT falls short in achieving trajectory stitching \citep{brandfonbrener2022does}. 
GDT extends DT by explicitly structuring the input sequence as a causal graph and incorporating relation-enhanced attention to better model the dependencies between states, actions, and rewards.
EDT is a variant of DT that lies in its ability to determine the optimal history length to promote trajectory stitching. But it does not incorporate the RL objective that
maximizes returns to enhance the model \citep{zhuang2024reinformer} and its stitching capabilities are limited \citep{kim2024adaptive}. QDT integrates
Dynamic Programming with the DT framework to enhance the optimal path generation ability of
DT. 
CGDT enhances DT by incorporating a value-based critic to align the expected returns of actions with target returns, effectively addressing the inconsistency issues of Return-Conditioned Supervised Learning in stochastic environments and suboptimal datasets.
DC replaces
attention blocks with convolution filters to more efficiently capture local associations.
Reinformer is similar to our work; however, it exhibits limited stitching capabilities due to the absence of $Q$-value, resulting in a significant performance gap compared to TD-based RL methods.
DMamba replaces the attention mechanism in DT with the Mamba selective state space model to achieve linear computational complexity while maintaining sequence modeling capabilities.
QT introduces Q-value regularization to optimize action selection on top of DT and excels in handling long time horizons and sparse reward tasks. LSDT enhances the model structure of DT with a dual-branch architecture (long-term and local features) adept at extracting information within different ranges. DMixer integrates both long-term and local features, and additionally introduces a plug-and-play dynamic token selection mechanism to ensure that the model can adaptively allocate attention to different features based on the specific requirements of each task. VDT leverages value functions to perform advantage-weighting
and behavior regularization on the DT, guiding the policy
toward upper-bound optimal decisions during the offline training phase.
\item For TD-based RL methods, we include Conservative Q-Learning (CQL)~\citep{kumar2020conservative} and Implicit Q-Learning (IQL)~\citep{kostrikov2022offline}.
CQL and IQL are classical offline RL methods that utilize dynamic programming. This trick endows them with stitching properties \citep{cheikhi2023statistical,ghugare2024closing}.
\item For Offline GCRL methods, we include goal-conditioned behavioral cloning
(GCBC)~\citep{ghosh2021learning} , goal-conditioned implicit V-learning (GCIVL) and Q-learning (GCIQL) \citep{kostrikov2022offline},
Quasimetric RL (QRL) \citep{wang2023optimal}, Contrastive RL (CRL) \citep{eysenbach2022contrastive}, and Hierarchical implicit Q-learning
(HIQL) \citep{park2023hiql}. For these baselines, we follow the implementation setup established by OGBench \citep{ogbench_park2025} throughout our experiments. Additionally, we select Subgoal Advantage-Weighted Policy Bootstrapping (SAW) \citep{zhou2025flattening}, Option-aware Temporally Abstracted (OTA) ~\citep{ahn2025option} and Eikonal-Constrained Quasimetric RL
(Eik-QRL) \citep{giammarino2026goal} as our state-of-the-art GCRL baselines. SAW trains a flat policy by directly sampling subgoals from offline datasets through advantage-weighted policy bootstrapping, thereby eliminating the need for complex subgoal generation models, and achieves superior performance on long-horizon, high-dimensional control tasks. OTA employs temporal abstraction to reduce the effective planning horizon, which substantially improves the scalability of high-level policies to long-horizon tasks. Eik-HiQRL overcomes QRL’s dependence on trajectory continuity for local constraints and its struggle to maintain a valid quasimetric structure in high-dimensional, long-horizon tasks by introducing a trajectory-free Eikonal PDE constraint at the high level and a hierarchical policy decomposition.

\end{itemize}
\section{Experiment Details} \label{ap:exp details}
In this section we provide offline datasets details as well as implementation details used for all the
algorithms in our experiments – Offline GCRL Datasets, Normalizing Flows, and \textbf{QHyer}.
\begin{figure}[h]
\begin{center}
    \includegraphics[width=0.8\textwidth,height=3.3cm]{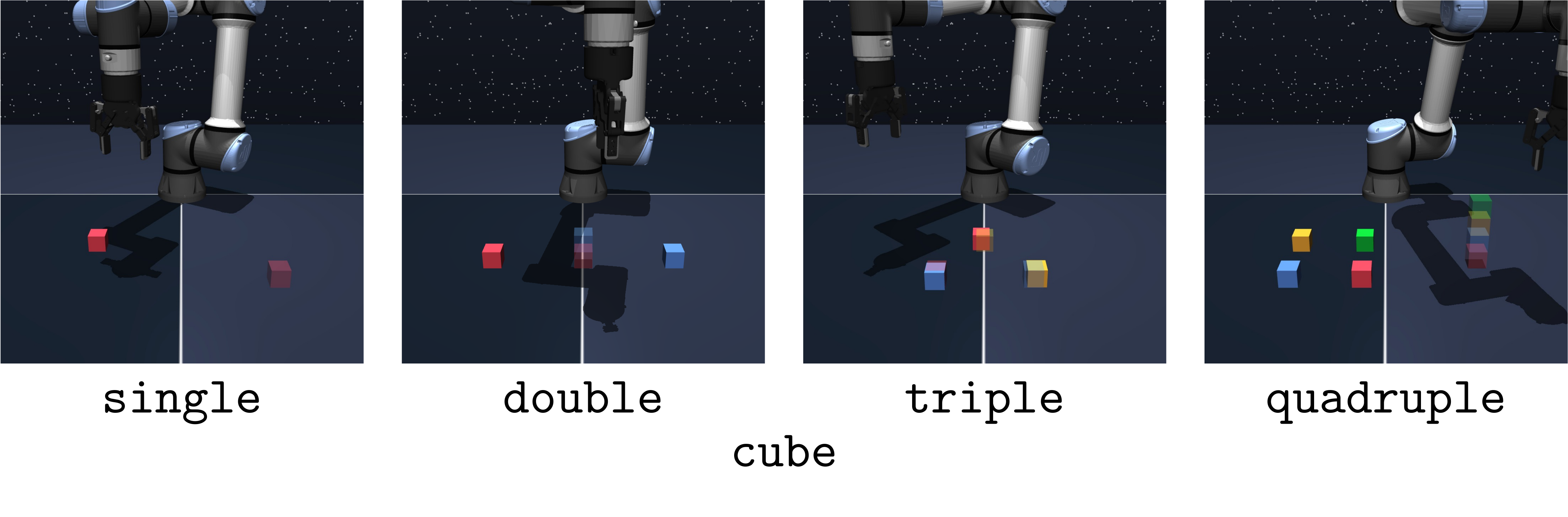}\label{fig:sub1}\hspace{0.3cm}
    \includegraphics[width=0.8\textwidth,height=3.3cm]{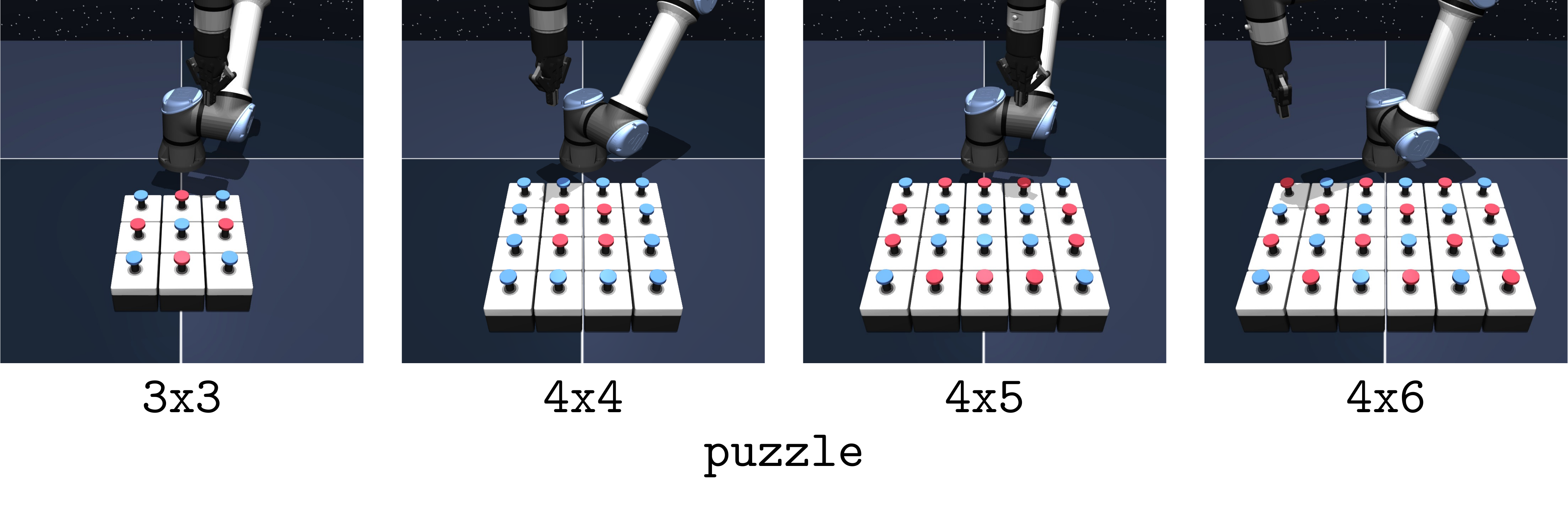}\label{fig:sub2}\hspace{0.3cm} 
    \includegraphics[width=0.6\textwidth,height=3.3cm]{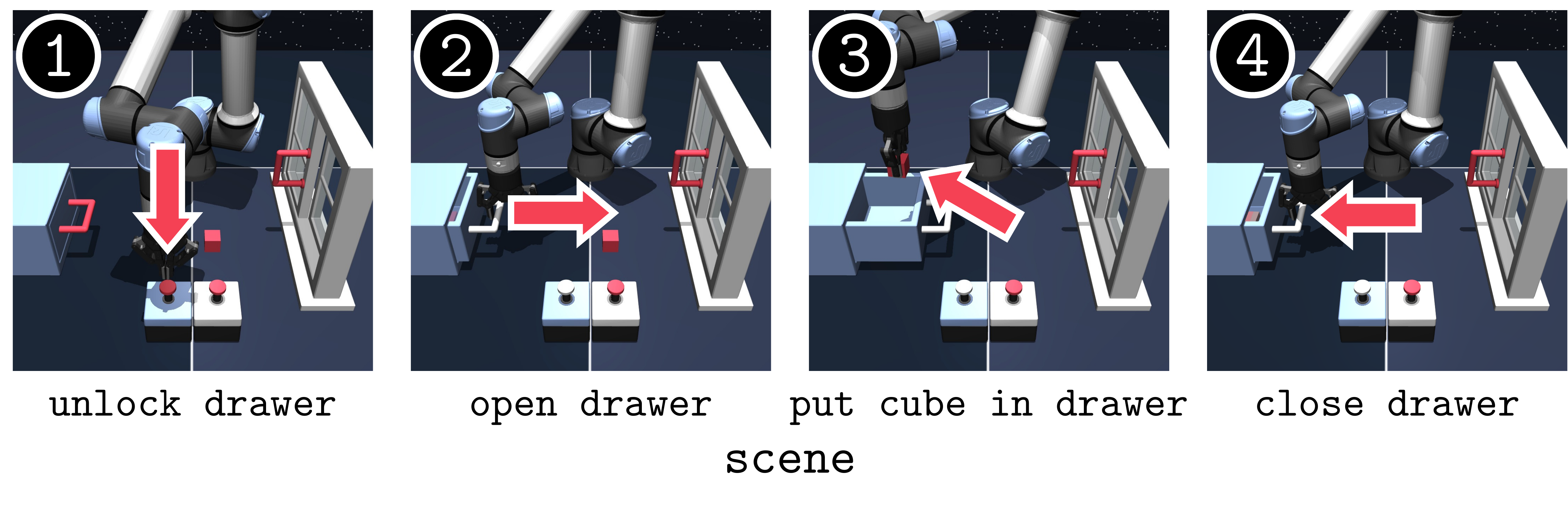}\label{fig:sub3}\hspace{0.3cm}
    \includegraphics[width=0.36\textwidth,height=3.3cm]{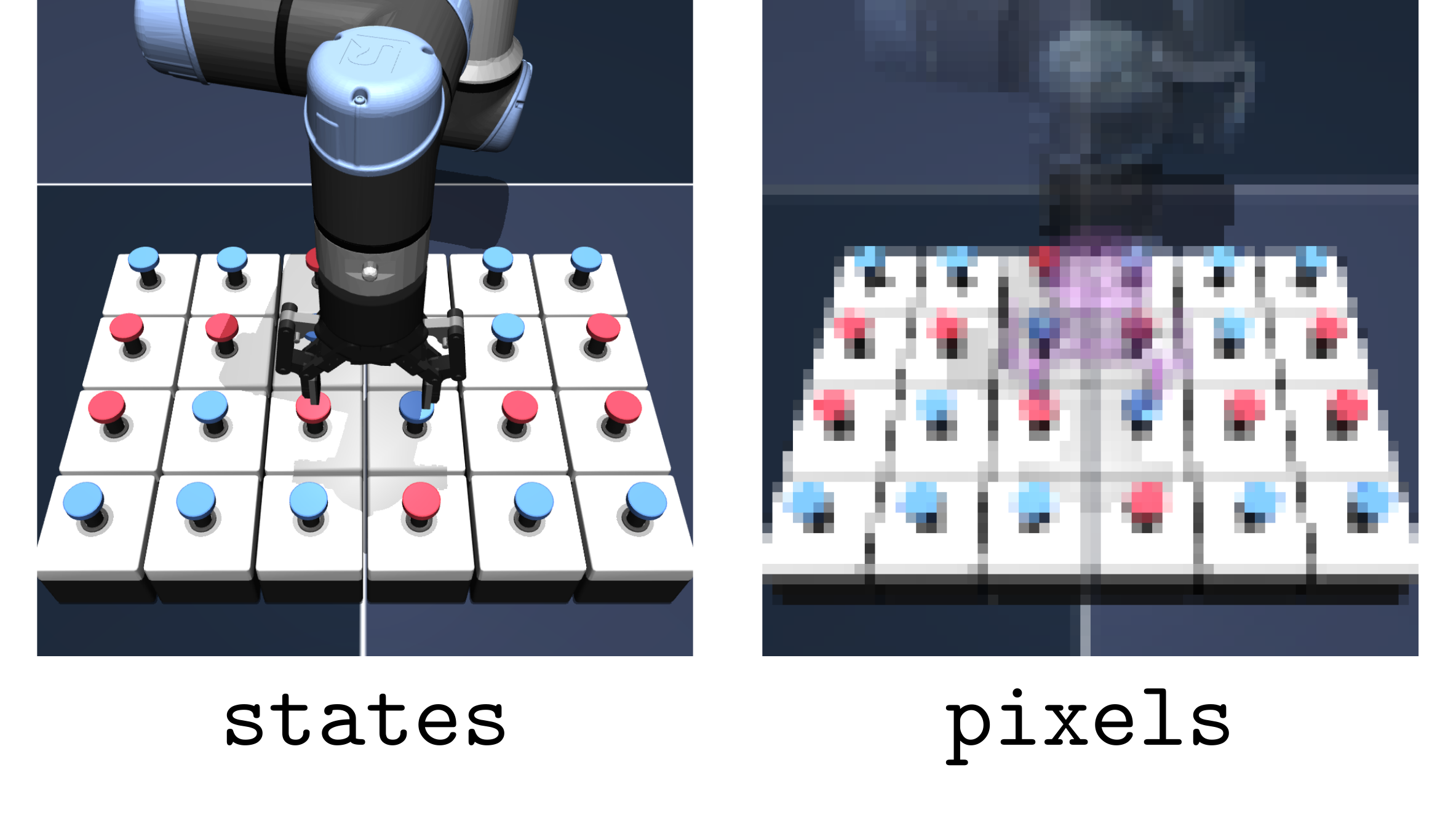}\label{fig:sub4}\\
\end{center}
\caption{
\textbf{GCRL example non-Markovian datasets from Ogbench.} Each Trajectory is limited to travel at most 4 blocks for dataset type stitch, while at inference, the distance between the start and goal can be up to $30$ in the Giant maze.
}
\label{fig:manipulation-dataset}
\vspace{-10pt}
\end{figure}
\subsection{Offline GCRL non-Markovian  Datasets} \label{sc:app_dataset}
We adopt the manipulation suite from \textbf{OGBench}~\citep{ogbench_park2025}, which consists of three robotic manipulation environments based on a 6-DoF UR5e robot arm. These environments are designed to evaluate the agent's capabilities in object manipulation, sequential generalization, and combinatorial generalization.
\begin{itemize}
    \item \texttt{Cube}: This task involves pick-and-place manipulation of cube blocks, where the goal is to arrange cubes into designated configurations. Four variants are provided with different numbers of cubes: \texttt{single}, \texttt{double}, \texttt{triple}, and \texttt{quadruple} (1--4 cubes). At test time, the agent must perform moving, stacking, swapping, or permuting operations on the cube blocks.
    \item \texttt{Scene}: This task is designed to challenge sequential, long-horizon reasoning capabilities. It involves manipulating diverse everyday objects including a cube block, a window, a drawer, and two button locks. The longest evaluation task requires completing up to eight atomic behaviors in sequence.
    \item \texttt{Puzzle}: This task evaluates combinatorial generalization by requiring the agent to solve the ``Lights Out'' puzzle with a robot arm. Four difficulty levels are provided: \texttt{3x3}, \texttt{4x4}, \texttt{4x5}, and \texttt{4x6}, with state spaces containing up to $2^{24} = 16,777,216$ distinct configurations.
\end{itemize}
Visualization examples of these tasks are shown in \cref{fig:manipulation-dataset}. 
For each manipulation environment, OGBench provides two types of datasets with different collection policies:
\begin{itemize}
    \item \textbf{Play datasets} (\texttt{play}): Collected by non-Markovian expert policies with temporally correlated noise, following the ``play data'' paradigm~\citep{lynch2020learning}. This results in smoother, more realistic trajectories that pose additional challenges for standard RL algorithms.
    \item \textbf{Noisy datasets} (\texttt{noisy}): Collected by Markovian expert policies with uncorrelated Gaussian noise. These datasets serve as controlled baselines for ablation studies, allowing researchers to isolate the effects of non-Markovian data collection.
\end{itemize}
In the experiments comparing with related sequence modeling approaches, we adopt the maze navigation tasks from \textbf{D4RL}~\citep{fu2020d4rl}, which provide challenging benchmarks for evaluating offline RL algorithms on undirected, multitask data with sparse rewards. 
\begin{itemize}
    \item \texttt{Maze2D}: This domain is a navigation task requiring a 2D point-mass agent to reach a fixed goal location. Three maze layouts are provided with increasing complexity: \texttt{umaze}, \texttt{medium}, and \texttt{large}. The tasks are designed to test the ability of offline RL algorithms to stitch together previously collected sub-trajectories to find the shortest path to the evaluation goal.
    
    \item \texttt{AntMaze-v2}: This domain replaces the simple 2D ball from Maze2D with a more complex 8-DoF quadrupedal ``Ant'' robot, introducing morphological complexity that mimics real-world robotic navigation tasks. The same three maze layouts (\texttt{umaze}, \texttt{medium}, \texttt{large}) are used, with a sparse 0-1 reward that is activated only upon reaching the goal. Three dataset variants are provided: standard goal-reaching from fixed start locations, ``diverse'' datasets with random start and goal locations, and ``play'' datasets with hand-picked navigation waypoints.
\end{itemize}
Visualization examples are shown in \cref{fig:Antmaze}. A critical characteristic of both \texttt{Maze2D} and \texttt{AntMaze-v2} datasets is that they are collected by non-Markovian policies. The data generation process employs a hierarchical controller: a high-level planner generates sequences of waypoints, which are then followed by a low-level PD controller (for \texttt{Maze2D}) or a trained goal-reaching policy (for \texttt{AntMaze-v2}). Because these controllers maintain internal states to track visited waypoints and update their targets upon reaching intermediate goals, the resulting behavior policies are inherently non-Markovian. This property introduces additional challenges for offline RL algorithms, as the data cannot be accurately modeled by assuming a Markovian behavior policy, potentially causing bias in methods that rely on such assumptions~\citep{fu2020d4rl}.
\begin{figure}[h]
\vspace{-6pt}
\begin{center}
    \includegraphics[width=0.25\textwidth,height=3.3cm]{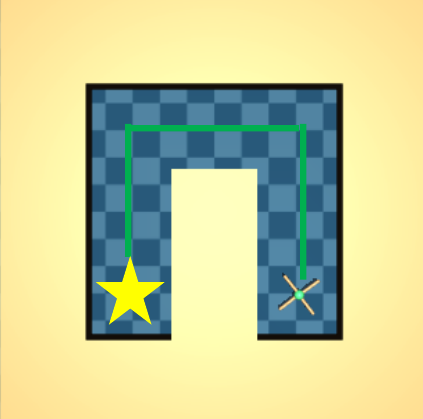}\label{fig:sub5}\hspace{0.3cm}
    \includegraphics[width=0.25\textwidth,height=3.3cm]{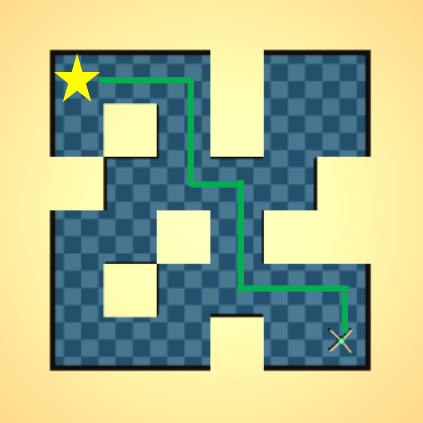}\label{fig:sub6}\hspace{0.3cm} 
    \includegraphics[width=0.25\textwidth,height=3.3cm]{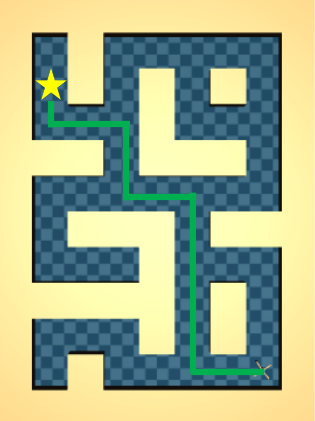}\label{fig:sub7}\\
\end{center}
    \text{~~~~~~~~~~~~~~~~~~~~~~~~~~~~~~~~~~~~~~~~~Umaze~~~~~~~~~~~~~~~~~~~~~~~~~~~~~~~~~~~~~~Medium~~~~~~~~~~~~~~~~~~~~~~~~~~~~~~~~~~~~~~~~~~Large}
  \caption{\textbf{GCRL example non-Markovian datasets from D4RL \citep{fu2020d4rl}:} The \texttt{AntMaze-v2} datasets involve controlling an 8-DoF quadruped to navigate towards a specified goal state. This benchmark requires value propagation to effectively stitch together sub-optimal trajectories from the collected data.}\label{fig:Antmaze}
  \vspace{-6pt}
  \label{fig:Antmaze}
\end{figure}

\subsection{Implementation Details} \label{ap:imp details}
We ran all our experiments on NVIDIA RTX 3090 GPUs with 24GB of memory within an
internal cluster. We use the default configurations in \citet{ogbench_park2025}, with some values modified. In pixel-based environments, following \citet{ogbench_park2025}, we employ n IMPALA-style
encoder to transform images into state tokens. The architecture and training process of the Normalizing Flows are identical to those described in
\citet{ghugare2025normalizing}. 
\begin{table}[h]
\caption{NF Q-value Estimation Time Analysis in \textbf{QHyer}. All timing values are in milliseconds (ms) per batch, averaged over 100 steps (batch size = 256). Experiments conducted on a single NVIDIA RTX 3090 GPU with dual Intel Xeon E5-2620 v4 CPUs @ 2.10GHz.}
\label{table:nf_qvalue_timing}
\centering
\scalebox{0.8}{
\begin{tabular}{lrrrrr}
\toprule
\textbf{Environment} & \textbf{NF Train (ms)} & \textbf{Actor (ms)} & \textbf{Infer-Q (ms)} & \textbf{Infer-A (ms)} & \textbf{NF Ratio} \\
\midrule
\texttt{cube-single-play-v0} & $2.42$ & $3.07$ & $0.007$ & $0.015$ & $28.3\%$ \\
\texttt{cube-double-play-v0} & $2.43$ & $2.99$ & $0.008$ & $0.019$ & $26.7\%$ \\
\texttt{cube-triple-play-v0} & $2.47$ & $3.14$ & $0.008$ & $0.015$ & $28.8\%$ \\
\texttt{cube-quadruple-play-v0} & $2.54$ & $3.12$ & $0.011$ & $0.019$ & $27.0\%$ \\
\texttt{cube-single-noisy-v0} & $2.44$ & $3.06$ & $0.009$ & $0.015$ & $29.3\%$ \\
\texttt{cube-double-noisy-v0} & $2.49$ & $3.07$ & $0.011$ & $0.019$ & $26.9\%$ \\
\texttt{cube-triple-noisy-v0} & $2.42$ & $3.01$ & $0.008$ & $0.014$ & $29.2\%$ \\
\texttt{cube-quadruple-noisy-v0} & $2.55$ & $3.14$ & $0.012$ & $0.018$ & $27.1\%$ \\
\texttt{scene-play-v0} & $2.43$ & $3.04$ & $0.009$ & $0.015$ & $28.8\%$ \\
\texttt{scene-noisy-v0} & $2.43$ & $3.02$ & $0.008$ & $0.020$ & $26.9\%$ \\
\texttt{puzzle-3x3-play-v0} & $2.39$ & $3.01$ & $0.008$ & $0.015$ & $28.7\%$ \\
\texttt{puzzle-4x4-play-v0} & $2.52$ & $3.08$ & $0.010$ & $0.019$ & $27.1\%$ \\
\texttt{puzzle-4x5-play-v0} & $2.48$ & $3.12$ & $0.008$ & $0.015$ & $28.7\%$ \\
\texttt{puzzle-4x6-play-v0} & $2.54$ & $3.10$ & $0.009$ & $0.020$ & $26.8\%$ \\
\texttt{puzzle-3x3-noisy-v0} & $2.45$ & $3.10$ & $0.009$ & $0.015$ & $29.0\%$ \\
\texttt{puzzle-4x4-noisy-v0} & $2.49$ & $3.09$ & $0.009$ & $0.019$ & $26.9\%$ \\
\texttt{puzzle-4x5-noisy-v0} & $2.51$ & $2.95$ & $0.009$ & $0.015$ & $30.1\%$ \\
\texttt{puzzle-4x6-noisy-v0} & $2.44$ & $2.98$ & $0.008$ & $0.021$ & $27.2\%$ \\
\midrule
\textbf{Average} & $2.47$ & $3.06$ & $0.009$ & $0.017$ & $28.0\%$ \\
\bottomrule
\end{tabular}}
\end{table}

Our \textbf{QHyer} implementation draws inspiration from LSDT \citep{wang2025longshort} and Decision Mamba \citep{ota2024decision}.
The state tokens, goal tokens, $Q$-function tokens and action tokens are first processed by different linear layers. 
Then these tokens are fed into the decoder layer to obtain the embedding.
Here the decoder layer is a lightweight implementation from Reinformer \citep{zhuang2024reinformer}.
The context length for the decoder layer is denoted as $K$. 
We employed both the AdamW \citep{loshchilov2017decoupled} optimizers to optimize the total loss, in alignment with the methods outlined in their original papers. The hyperparameter of $L_{QHyer}$ loss is denoted as $\tau$.




\subsection{Hyperparameter Settings}
\label{app:hyperparameters}

\cref{tab:common-hyperparams} summarizes the hyperparameters shared across all experiments. The Hybrid Attention-Mamba architecture uses learnable mixing weights between attention and Mamba branches, with a total hidden dimension of $d_{\text{model}}$. The Normalizing Flow architecture follows \citet{ghugare2025normalizing}. The expectile regression parameter $\tau$ is set according to our theoretical guidance (\cref{thm:expectile_convergence}).

\begin{table}[h]
\centering
\caption{Common hyperparameters shared across all tasks.}
\label{tab:common-hyperparams}
\scalebox{0.8}{
\begin{tabular}{lccc}
\toprule
\textbf{Hyperparameter} & \textbf{OGBench (State)} & \textbf{OGBench (Pixel)} & \textbf{D4RL} \\
\midrule
Training steps & 1M & 500K & 100K \\
Batch size & 1024 & 512 & 256 \\
Optimizer & AdamW & AdamW & AdamW \\
Weight decay & 0.0 & 0.0 & 1e-4 \\
Gradient clipping & 1.0 & 1.0 & 0.25 \\
NF noise std & 0.05 & 0.05 & -- \\
Encoder hidden dim & 1024 & 1024 & -- \\
NF representation size & 64--128 & 64--128 & -- \\
BC weight $\lambda_{\text{BC}}$ & 1.0 & 1.0 & 1.0 \\
Q weight $\lambda_{\text{Q}}$ & 1.0 & 1.0 & 1.0 \\
Expectile $\tau$ & 0.95--0.99 & 0.95--0.99 & 0.90--0.99 \\
State-goal concatenation & True & False & True \\
Image size & -- & 64$\times$64 & -- \\
Image encoder & -- & IMPALA-small & -- \\
Warmup steps & -- & -- & 10000 \\
LR schedule & -- & -- & Cosine \\
\bottomrule
\end{tabular}}
\end{table}

\textbf{Architecture notes.} For both OGBench and D4RL experiments, the Hybrid Attention-Mamba backbone uses \emph{learnable mixing weights} that are automatically optimized during training, eliminating the need for manual tuning of attention-to-Mamba ratios. The total hidden dimension $d_{\text{model}}$ (denoted as \texttt{h\_dim} in OGBench and \texttt{embed\_dim} in D4RL) represents the combined capacity of both branches, with the proportion learned end-to-end via gradient descent.

\cref{tab:ogbench-state-hyperparams} presents the task-specific hyperparameters for OGBench state-based manipulation environments. Following our theoretical analysis (\cref{thm:expectile_convergence}), we set $\tau=0.99$ for play datasets (medium Q-value coverage) and $\tau=0.95$ for noisy datasets (higher coverage due to exploration noise).

\begin{table}[h]
\centering
\caption{Task-specific hyperparameters for OGBench state-based manipulation tasks (1M training steps). $K$: context length, $d_{\text{model}}$: hidden dimension, $L$: number of Transformer blocks, $H$: number of attention heads.}
\label{tab:ogbench-state-hyperparams}
\scalebox{0.7}{
\begin{tabular}{lccccccccc}
\toprule
\textbf{Environment} & \textbf{$K$} & \textbf{$d_{\text{model}}$}  & \textbf{$L$} & \textbf{$H$} & \textbf{LR} & \textbf{Dropout} & \textbf{$\tau$} & \textbf{NF Blocks} & \textbf{NF Channels} \\
\midrule
\texttt{cube-single-play-v0} & 20 & 256 & 4 & 4 & 3e-4 & 0.1 & 0.99 & 6 & 256 \\
\texttt{cube-single-noisy-v0} & 20 & 256 & 4 & 4 & 3e-4 & 0.1 & 0.95 & 6 & 256 \\
\texttt{cube-double-play-v0} & 25 & 384 & 5 & 6 & 3e-4 & 0.1 & 0.99 & 8 & 256 \\
\texttt{cube-double-noisy-v0} & 25 & 384 & 5 & 6 & 3e-4 & 0.1 & 0.95 & 8 & 256 \\
\texttt{cube-triple-play-v0} & 30 & 512 & 6 & 8 & 2e-4 & 0.15 & 0.99 & 10 & 384 \\
\texttt{cube-triple-noisy-v0} & 30 & 512 & 6 & 8 & 2e-4 & 0.15 & 0.95 & 10 & 384 \\
\texttt{cube-quadruple-play-v0} & 35 & 640 & 6 & 8 & 1e-4 & 0.2 & 0.99 & 12 & 512 \\
\texttt{cube-quadruple-noisy-v0} & 35 & 640 & 6 & 8 & 1e-4 & 0.2 & 0.95 & 12 & 512 \\
\texttt{scene-play-v0} & 30 & 384 & 5 & 6 & 3e-4 & 0.1 & 0.99 & 8 & 384 \\
\texttt{scene-noisy-v0} & 30 & 384 & 5 & 6 & 3e-4 & 0.1 & 0.95 & 8 & 384 \\
\texttt{puzzle-3x3-play-v0} & 25 & 512 & 6 & 8 & 3e-4 & 0.1 & 0.99 & 8 & 384 \\
\texttt{puzzle-3x3-noisy-v0} & 25 & 512 & 6 & 8 & 3e-4 & 0.1 & 0.95 & 8 & 384 \\
\texttt{puzzle-4x4-play-v0} & 30 & 640 & 6 & 8 & 2e-4 & 0.15 & 0.99 & 10 & 384 \\
\texttt{puzzle-4x4-noisy-v0} & 30 & 640 & 6 & 8 & 2e-4 & 0.15 & 0.95 & 10 & 384 \\
\texttt{puzzle-4x5-play-v0} & 35 & 768 & 6 & 8 & 1e-4 & 0.2 & 0.99 & 10 & 512 \\
\texttt{puzzle-4x5-noisy-v0} & 35 & 768 & 6 & 8 & 1e-4 & 0.2 & 0.95 & 10 & 512 \\
\texttt{puzzle-4x6-play-v0} & 40 & 768 & 6 & 8 & 1e-4 & 0.2 & 0.99 & 10 & 512 \\
\texttt{puzzle-4x6-noisy-v0} & 40 & 768 & 6 & 8 & 1e-4 & 0.2 & 0.95 & 10 & 512 \\
\bottomrule
\end{tabular}
}
\end{table}

\cref{tab:ogbench-pixel-hyperparams} presents the hyperparameters for pixel-based (visual) manipulation tasks. Compared to state-based tasks, pixel-based tasks use smaller batch size (512 vs 1024) due to memory constraints and shorter training (500K steps). Goals are represented as images rather than concatenated state vectors.

\begin{table}[h]
\centering
\caption{Task-specific hyperparameters for OGBench pixel-based manipulation tasks (500K training steps). $K$: context length, $d_{\text{model}}$: hidden dimension, $L$: number of Transformer blocks, $H$: number of attention heads.}
\label{tab:ogbench-pixel-hyperparams}
\scalebox{0.75}{
\begin{tabular}{lcccccccccc}
\toprule
\textbf{Environment} & \textbf{$K$} & \textbf{$d_{\text{model}}$} & \textbf{$L$} & \textbf{$H$} & \textbf{LR} & \textbf{Dropout} & \textbf{$\tau$} & \textbf{NF Blocks} & \textbf{NF Channels} \\
\midrule
\texttt{visual-cube-single-play-v0} & 15 & 256 & 4 & 4 & 3e-4 & 0.1 & 0.99 & 6 & 256 \\
\texttt{visual-cube-double-play-v0} & 20 & 384 & 5 & 6 & 3e-4 & 0.1 & 0.99 & 8 & 256 \\
\texttt{visual-cube-triple-play-v0} & 25 & 512 & 6 & 8 & 2e-4 & 0.15 & 0.99 & 10 & 384 \\
\texttt{visual-scene-play-v0} & 25 & 384 & 5 & 6 & 3e-4 & 0.1 & 0.99 & 8 & 384 \\
\texttt{visual-scene-noisy-v0} & 25 & 384 & 5 & 6 & 3e-4 & 0.1 & 0.95 & 8 & 384 \\
\bottomrule
\end{tabular}
}
\end{table}

For pixel-based tasks, the NF uses a DrQ-v2 style CNN \citep{yarats2022mastering} to encode images into 256-dim features, which are concatenated with actions and passed through a 4-layer MLP to produce the state-action representation. The NF models goal-reaching probability in the low-dimensional coordinate space (e.g., object positions), extracted from simulator state. The LSDM actor uses an IMPALA-style encoder \citep{espeholt2018impala} to encode both observation and goal images into 256-dim vectors.

\cref{tab:d4rl-hyperparams} presents the hyperparameters for D4RL maze tasks. We use a unified Transformer architecture with $d_{\text{model}}=128$ and 3 blocks. Unlike OGBench, D4RL experiments use a cosine learning rate schedule with 10K warmup steps.

\begin{table}[h]
\centering
\caption{Task-specific hyperparameters for D4RL maze tasks (100K training steps, 50 iterations $\times$ 2000 updates).}
\label{tab:d4rl-hyperparams}
\scalebox{0.8}{
\begin{tabular}{lcccccc}
\toprule
\textbf{Environment} & \textbf{$K$} & \textbf{LR} & \textbf{$\tau$} & \textbf{$d_{\text{model}}$} & \textbf{$L$} \\
\midrule
\texttt{antmaze-umaze-v2} & 2 & 2e-4 & 0.90 & 128 & 3 \\
\texttt{antmaze-umaze-diverse-v2} & 2 & 2e-4 & 0.90 & 128 & 3 \\
\texttt{antmaze-medium-play-v2} & 3 & 2e-4 & 0.99 & 128 & 3 \\
\texttt{antmaze-medium-diverse-v2} & 3 & 2e-4 & 0.99 & 128 & 3 \\
\texttt{antmaze-large-play-v2} & 3 & 4e-4 & 0.90 & 128 & 3 \\
\texttt{antmaze-large-diverse-v2} & 3 & 4e-4 & 0.90 & 128 & 3 \\
\texttt{maze2d-umaze-v1} & 10 & 2e-4 & 0.90 & 128 & 3 \\
\texttt{maze2d-medium-v1} & 10 & 2e-4 & 0.90 & 128 & 3 \\
\bottomrule
\end{tabular}}
\end{table}

\textbf{D4RL-specific settings.} For D4RL maze tasks, we concatenate the 2D goal position to the state (\texttt{--goalconcate}), increasing the state dimension by 2. The training uses a combined learning rate schedule: linear warmup for 10K steps followed by cosine decay. We use smaller batch size (256) and fewer training steps (100K) compared to OGBench, as D4RL maze tasks are less complex. The expectile parameter $\tau$ is set to 0.90 for umaze and large tasks, and 0.99 for medium tasks based on empirical tuning.

For computational efficiency, we extract only task-relevant goal coordinates when training the NFs-based Q-value estimator in \cref{eq:nf_loss}. Given a full goal state $g_{\text{full}}$, we use $g = g_{\text{full}}[i_{\text{start}}:i_{\text{end}}]$ where the index range is environment-specific. \cref{tab:goal-dims} summarizes the configurations:
\begin{table}[h]
\centering
\caption{Goal dimension configurations for NF Q-value estimation.}
\label{tab:goal-dims}
\scalebox{0.8}{
\begin{tabular}{lcc}
\toprule
\textbf{Task Category} & \textbf{Goal Dim} & \textbf{Description} \\
\midrule
\texttt{cube-single-*} / \texttt{visual-cube-single-*} & 3 & Object (x, y, z) \\
\texttt{cube-double-*} / \texttt{visual-cube-double-*} & 6 & Two objects \\
\texttt{cube-triple-*} / \texttt{visual-cube-triple-*} & 9 & Three objects \\
\texttt{cube-quadruple-*} & 12 & Four objects \\
\texttt{scene-*} / \texttt{visual-scene-*} & 13 & Scene objects \\
\texttt{puzzle-3x3-*}  & 9 & 3$\times$3 tiles \\
\texttt{puzzle-4x4-*} & 16 & 4$\times$4 tiles \\
\texttt{puzzle-4x5-*} & 20 & 4$\times$5 tiles \\
\texttt{puzzle-4x6-*} & 24 & 4$\times$6 tiles \\
\texttt{antmaze-*}, \texttt{maze2d-*} & 2 & Agent (x, y) \\
\bottomrule
\end{tabular}}
\end{table}

For goal sampling in OGBench, we use $p_{\text{trajgoal}}=1.0$, $p_{\text{randomgoal}}=0.0$ for play datasets and $p_{\text{trajgoal}}=0.8$, $p_{\text{randomgoal}}=0.2$ for noisy datasets.
\section{Additional Results}
\label{app:experiments}
This section presents supplementary experiments and analyses for \textbf{QHyer}, including: (1) detailed discussion of the state-goal tokenization strategy and its role in enabling trajectory stitching, (2) ablation studies on regression functions, (3) qualitative visualization of trajectory stitching capabilities, (4) validation of Normalizing Flows for goal-reaching probability estimation, and (5) empirical verification of expectile regression for capturing maximum Q-values. Due to space constraints, these additional results are not included in the main body of this paper. The details are provided below.
\subsection{Detail Discussion of State-Goal Tokenization Strategy}\label{ap:sg_concat}

\begin{figure*}[h]
\centering
\centerline{\includegraphics[width=0.9\textwidth]{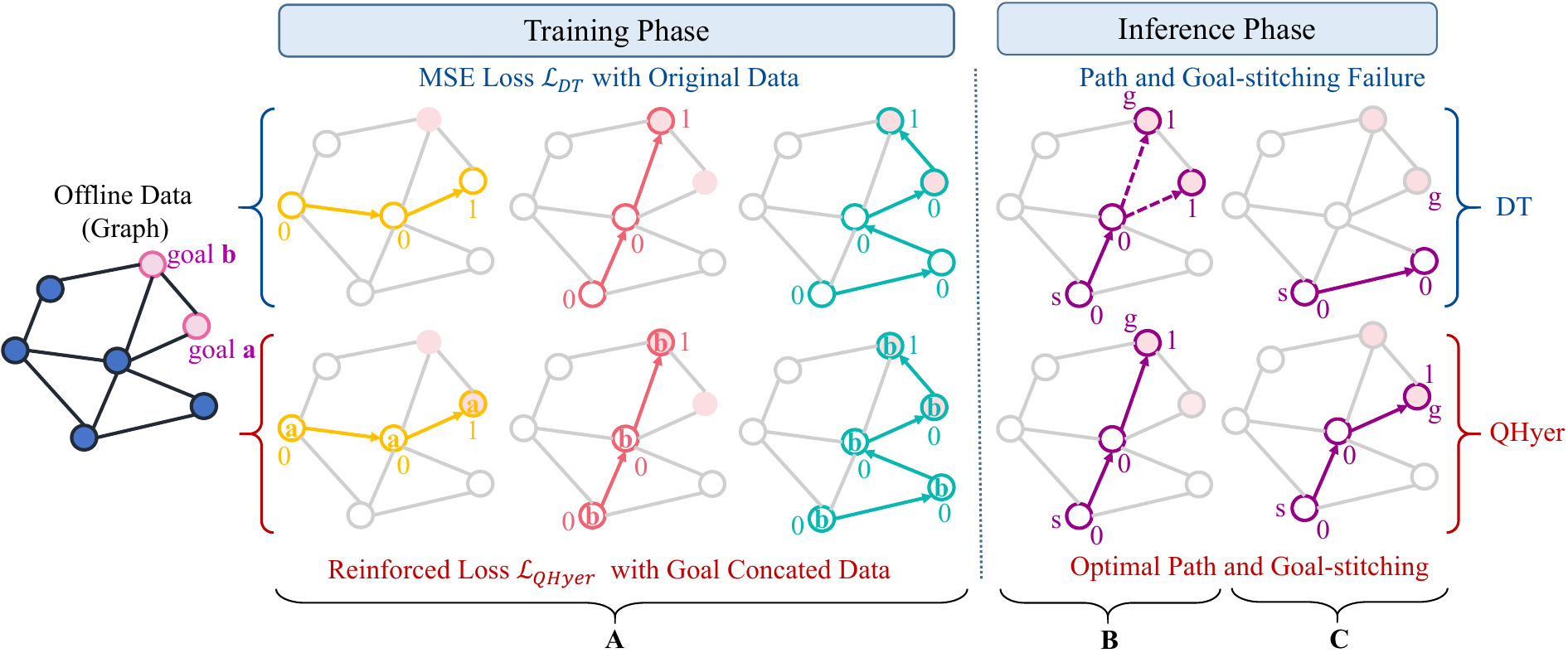}}
\caption{\textbf{State-goal tokenization strategy.} \textbf{(A)} Offline data represented as a graph, where nodes denote states and edges represent transitions. Different trajectories (colored in orange/yellow) may share common states but target different goals. \textbf{(B)} State-goal concatenation: each state $s$ is concatenated with goal $g$ to form a unified token $[s;g]$, enabling the model to directly attend to goal-relevant state features. \textbf{(C)} Goal stitching illustration (see Section~\ref{sec:tokenization}): by conditioning on concatenated state-goal tokens, QHyer can identify and combine successful trajectory segments from different source trajectories that share the same goal, enabling optimal path discovery that neither original trajectory achieves alone.}
\label{fig:augment goal}
\end{figure*}

This section details our state-goal tokenization strategy illustrated in \cref{fig:augment goal} and its role in enabling trajectory stitching. The key insight is that concatenating state and goal into a unified token $[s;g]$ allows the Transformer's self-attention mechanism to directly model cross-dependencies between current state features and goal specifications within each token position.

\textbf{Panel A} shows the offline dataset structure as a graph, where multiple trajectories (indicated by different colors) traverse overlapping state regions while pursuing different goals. This shared structure creates opportunities for trajectory stitching, which combines successful segments from different trajectories.

\begin{wrapfigure}[16]{R}{0.48\textwidth}
\begin{minipage}{\linewidth}
\centering
\includegraphics[width=0.95\textwidth]{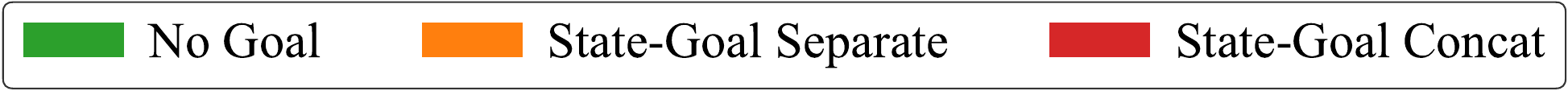}
\end{minipage}
\begin{minipage}{\linewidth}
\centering
\includegraphics[width=\textwidth]{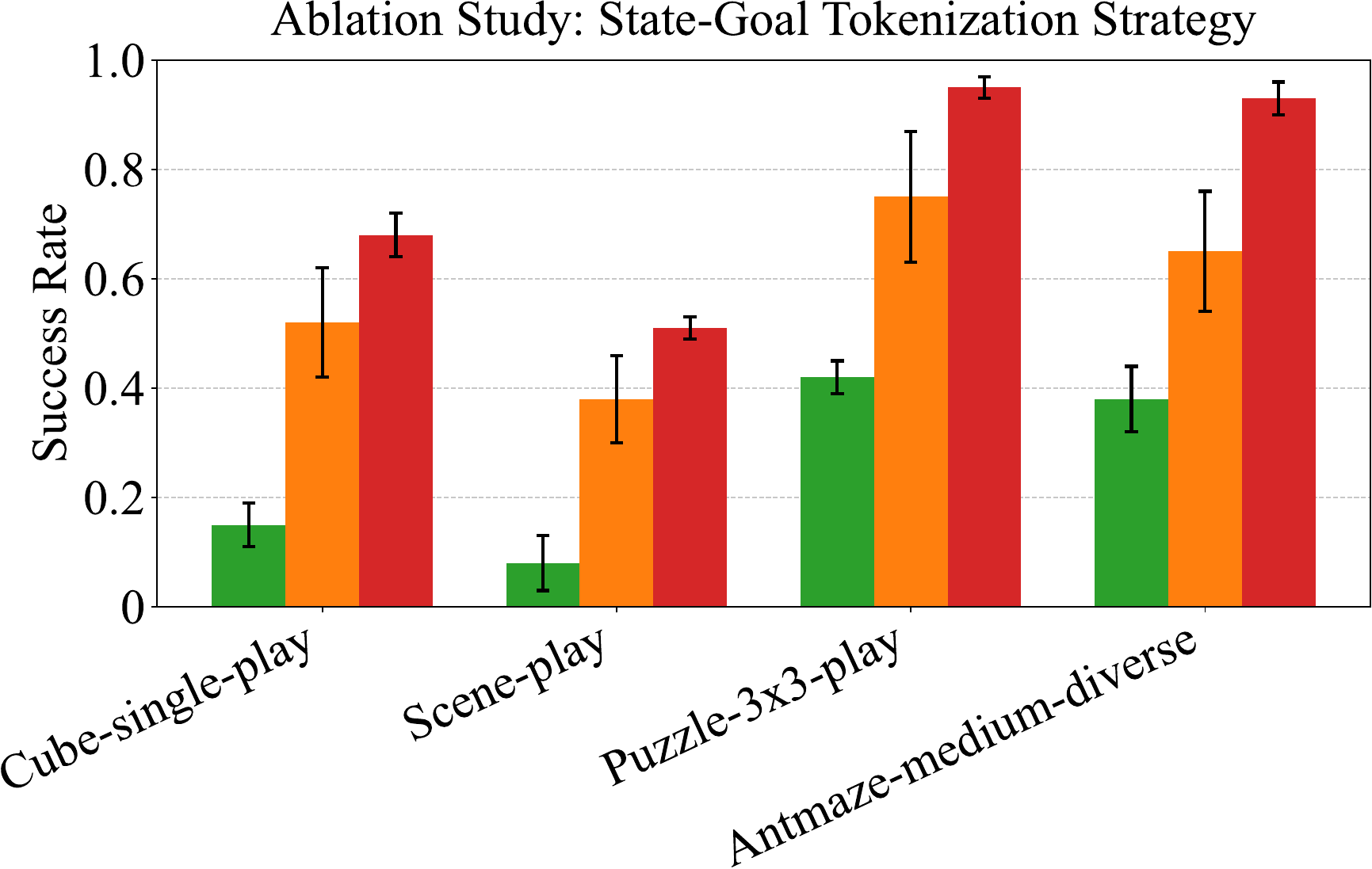}
\end{minipage}
\caption{Ablation study on state-goal tokenization strategies.}
\label{fig:tokenization_ablation}
\end{wrapfigure}

\textbf{Panel B} contrasts DT's standard tokenization with QHyer's approach. In vanilla DT, states and goals may be processed separately or with weak coupling. QHyer instead concatenates $[s;g]$ at each timestep, ensuring that goal information is directly available when computing attention over state features. This design maintains the sequence length at $3T$ (Q-value, state-goal, action tokens) rather than increasing to $4T$ with separate goal tokens, avoiding quadratic attention overhead.

\textbf{Panel C} demonstrates how this enables goal stitching. Consider two trajectories targeting goals $a$ and $b$ respectively. Neither trajectory alone reaches the optimal path to goal $a$. However, by conditioning on state-goal concatenated tokens with NFs-based Q-value signals, QHyer identifies high-value segments from both trajectories and stitches them together, discovering an optimal path (shown in green) that was not present in any single demonstration.

We empirically validate the effectiveness of state-goal concatenation through ablation studies comparing three tokenization strategies: \textbf{No Goal} (state-only input), \textbf{State-Goal Separate} (goal as additional token), and \textbf{State-Goal Concat} (our approach). \cref{fig:tokenization_ablation} shows a consistent ordering across all environments: No Goal $<$ Separate $<$ Concat.

The performance gap between No Goal and goal-conditioned variants ($37\%$--$53\%$ absolute improvement) confirms that goal information is essential for learning meaningful goal-reaching behaviors. Without explicit goal conditioning, the model degenerates to unconditional behavior cloning, unable to distinguish between trajectories targeting different goals.

Among goal-conditioned strategies, concatenation outperforms separation by $16\%$--$28\%$. This improvement stems from two factors: (1) \textbf{Direct cross-dependency modeling}: Concatenation enables self-attention to directly learn which state features are relevant for specific goals within each token, whereas separation requires the model to establish state-goal relationships across tokens through multiple attention layers. (2) \textbf{Stronger conditioning signal}: Separate tokenization dilutes the goal signal as it propagates through attention layers, weakening goal-awareness at decision time. Concatenation preserves the full goal information at every position where action prediction occurs.

\begin{wrapfigure}[16]{R}{0.46\textwidth}
\begin{minipage}{\linewidth}
\centering
\includegraphics[width=0.98\textwidth]{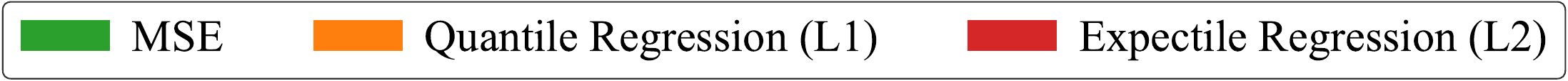}
\end{minipage}
\begin{minipage}{\linewidth}
\centering
\includegraphics[width=\textwidth]{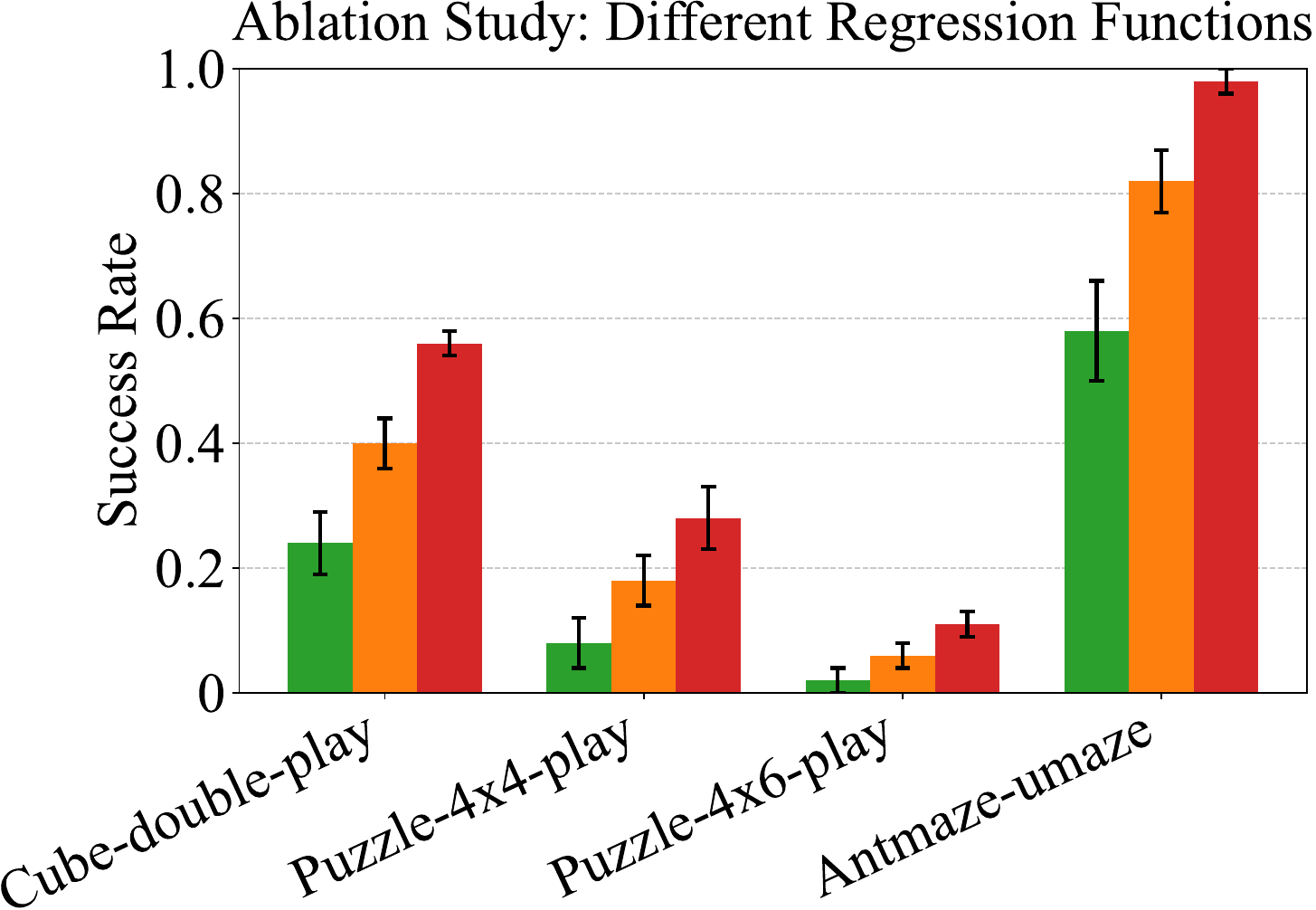}
\end{minipage}
\caption{Ablation study on regression functions for maximum Q-value learning.}
\label{fig:regression_ablation}
\end{wrapfigure}
These results validate our design choice and explain why QHyer achieves effective trajectory stitching: the concatenated state-goal representation provides the necessary goal-aware context for identifying and combining high-value segments from different trajectories.

\subsection{Effect of Regression Functions on Learning Stability}\label{app:regression}

We compare MSE, Quantile Loss ($L_1$-based)~\citep{koenker2001quantile}, and Expectile Regression ($L_2$-based)~\citep{newey1987asymmetric,kostrikov2022offline}. \cref{fig:regression_ablation} shows consistent ordering: MSE $<$ Quantile $<$ Expectile, with Expectile achieving the best results and smallest variance.

\textbf{Why MSE Fails.} MSE learns the mean Q-value across all trajectories passing through each state. In Offline GCRL where both successful and failed trajectories share common states, this averaging produces predictions that lie between the maximum and minimum Q-values. Such middle-ground estimates provide no discriminative signal for trajectory stitching because the model cannot distinguish promising paths from dead ends.

\textbf{Why Quantile Loss Struggles.} Quantile regression~\citep{koenker2001quantile} correctly targets high-value regions via asymmetric weighting. However, the $L_1$ loss creates a non-smooth point at zero error where gradients change direction abruptly~\citep{liu2025provably,jullien2023distributional}. For deep networks with many near-zero predictions, this causes oscillatory training dynamics and high variance across seeds. Recent theoretical work~\citep{liu2025provably} shows that while quantile regression can recover in-distribution optimal values in deterministic environments, its $L_1$ loss makes optimization less stable than $L_2$-based alternatives.

\textbf{Why Expectile Regression Succeeds.} Expectile regression~\citep{newey1987asymmetric} replaces the $L_1$ non-smooth point with an $L_2$ smooth curve, achieving both optimistic targeting and gradient consistency. This smooth gradient landscape is particularly important for non-Markovian learning: inconsistent gradients from quantile loss disrupt the temporal representations learned by attention and Mamba branches, while expectile's stable gradients allow these components to capture history-dependent patterns effectively. This explains why the Quantile-Expectile gap is largest on \texttt{Cube-double-play}, the environment with the strongest non-Markovian properties. As shown in Theorem~\ref{thm:expectile_convergence}, expectile regression with $\tau \to 1$ converges to the in-distribution optimal Q-value, providing theoretical justification for our empirical findings.

\subsection{Trajectory Stitching Visualization}
\label{app:stitching_visualization}

\begin{wrapfigure}[12]{r}{0.24\linewidth}
    \vspace{-0.5em}
    \centering
	\centerline{\includegraphics[width=0.245\textwidth]{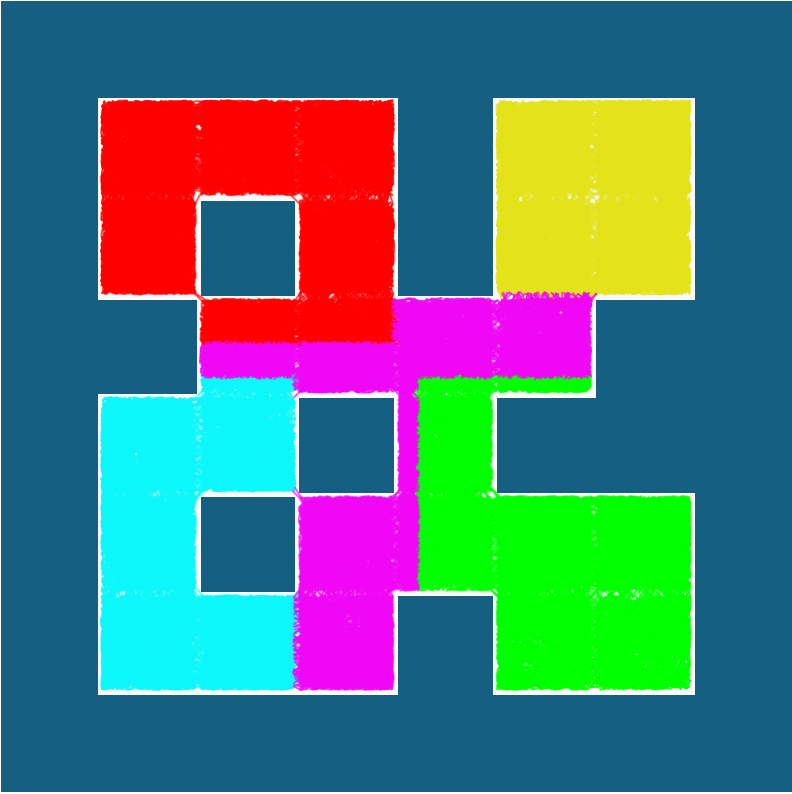}}
    \caption{D4RL \texttt{Antmaze-medium} environment with trajectories from different behavioral policies.}
    \label{fig:maze_regions}
\end{wrapfigure}

To further illustrate the trajectory stitching capabilities of different methods, we provide a qualitative comparison on the D4RL \texttt{Antmaze-Medium} task. As shown in \cref{fig:stitching_visual}, we visualize the trajectories generated by DT, LSDT, IQL, and \textbf{QHyer} (with Expectile Regression).

The maze environment consists of multiple regions, each represented by a distinct color corresponding to different data collection policies in the offline dataset (\cref{fig:maze_regions}):
\begin{itemize}
    \item \textcolor{cyan}{\textbf{Cyan}}: Bottom-left start region
    \item \textcolor{magenta}{\textbf{Purple}}: Middle corridor
    \item \textcolor{yellow}{\textbf{Yellow}}: Top-right goal region
    \item \textcolor{green}{\textbf{Green}}: Bottom-right area
    \item \textcolor{red}{\textbf{Red}}: Top-left area
    \item \textbf{Black}: Out-of-distribution (OOD) states (i.e., passing through walls)
\end{itemize}
The key challenge is to stitch trajectory segments from different regions to discover optimal paths from start to goal.

\begin{figure}[h]
    \vspace{-6pt}
    \centering
    \begin{minipage}{0.245\linewidth}
		\centerline{\includegraphics[width=0.8\textwidth]{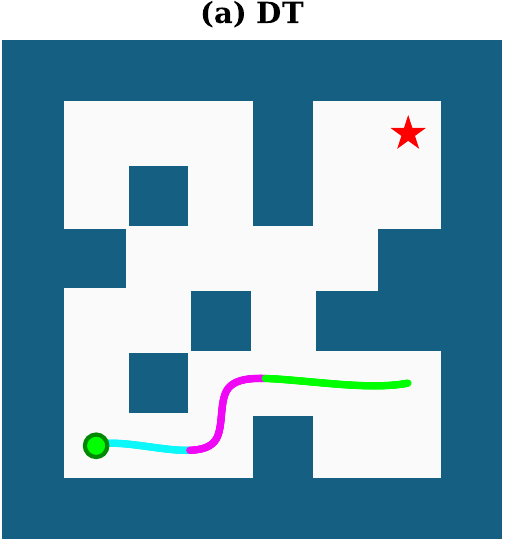}}
	\end{minipage}
    \begin{minipage}{0.245\linewidth}
		\centerline{\includegraphics[width=0.8\textwidth]{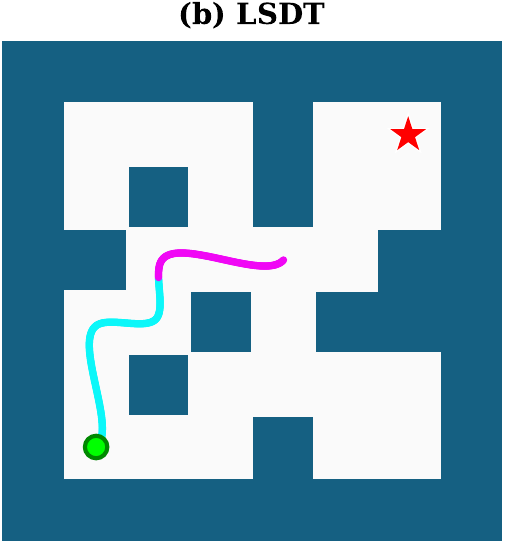}}
	\end{minipage}
     \begin{minipage}{0.245\linewidth}
		\centerline{\includegraphics[width=0.8\textwidth]{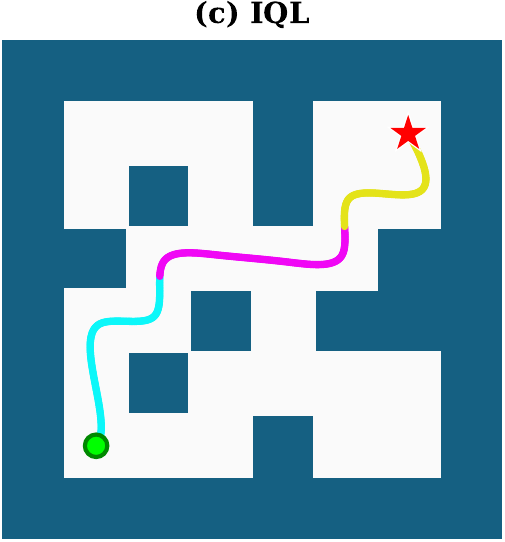}}
	\end{minipage}
	\begin{minipage}{0.245\linewidth}
		\centerline{\includegraphics[width=0.8\textwidth]{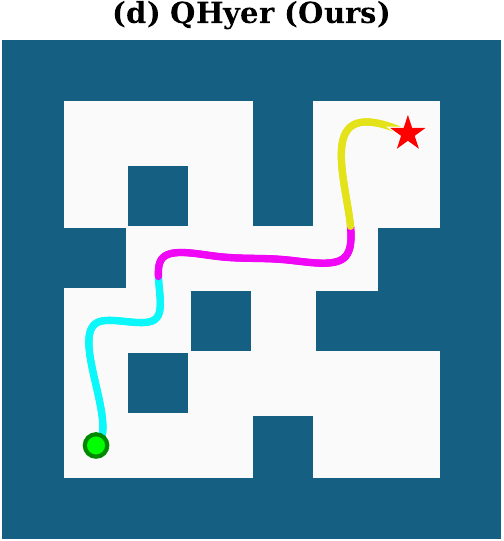}}
	\end{minipage}
    \caption{Qualitative comparison of trajectory stitching capabilities on D4RL \texttt{Antmaze-Medium} task. Different colors represent trajectory segments from different data collection policies in the offline dataset. \textbf{Black segments indicate OOD states} where the agent passes through walls. (a) DT fails to reach the goal due to ineffective RTG conditioning. (b) LSDT moves correctly but stops early. (c) IQL successfully reaches the goal via value-based stitching, but requires bootstrapping and policy projection. (d) \textbf{QHyer} successfully reaches the goal with the advantages of \textbf{no bootstrapping}, \textbf{no policy projection}, and \textbf{sequence modeling for non-Markovian data}.}
    \vspace{-6pt}
    \label{fig:stitching_visual}
\end{figure}

Successful trajectory stitching requires the agent to combine trajectory segments from different regions to reach the goal. Our key observations are:

\begin{itemize}
    \item \textbf{DT}~\citep{chen2021decision} fails to reach the goal and instead wanders toward the bottom-right area, demonstrating its inability to stitch trajectories across different data collection policies. This failure stems from DT's reliance on return-to-go conditioning, which provides no discriminative signal in sparse reward settings where all failed trajectories receive identical RTG values.
    
    \item \textbf{LSDT}~\citep{wang2025longshort} moves in the correct direction but stops in the middle corridor, showing limited stitching capability. Although LSDT improves upon DT by combining attention with Dynamic Convolution for better local pattern extraction, it still relies on RTG conditioning and cannot identify high-value stitching points without explicit value guidance.
    
    \item \textbf{IQL}~\citep{kostrikov2022offline} successfully reaches the goal through a valid path without OOD states. IQL's expectile regression-based value learning enables trajectory stitching by identifying high-value actions. However, IQL requires bootstrapping to learn the maximum Q-value, which means it must first learn $Q^\beta$ before learning $\hat{Q}$. This can lead to error accumulation in complex environments.
    
    \item \textbf{QHyer} also successfully reaches the goal through a valid path without any OOD states. The trajectory smoothly transitions through cyan $\rightarrow$ purple $\rightarrow$ yellow regions, demonstrating proper trajectory stitching. Compared to IQL, QHyer avoids bootstrapping by using NFs for direct Q-value estimation, and avoids policy projection by using Q-conditioned supervised learning.
\end{itemize}
\subsection{Evaluating the Capability of NFs to Accurately Estimate Goal-reaching Probability} \label{ap:evaluate nf}
In this section, we validate the accuracy of the NFs's \citep{ghugare2025normalizing} estimation of the discounted future state distribution by implementing the computation method outlined in \citet{eysenbach2020c} within a tabular setting. 
It is important to note that here we are solely validating the accuracy of the NFs in estimating the discounted future state distribution, which is unrelated to the actual implementation of the NFs in our \textbf{QHyer} framework.

\begin{wrapfigure}[19]{r}{0.4\linewidth}
    \vspace{-0.5em}
    \centering
	\centerline{\includegraphics[width=0.4\textwidth]{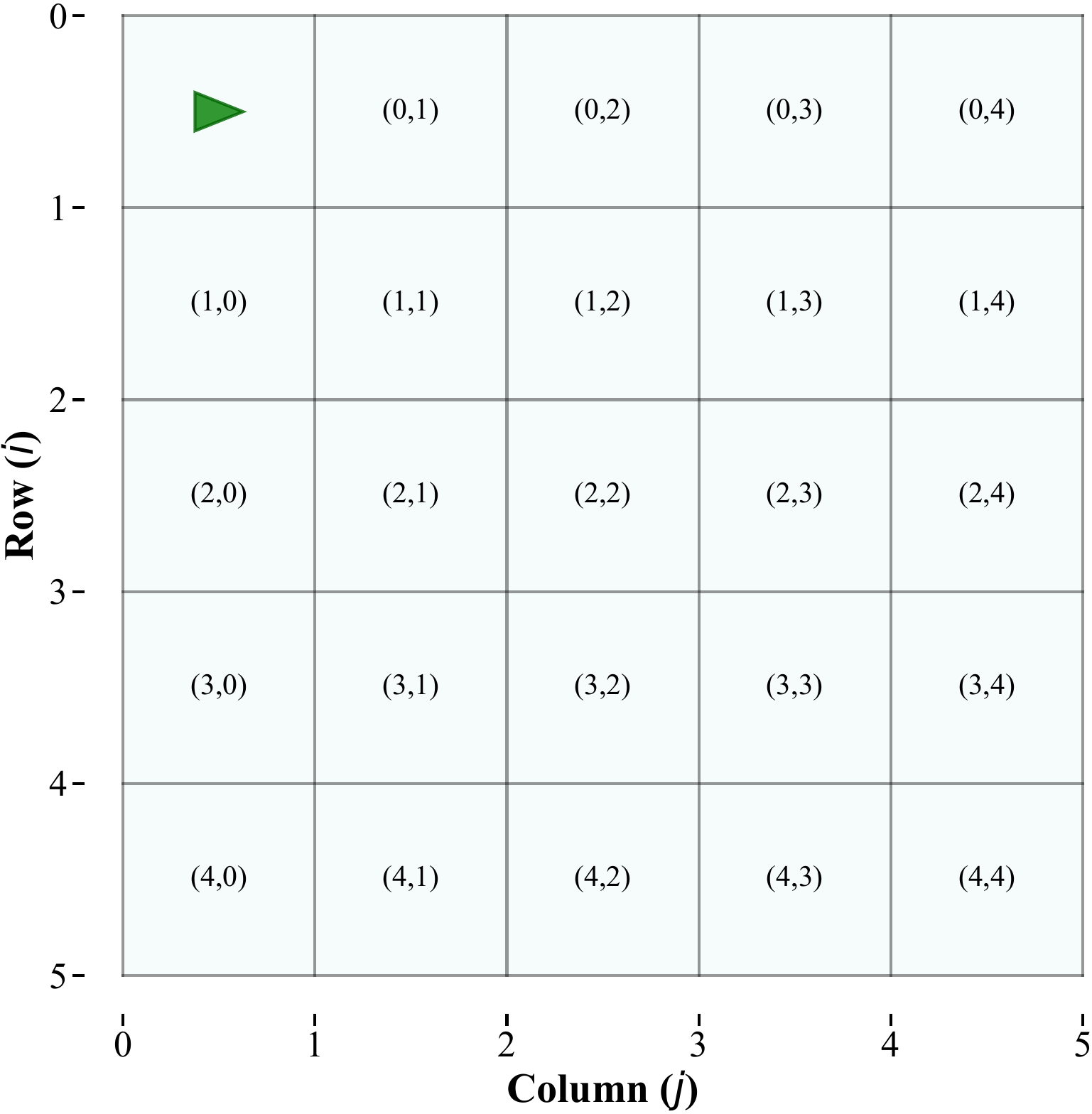}}
    \caption{
     $5 \times 5$ Gridworld environment.
    }
    \label{fig:grid world}
\end{wrapfigure}
Specifically, we compute the true discounted future state distribution in a modified GridWorld environment example and evaluate the estimation error by comparing it against the true distribution. We also compare the predictions of CVAE\citep{sohn2015learning}, C-learning \citep{eysenbach2020c} and CRL\citep{eysenbach2022contrastive} with the true future state density.
First, we introduce the modified GridWorld environment used in this experiment. This environment is characterized by 
stochastic dynamics and a continuous state space, such that the true $Q$-function for the indicator reward is zero.
Specifically, the environment has a size of $5 \times 5$ (\cref{fig:grid world}), where the agent observes a noisy version of its current state. More precisely, when the agent is located at position $(i, j)$, it observes the state $(i + \epsilon_i, j + \epsilon_j)$, where $\epsilon_i, \epsilon_j \sim \text{Unif}[-0.5, 0.5]$.
Note that the observation uniquely identifies the agent's position, so there is no partial observability. Similar to \citet{eysenbach2020c}, we analytically compute the exact future state density function by first determining the future state density of the underlying GridWorld, noting that the density is uniform within each cell. We generated a tabular policy by sampling from a Dirichlet (1) distribution, and sampled 100 trajectories of length 100 from this policy for NFs training.

\begin{figure*}[h]
  \centering
  \begin{minipage}{0.32\linewidth}
		\centerline{\includegraphics[width=\textwidth]{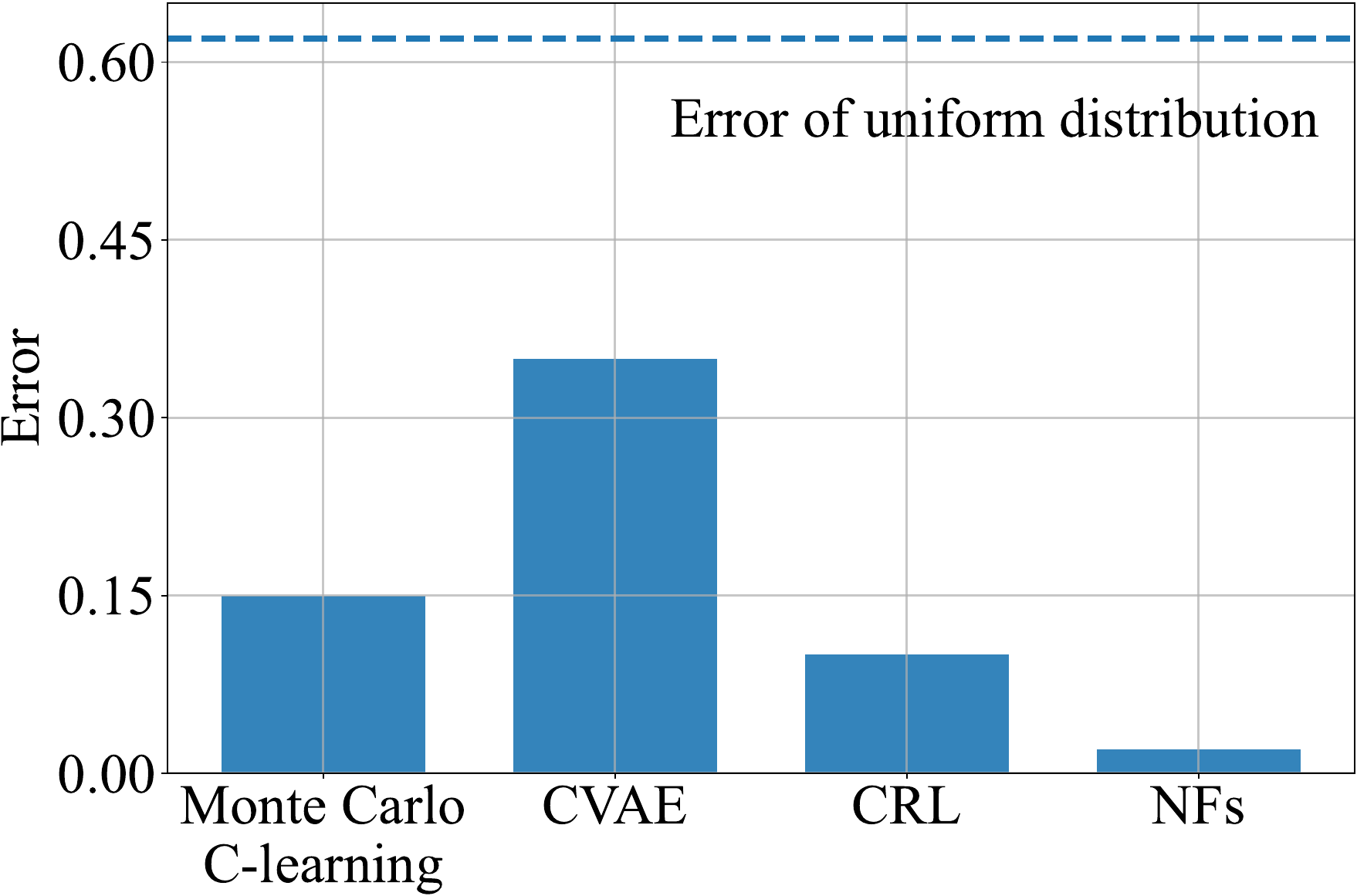}}
  \end{minipage}
  \begin{minipage}{0.32\linewidth}
		\centerline{\includegraphics[width=0.94\textwidth]{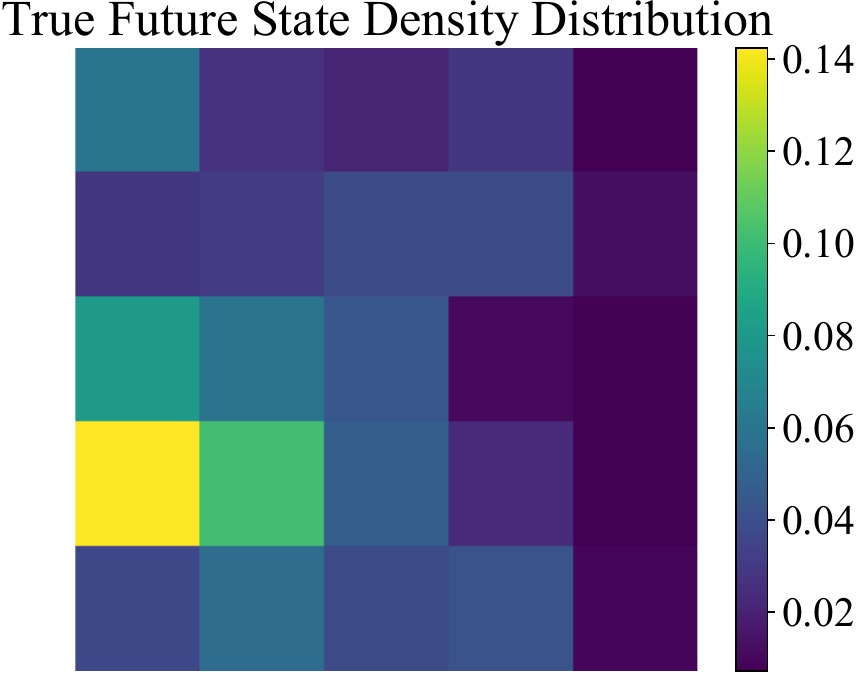}}
  \end{minipage}
  \begin{minipage}{0.32\linewidth}
		\centerline{\includegraphics[width=0.88\textwidth]{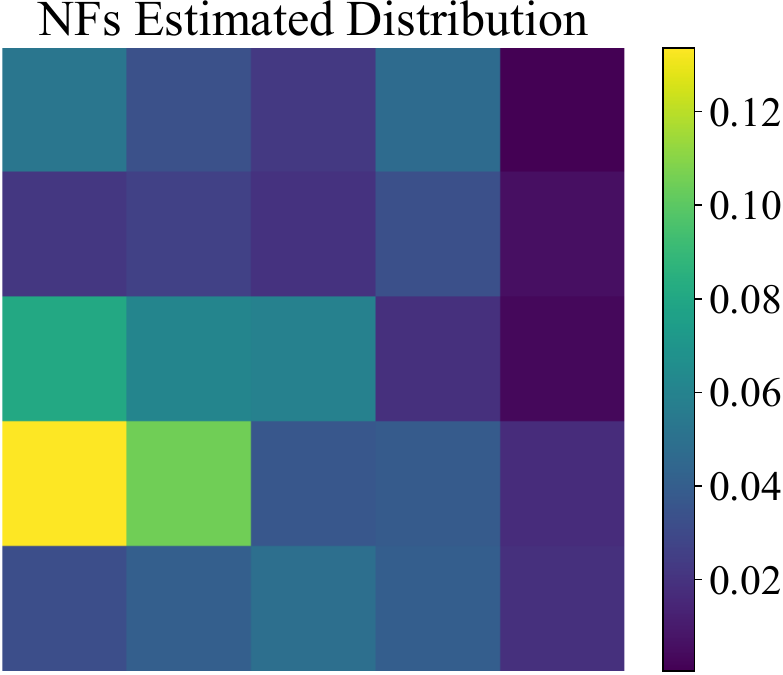}}
  \end{minipage}
  \caption{{\footnotesize \textbf{Experiments on the effectiveness of density estimation using NFs.} \textbf{Left:} We evaluate CVAE, C-learning, CRL and NFs for predicting the future state distribution in the on-policy setting. As anticipated, NFs demonstrated the lowest estimation error among all methods evaluated. Conversely, CVAE exhibited the poorest estimation accuracy. In our empirical implementation, we observed that CVAE incurs significantly higher computational complexity due to its requirements for pre-training and importance sampling-based inference procedures \citep{wu2022supported}. \textbf{Middle:} and \textbf{Right:} The visual comparison. For a given state, action, and future goal in the GridWorld trajectory data, we visualize the comparison between the actual future state density (goal-reaching probability) and the estimates provided by the NFs. The results indicate a minimal difference, further validating the effectiveness of the NFs in estimating the future state density (goal-reaching probability).}}
  \label{fig:gridworld}
\end{figure*}
\paragraph{Analytic Future State Distribution} Then, as described in \citet{eysenbach2020c}, we can compute the true discounted future state distribution by first constructing the following two metrics:
\begin{align*}
    T \in \mathbb{R}^{25 \times 25}: \quad &T[s, s'] = \sum_a \mathds{1}(f(s, a) = s') \pi(a \mid s) \\
    T_0 \in \mathbb{R}^{25 \times 4 \times 25}: \quad &T[s, a, s'] = \mathds{1}(f(s, a) = s'),
\end{align*}
where $f(s, a)$ denotes the deterministic transition function.
The future discounted state distribution is then given by:
\begin{align*}
    P &= (1 - \gamma) \left[T_0 + \gamma T_0 T + \gamma^2 T_0 T^2 + \gamma^3 T_0 T^3 + \cdots \right] \\
    &= (1 - \gamma) T_0 \left[ I + \gamma T + \gamma^2 T^2 + \gamma^3 T^3 + \cdots \right] \\
    &= (1 - \gamma) T_0 \left(I - \gamma T \right)^{-1}
\end{align*}
The tensor-matrix product $T_0 T$ is equivalent to \texttt{einsum}(`ijk,kh $\rightarrow$ ijh', $T_0$, $T$).
We use the forward KL divergence for estimating the error in our estimate, $D_{\mathrm{KL}}(P||Q)$, where $Q$ is the tensor of predictions:
\begin{equation*}
    Q \in \mathbb{R}^{25 \times 4 \times 25}: \quad Q[s, a, g] = q(g \mid s, a).
\end{equation*}

Following the configuration outlined in \citet{eysenbach2020c}, we compare the accuracy of the future discounted state distribution under against C-Learning and $Q$-learning:
\paragraph{On-policy Setting} \cref{fig:gridworld} presents the results of our evaluation comparing CVAE, C-learning, CRL and NFs on the above modified "continuous GridWorld" environment under the on-policy setting.
In this scenario, CVAE demonstrates higher error compared to C-learning, while NFs achieves the best performance. This highlights the accuracy of NFs in estimating the discounted state occupancy measure. This experiment aims to answer whether NFs solve the future state density estimation problem.

\subsection{Can Expectile Regression Effectively Capture Maximum $Q$-values in Practice?} 
\label{ap:expectile-validation}

We empirically validate that expectile regression converges to in-distribution maximum $Q$-values in a controlled GridWorld setting, supporting our theoretical analysis in \cref{thm:expectile_convergence}.

\textbf{Metrics.}
We use coefficient of determination ($R^2$) measuring explained variance, and Mean Absolute Error (MAE) quantifying prediction deviation:
\begin{equation}
R^2 = 1 - \frac{\sum(y_{\mathrm{true}} - y_{\mathrm{pred}})^2}{\sum(y_{\mathrm{true}} - \bar{y}_{\mathrm{true}})^2}, \quad
\mathrm{MAE} = \frac{1}{n}\sum_{i=1}^{n}|y_{\mathrm{true},i} - y_{\mathrm{pred},i}|.
\end{equation}

\textbf{Results.}
As shown in \cref{fig:gridworld_scatter,fig:gridworld_curves}, the results strongly support our theoretical analysis:
\begin{enumerate}
    \item Standard MSE ($\tau=0.5$) learns the mean rather than maximum, yielding $R^2=0.781$;
    \item Performance improves monotonically with $\tau$: $R^2$ increases from $0.781$ to $0.995$ as $\tau$ goes from $0.5$ to $0.99$;
    \item At $\tau=0.99$, predicted values closely match ground-truth $Q^{\beta}_{\max}$ with $R^2=0.995$ and MAE$=0.0017$.
\end{enumerate}

\textbf{Implications.}
These results validate that expectile regression effectively captures maximum in-distribution $Q$-values, which is essential for \textbf{QHyer}'s trajectory stitching capability. 
The convergence aligns with \cref{thm:expectile_convergence}: as $\tau \to 1$, the approximation error $\epsilon_\tau \to 0$ and $\hat{Q}^\tau \to Q^\star$. Specifically, our theoretical bound predicts:
\begin{equation}
    \epsilon_\tau \leq \frac{(1-\tau)(Q^\star - Q_{\min})}{\tau \cdot \tilde{c}/2 + (1-\tau)(1 - \tilde{c}/2)},
\end{equation}
which decreases as $\tau$ increases, consistent with the monotonic improvement observed in \cref{fig:gridworld_curves}.

However, excessively large $\tau$ (e.g., $0.999$) may cause overfitting to outliers due to focusing on too few high-value samples, leading to increased variance. 
In practice, $\tau \in [0.9, 0.95]$ balances accuracy and training stability, as validated in our ablation studies (\cref{sec:ablation}).

\begin{figure*}[h]
  \centering
  \begin{minipage}{\linewidth}
		\centerline{\includegraphics[width=0.9\textwidth]{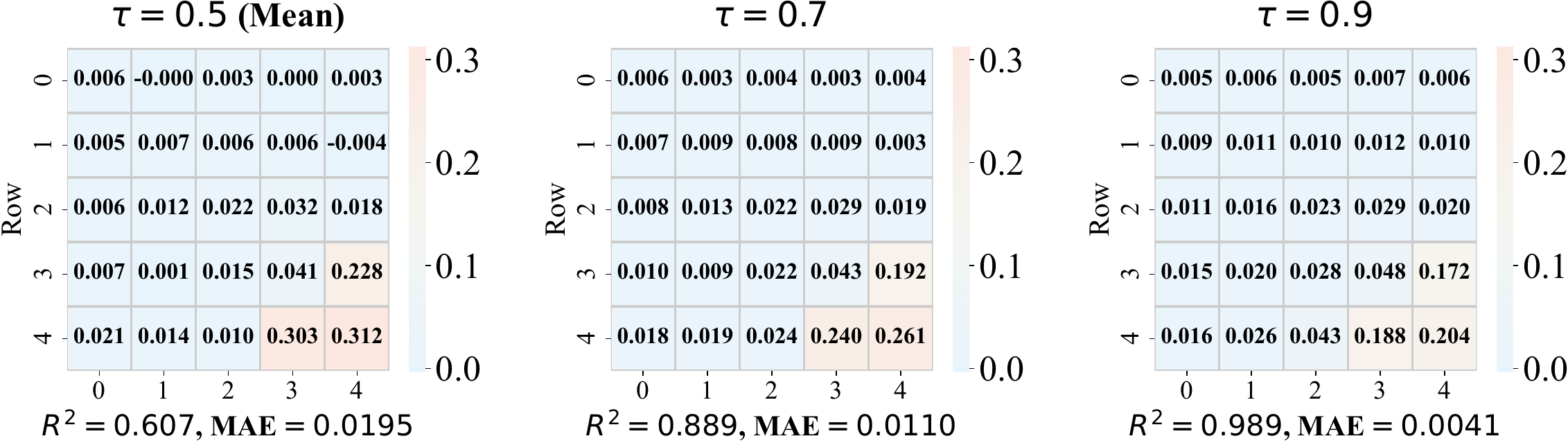}}
  \end{minipage}
  \begin{minipage}{\linewidth}
		\centerline{\includegraphics[width=0.7\textwidth]{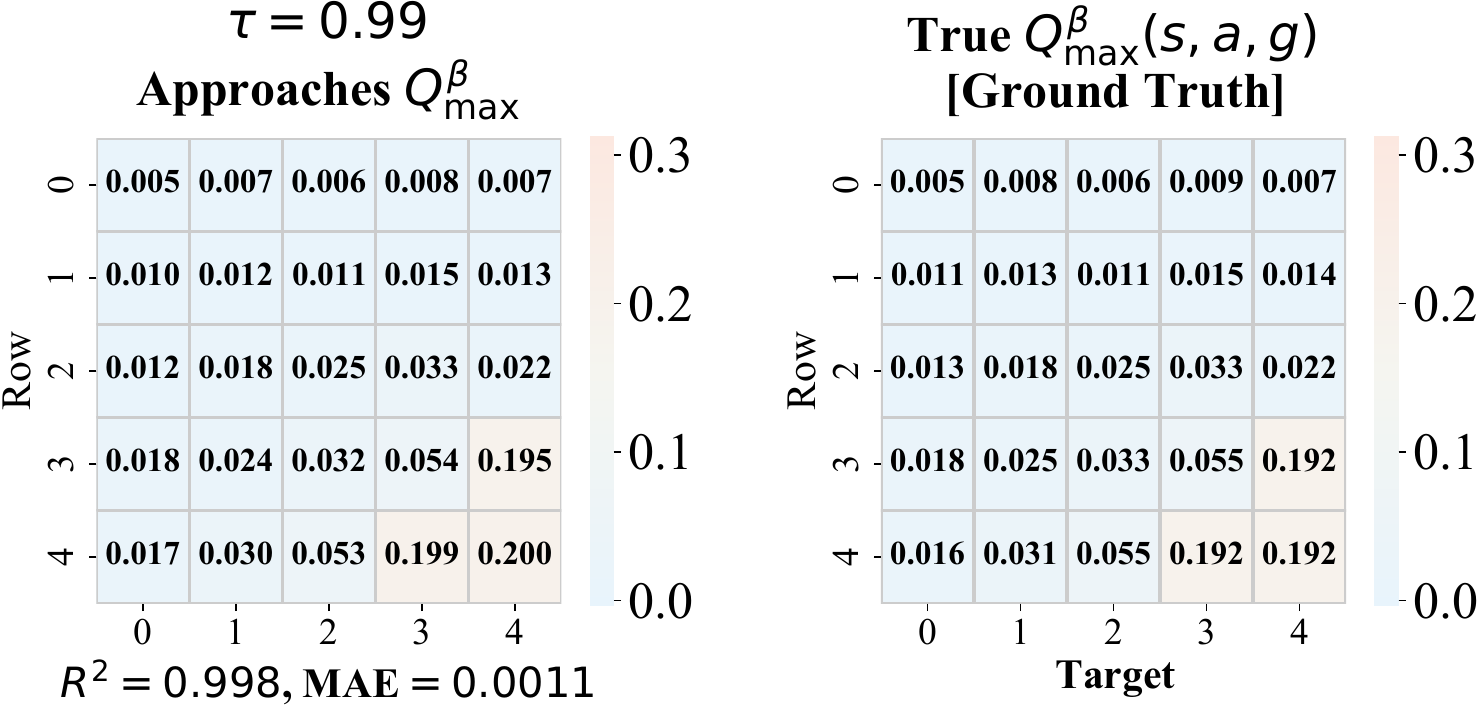}}
  \end{minipage}
  \caption{Visualization comparing predicted maximum $Q$-values from expectile regression with different $\tau$ values against ground-truth maximum $Q$-values ($Q^\star$) in the GridWorld environment. As $\tau$ increases from $0.5$ to $0.99$, the predictions converge toward the diagonal line (perfect prediction), validating \cref{thm:expectile_convergence}.}
  \label{fig:gridworld_scatter}
\end{figure*}

\begin{figure*}[t]
  \centering
  \begin{minipage}{\linewidth}
		\centerline{\includegraphics[width=0.9\textwidth]{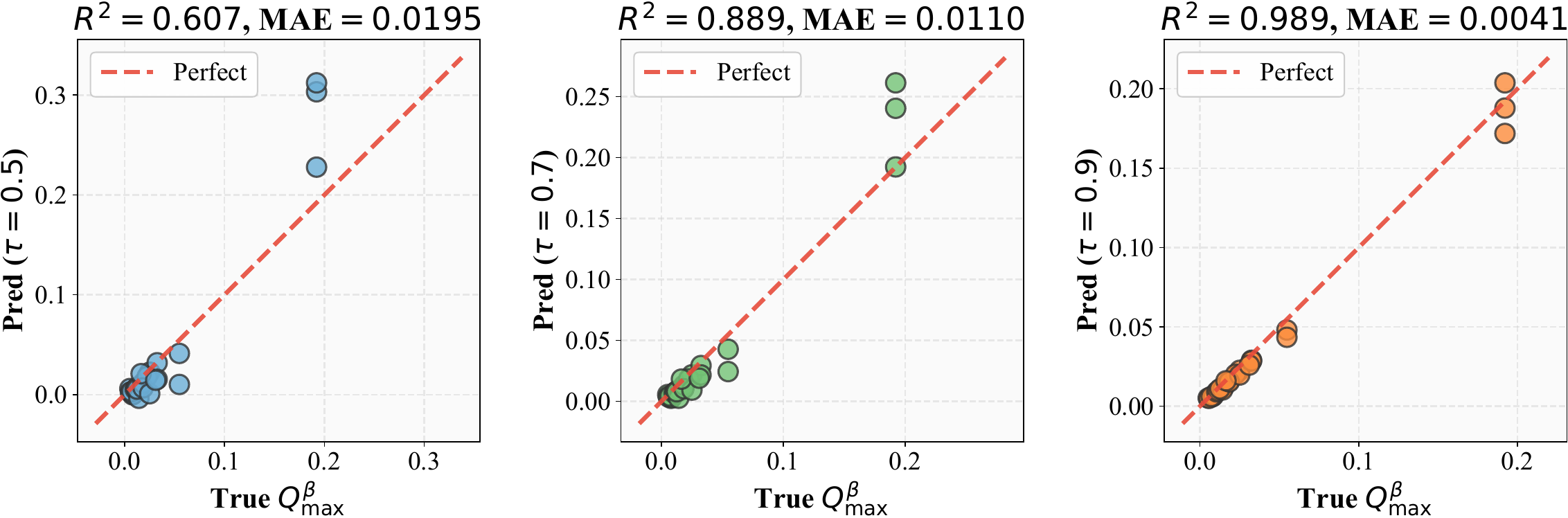}}
  \end{minipage}
  \begin{minipage}{\linewidth}
		\centerline{\includegraphics[width=0.7\textwidth]{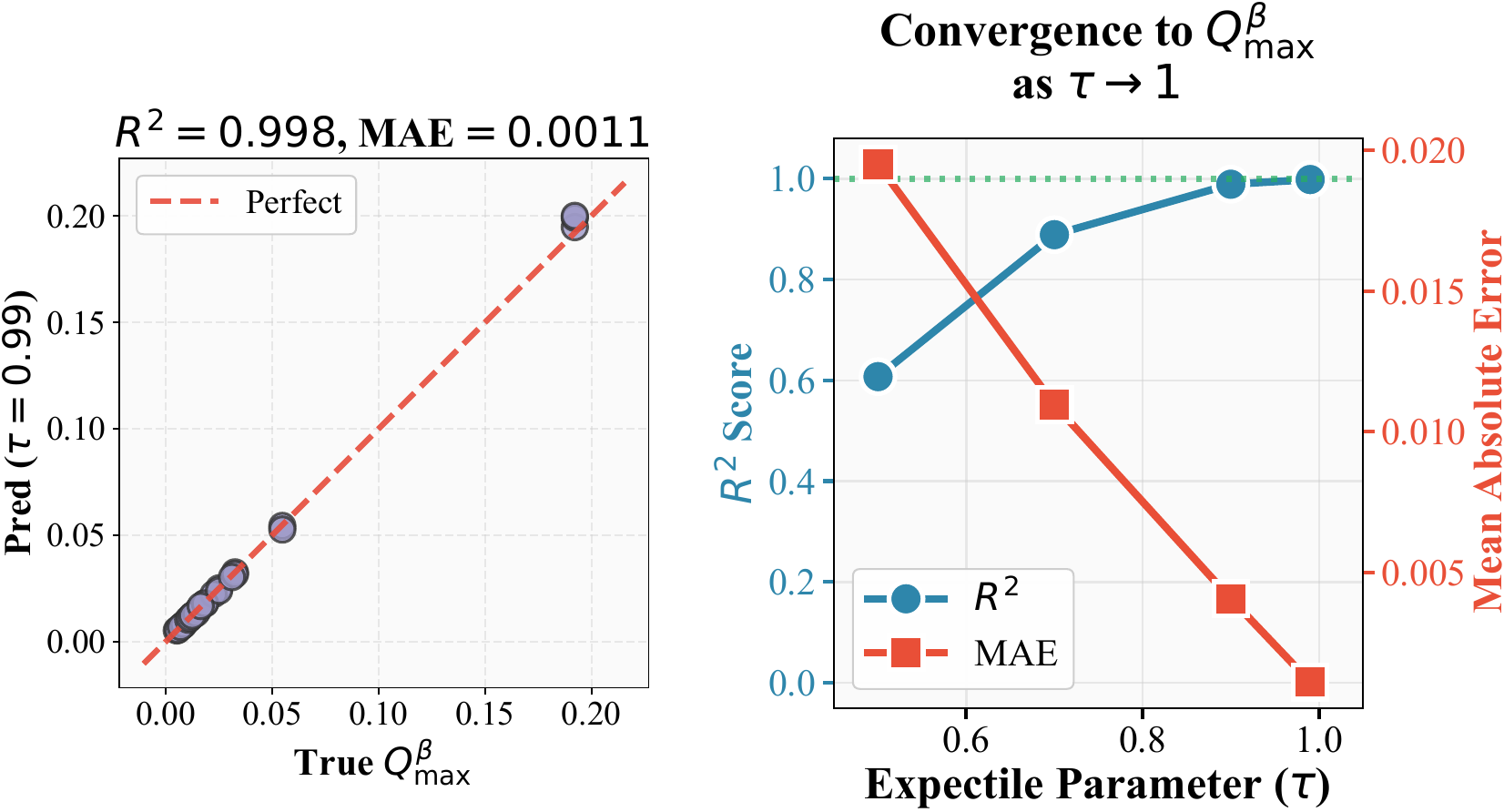}}
  \end{minipage}
  \caption{Curves of $R^2$ and MAE metrics as a function of the expectile parameter $\tau$. $R^2$ increases monotonically from $0.781$ ($\tau=0.5$) to $0.995$ ($\tau=0.99$), while MAE decreases correspondingly. This empirical trend confirms that expectile regression with high $\tau$ effectively approximates the in-distribution optimal Q-value $Q^\star$, consistent with the theoretical bound in \cref{thm:expectile_convergence}.}
  \label{fig:gridworld_curves}
\end{figure*}
\clearpage
\subsection{Comparison with Recent Offline RL Methods Adapted to GCRL}
\label{app:additional_baselines}
To strengthen baseline coverage, we adapt four recent offline RL methods to offline GCRL by attaching HER goal relabeling and following the OGBench evaluation protocol. These are Transitive RL~\citep{park2026transitive}, SHARSA~\citep{park2025horizon}, DEAS~\citep{kim2026deas}, and QCFQL~\citep{li2025reinforcement}. Results are averaged over 8 seeds at 1M training steps.

\begin{table*}[h]
\vspace{-3pt}
\caption{
\footnotesize
\textbf{Recent offline RL methods adapted to offline GCRL on OGBench \texttt{play} datasets.} Mean success rate ($\%$) over 8 seeds. \textbf{\textcolor{orange}{Orange}} = best, \underline{underline} = second best.
}
\label{tab:recent_baselines}
\centering
\renewcommand{\arraystretch}{0.9}
\resizebox{0.85\textwidth}{!}
{
\begin{tabular}{lccccc}
\toprule
\textbf{Environment} & \textbf{GC-TrL} & \textbf{GC-SHARSA} & \textbf{GC-DEAS} & \textbf{GC-QCFQL} & \textbf{QHyer} \\
\midrule
\texttt{cube-single-play}    & \underline{$46.2$} {\tiny $\pm 0.8$} & $30.7$ {\tiny $\pm 3.8$} & $44.1$ {\tiny $\pm 2.6$} & $38.4$ {\tiny $\pm 1.5$} & $\mathbf{\textcolor{orange}{84}}$ {\tiny $\pm 4$} \\
\texttt{cube-double-play}    & $1.6$ {\tiny $\pm 0.8$} & \underline{$49.3$} {\tiny $\pm 4.7$} & $10.7$ {\tiny $\pm 3.0$} & $5.1$ {\tiny $\pm 0.0$} & $\mathbf{\textcolor{orange}{56}}$ {\tiny $\pm 2$} \\
\texttt{cube-triple-play}    & $0.8$ {\tiny $\pm 0.6$} & $\mathbf{\textcolor{orange}{36.8}}$ {\tiny $\pm 4.0$} & \underline{$18.1$} {\tiny $\pm 2.3$} & $1.4$ {\tiny $\pm 0.7$} & $10$ {\tiny $\pm 5$} \\
\texttt{cube-quadruple-play} & $0.0$ {\tiny $\pm 0.0$} & $\mathbf{\textcolor{orange}{2.3}}$ {\tiny $\pm 0.3$} & $0.1$ {\tiny $\pm 0.1$} & $0.0$ {\tiny $\pm 0.0$} & \underline{$2$} {\tiny $\pm 1$} \\
\texttt{scene-play}          & $27.7$ {\tiny $\pm 6.2$} & $44.0$ {\tiny $\pm 9.1$} & \underline{$48.7$} {\tiny $\pm 3.4$} & $36.9$ {\tiny $\pm 4.5$} & $\mathbf{\textcolor{orange}{53}}$ {\tiny $\pm 2$} \\
\texttt{puzzle-3x3-play}     & $7.6$ {\tiny $\pm 1.6$} & \underline{$35.6$} {\tiny $\pm 2.5$} & $32.7$ {\tiny $\pm 11.0$} & $14.9$ {\tiny $\pm 9.6$} & $\mathbf{\textcolor{orange}{92}}$ {\tiny $\pm 2$} \\
\texttt{puzzle-4x4-play}     & $2.3$ {\tiny $\pm 0.7$} & $\mathbf{\textcolor{orange}{32.4}}$ {\tiny $\pm 9.7$} & $26.7$ {\tiny $\pm 6.7$} & $0.3$ {\tiny $\pm 0.7$} & \underline{$28$} {\tiny $\pm 5$} \\
\texttt{puzzle-4x5-play}     & $2.0$ {\tiny $\pm 1.5$} & $15.1$ {\tiny $\pm 3.1$} & \underline{$16.1$} {\tiny $\pm 1.3$} & $9.7$ {\tiny $\pm 3.1$} & $\mathbf{\textcolor{orange}{31}}$ {\tiny $\pm 1$} \\
\texttt{puzzle-4x6-play}     & $1.6$ {\tiny $\pm 0.5$} & $12.1$ {\tiny $\pm 2.4$} & \underline{$17.9$} {\tiny $\pm 4.5$} & $8.3$ {\tiny $\pm 5.5$} & $\mathbf{\textcolor{orange}{18}}$ {\tiny $\pm 2$} \\
\midrule
\textit{Average} & \textit{10.0} & \underline{\textit{28.7}} & \textit{23.9} & \textit{12.8} & $\mathbf{\textcolor{orange}{\textit{41.6}}}$ \\
\bottomrule
\end{tabular}
}
\vspace{-3pt}
\end{table*}
\textbf{Interpretation.} Three essential reasons explain the gap. First, none of these methods were originally validated under offline GCRL with sparse binary rewards at standard OGBench scale. TRL and SHARSA rely on oracle goals or the large-data regime, DEAS targets semi-sparse single-task settings, and QCFQL's strongest numbers come from offline-to-online training. When forced into the pure offline, sparse-binary, multi-goal regime, their value targets and exploration mechanisms become mis-specified. Second, there are structural mismatches. TRL's triangle inequality on temporal distance holds for continuous navigation but breaks under manipulation's discrete contact-mode transitions, which is why TRL drops from $46.2$ on \texttt{cube-single} to $1.6$ on \texttt{cube-double}. Third, SHARSA must predict subgoals in the full multi-object pose space, which is far harder than 2D navigation waypoints, and DEAS and QCFQL execute fixed-length open-loop action chunks, so early errors compound and the fixed chunk length cannot align with variable-duration manipulation primitives. SHARSA and DEAS nonetheless remain nontrivially competitive on the hardest long-horizon tasks, suggesting that action chunking and temporal abstraction are complementary to our contributions.

\subsection{Comparison with Graph-based Stitching (GAS)}
\label{app:gas}
We also compare against GAS~\citep{baek2025graphassisted}, a graph-based offline GCRL stitching method.

\begin{table*}[h]
\vspace{-3pt}
\caption{
\footnotesize
\textbf{Comparison with GAS on navigation and visual manipulation.} Mean success rate ($\%$) over 5 seeds. \textbf{\textcolor{orange}{Orange}} = best.
}
\label{tab:gas}
\centering
\resizebox{0.48\textwidth}{!}
{
\begin{tabular}{lcc}
\toprule
\textbf{Environment} & \textbf{GAS} & \textbf{QHyer} \\
\midrule
\texttt{antmaze-giant-stitch} (navigation) & $\mathbf{\textcolor{orange}{88}}$ {\tiny $\pm 4$} & $70$ {\tiny $\pm 2$} \\
\texttt{visual-scene-play} (manipulation)  & $54$ {\tiny $\pm 6$} & $\mathbf{\textcolor{orange}{96}}$ {\tiny $\pm 1$} \\
\bottomrule
\end{tabular}
}
\vspace{-3pt}
\end{table*}
\textbf{Interpretation.} GAS and \textbf{QHyer} occupy complementary regimes, and the reason is structural rather than a matter of tuning. On \texttt{antmaze-giant-stitch}, GAS replaces high-level policy learning with Dijkstra shortest-path search over a precomputed temporal-distance graph, which directly exploits the metric structure of continuous navigation. \textbf{QHyer}'s flat sequence model cannot match this advantage on pure navigation. On \texttt{visual-scene-play}, the tables turn. GAS's graph construction is bottlenecked by its ability to learn high-dimensional representations, whereas \textbf{QHyer}'s end-to-end sequence modeling with content-adaptive memory benefits directly from pixel inputs, producing roughly a $42$-point improvement and nearly doubling the previous OGBench best. We therefore view the two methods as complementary tools rather than directly competing baselines.
\end{document}